\documentclass[a4paper]{article}
\usepackage{amssymb}
\usepackage[]{graphicx}
\usepackage{subcaption}
\usepackage{amsmath}
\usepackage{xspace}
\usepackage{bm}
\usepackage{amsthm}
\usepackage{comment}
\usepackage{afterpage}
\usepackage{indentfirst}
\usepackage{enumerate}
\usepackage{algorithm,algorithmic}
\usepackage[numbers,sort&compress]{natbib}
\usepackage{color}
\usepackage{mathtools}
\newtheorem{theorem}{Theorem}
\newtheorem{lemma}[theorem]{Lemma}
\newtheorem{proposition}[theorem]{Proposition}

\theoremstyle{definition}
\newtheorem{definition}{Definition}

\theoremstyle{definition}

\newtheorem{remark}{Remark}

\usepackage{hyperref}

\newcommand{\nbb}{\mathbb{N}}
\newcommand{\bz}{\mathbf{z}}

\newcommand{\bw}{\mathbf{w}}
\newcommand{\bW}{\mathbf{W}}
\newcommand{\bV}{\mathbf{V}}

\newcommand{\var}{\mathrm{Var}}
\newcommand{\fcal}{\mathcal{F}}
\newcommand{\kcal}{\mathcal{K}}
\newcommand{\ibb}{\mathbb{I}}

\newcommand{\xcal}{\mathcal{X}}
\newcommand{\wcal}{\mathcal{W}}
\newcommand{\vcal}{\mathcal{V}}
\newcommand{\rcal}{\mathcal{R}}

\newcommand{\hcal}{\mathcal{H}}
\newcommand{\bx}{\mathbf{x}}
\newcommand{\bX}{\mathbf{X}}
\newcommand{\bu}{\mathbf{u}}

\newcommand{\zcal}{\mathcal{Z}}

\newcommand{\ycal}{\mathcal{Y}}

\newcommand{\gcal}{\mathcal{G}}

\newcommand{\ebb}{\mathbb{E}}
\newcommand{\pbb}{\mathbb{P}}

\newcommand{\bv}{\mathbf{v}}

\newcommand{\rbb}{\mathbb{R}}

\numberwithin{equation}{section}

\title{Towards Initialization-dependent and Non-vacuous  Generalization Bounds for Overparameterized Shallow Neural Networks}

\author{Yunwen Lei, Yufeng Xie$^{1}$\\[1.2pt]
  $^1$Department of Mathematics, The University of Hong Kong\\[1.2pt]
  \texttt{leiyw@hku.hk, yufeng.xie@connect.hku.hk}
}

\begin{document}

\maketitle
\begin{abstract}
Overparameterized neural networks often show a benign overfitting property in the sense of achieving excellent generalization behavior despite the number of parameters exceeding the number of training examples. A promising direction to explain benign overfitting is to relate generalization to the norm of distance from initialization, motivated by the empirical observations that this distance is often significantly smaller than the norm itself. However, the existing initialization-dependent complexity analyses measure the distance from initialization by the Frobenius norm, and often imply vacuous bounds in practice for overparamterized models. In this paper, we develop initialization-dependent complexity bounds for shallow neural networks with general Lipschitz activation functions. Our bounds depend on the path-norm of the distance from initialization, which are derived by introducing a new peeling technique to handle the challenge along with the initialization-dependent constraint. We also develop a  lower bound tight up to a constant factor. Finally, we conduct empirical comparisons and show that our generalization analysis implies non-vacuous bounds for overparameterized networks.
\end{abstract}

\medskip

\textbf{Keyword}: Rademacher complexity, shallow neural networks, learning theory, generalization analysis, overparameterization.

\medskip

\section{Introduction}

{D}{eep} neural networks (DNNs) have found tremendous successful applications in various fields of science and engineering, leading to rapid progress of artificial intelligence~\citep{lecun2015deep}. A notable property of DNNs is that the models are often overparameterized~\citep{zhang2021understanding,allen2019learning}, meaning that the number of parameters far exceeds the number of training examples. By the conventional wisdom of statistical learning theory~\citep{vapnik2013nature}, these overparameterized models are doomed to overfit the training examples, yielding vacuous generalization bounds. However, various studies show that overparameterized DNNs can be trained to overfit the training examples while still making accurate predictions on testing examples, showing an interesting benign overfitting phenomenon~\citep{bartlett2020benign,neyshabur2019towards}.

The benign overfitting phenomenon motivates a lot of generalization studies of DNNs~\citep{bartlett2002rademacher,bartlett2017spectrally,neyshabur2015norm,golowich2018size}. A representative result in this direction is to build norm-based generalization bounds, meaning that the generalization bounds depend on the norm of weight matrices instead of the number of parameters. For example, the work~\citep{golowich2018size,neyshabur2015norm} used covering numbers and Rademacher complexities to study generalization in terms of norm of weights. Their analyses critically depend on the homogeneity of ReLU activation function, and therefore the results apply only to ReLU networks.

Furthermore, it is empirically observed that the neural networks trained by gradient methods often stay close to the initialization point, and the distance from the initialization can be significantly smaller than the norm of the weights~\citep{dziugaite2017computing,nagarajan2019generalization,neyshabur2019towards}.
Then, generalization bounds in terms of distance from initialization are significantly sharper than those in terms of weight norms. This observation has motivated the recent growing interests in developing initialization-dependent generalization bounds~\citep{bartlett2017spectrally,magen2023initialization,neyshabur2019towards,daniely2024sample,neyshabur2019towards}.

The work~\citep{bartlett2017spectrally} used covering numbers to develop generalization bounds depending on $\|\bW-\bW^{(0)}\|_{1,2}$, where $\bW$ and $\bW^{(0)}$ are the weight matrices for the considered DNNs and the initialization point, respectively, and $\|\cdot\|_{1,2}$ denotes the $(1,2)$-norm of matrices. However, due to the $(1,2)$-norm, the resulting bounds still enjoy a heavy dependency on the width. Several works try to improve this dependency by introducing other techniques. For example, the work~\citep{neyshabur2019towards} introduced a decomposition approach to study ReLU shallow neural networks (SNNs), the work~\citep{magen2023initialization} related the complexity of SNNs with smooth activation functions to that of Lipschitz function classes, and the work~\citep{daniely2024sample} studied the sample complexity of SNNs from the perspective of approximate description length. While these analyses imply bounds depending on the distance from initialization, they all use Frobenius norm to measure this distance. For neural networks, a more effective norm to measure the complexity of the function spaces is the path-norm~\citep{neyshabur2015path,parhi2021banach}. However, to the best of our knowledge, there are no generalization analyses of SNNs in terms of the path-norm which take into account the effect of the initialization.
Furthermore, the analysis in \citep{magen2023initialization} imposes a smoothness assumption on the activation function, while the analysis in \citep{neyshabur2019towards} considers only the ReLU activation. Moreover, the bounds in \citep{magen2023initialization,daniely2024sample} are data-independent since they involve the largest norm of inputs. Due to these issues, the existing initialization-dependent analyses often imply vacuous generalization bounds for overparameterized models in practice. Therefore, there is still a gap between theoretical analysis and empirical observations.

The above discussions motivate us to study the following natural questions.

\medskip

{\itshape
  Can we develop initialization-dependent generalization bounds that remain valid for overparameterized networks? Can our generalization bounds be expressed in terms of the path-norm rather than the Frobenius norm? Can the resulting analysis apply to general Lipschitz activation functions and imply data-dependent bounds? Can these bounds be non-vacuous for overparamterized models in empirical analyses?
  }

\medskip

In this paper, we aim to answer the above questions affirmatively  by presenting sharper initialization-dependent generalization analysis of SNNs in terms of path-norm. We summarize our contributions as follows.
\begin{enumerate}
  \item We present the first initialization-dependent Rademacher complexity bounds for SNNs with general Lipschitz activation functions. Our bounds are expressed by the path-norm and are data-dependent with a logarithmic dependency on the width, while the existing initialization-dependent bounds are expressed by the Frobenius norm~\citep{neyshabur2019towards,magen2023initialization,neyshabur2018pac,daniely2024sample}, data-independent~\citep{magen2023initialization,daniely2024sample}, enjoy a square-root dependency on the width~\citep{bartlett2017spectrally,neyshabur2019towards,neyshabur2018pac} or require smoothness assumptions on the activation function~\citep{magen2023initialization}. Based on this, we present generalization bounds which are significantly sharper than existing results, especially if the width is large.
  \item We also develop lower bounds for Rademacher complexity of SNNs, which match our upper bounds up to a constant factor. This shows the tightness of our complexity analysis. Unlike the existing lower bounds focusing on $\bW^{(0)}=0$~\citep{neyshabur2019towards,bartlett2017spectrally}, our lower bounds are also established for initialization-dependent SNNs.
  \item We introduce new techniques in the Rademacher complexity analysis to handle the initialization-dependent constraint. Our key idea is to introduce several auxiliary function spaces by fully exploiting the initialization-dependent constraint. We then relate the Rademacher complexity of SNNs to a finite union of these function spaces, which can be estimated by existing structural results on Rademacher complexities~\citep{maurer2014inequality}.
  \item We conduct an empirical analysis to illustrate the behavior of our generalization bounds\footnote{{Code for empirical studies is available at https://github.com/xxyufeng/Generalization-SNNs.}}. The experimental comparison shows that our bound achieves a consistent and significant improvement over existing bounds. In particular, empirical analysis shows that our generalization bounds are non-vacuous in the sense of being smaller than $1$ even the networks are highly overparameterized.
\end{enumerate}

We organize the paper as follows. We review the related work in Section~\ref{sec:work}, and introduce the problem setup in Section~\ref{sec:problem}. We present our main results on Rademacher complexities and generalization bounds in Section~\ref{sec:result}. We collect the proofs in Section~\ref{sec:proof-gen}, and conclude the paper in Section~\ref{sec:conclusion}.

\section{Related Work\label{sec:work}}
In this section, we present the related work on generalization analyses of neural networks. We divide our discussions into three categories: complexity approach, stability approach and other approach.

\medskip

\noindent \textbf{Complexity approach}. A popular approach to study the generalization of neural networks is to study the complexity of the corresponding hypothesis spaces. Classical studies analyzed the complexity by counting the number of parameters~\citep{bartlett2019nearly}. More advanced techniques considered the norm-based complexity analyses, using the techniques of covering numbers~\citep{bartlett2017spectrally} and Rademacher complexities~\citep{bartlett2002rademacher,golowich2018size,neyshabur2015norm,yang2024nonparametric}. Most of these work considered initialization-independent hypothesis spaces, and developed generalization bounds depending on the product of the Frobenius norm of the weight matrices~\citep{golowich2018size,neyshabur2015norm,ledent2021norm}. Motivated by the empirical observation that $\|\bW^{(t)}-\bW^{(0)}\|_F\ll \|\bW^{(t)}\|_F$ for gradient descent iterates $\{\bW^{(t)}\}_t$, there has been growing interest to develop complexity bounds in terms of the distance from initialization. The work~\citep{bartlett2017spectrally} used covering numbers to derive complexity bounds that depend on the $(1,2)$-norm of the distance from initialization. The work~\citep{neyshabur2019towards} considered the ReLU SNNs and developed generalization bounds depending on the distance from initialization and also the spectral norm of the initialization matrix. Similar complexity bounds were also derived for SNNs with Lipschitz and smooth activations~\citep{magen2023initialization}. The recent work~\citep{lei2026optimization,li2025optimal} developed Rademacher complexity bounds of ReLU networks in a lazy training setting based on the neural tangent kernels.  These complexity bounds were widely used to study the generalization behavior of optimization methods applied to neural networks~\citep{du2018gradient,arora2019fine,ji2019polylogarithmic,chen2021much,allen2019learning,ding2025semi,weinan2022barron,chen2020generalized}.

\medskip

\noindent\textbf{Stability approach}. Another effective approach to study generalization is to consider the stability of an algorithm with respect to the perturbation of the training dataset~\citep{bousquet2002stability,hardt2016train,lei2020fine}. The seminal work showed that the weak convexity of neural networks scales as $\frac{1}{\sqrt{m}}$~\citep{liu2020linearity}, a result later used to study the generalization of gradient methods to train SNNs under the square loss function~\citep{richards2021stability,wang2025generalization}. The work~\citep{taheri2024generalization} showed that neural networks attain a stronger self-bound weak convexity, which plays an important role in the generalization analysis of neural networks with polylogarithmic width~\citep{taheri2024generalization,taheri2025sharper}. The algorithmic stability of transformers and GANs has also been studied recently~\citep{ji2021understanding,li2023transformers,deora2024optimization}.

\medskip
\noindent\textbf{Other approach}. There are also other approaches to develop generalization bounds for DNNs, including the PAC-Bayesian analysis~\citep{neyshabur2018pac}, compression analysis~\citep{arora2018stronger,suzuki2020compression}, optimal transport analysis~\citep{chuang2021measuring}, approximate description length~\citep{daniely2024sample} and information theoretical analysis~\citep{he2025information}.
\section{Problem Setup\label{sec:problem}}
Let $\pbb$ be a probability measure defined on a sample space $\zcal=\xcal\times\ycal$, where $\xcal\subseteq\rbb^d$ is an input space and $\ycal$ is an output space. We consider a multi-class classification task where $\ycal=\{1,\ldots,c\}$ with $c$ being the number of class labels. Let $S=\{\bz_1,\ldots,\bz_i\}$ with $\bz_i=(\bx_i,y_i)$ be a dataset drawn independently from $\pbb$, based on which we want to build a model $g:\xcal\mapsto\rbb^c$ for prediction. The model $g$ is a vector-valued function, and the $k$-th component $g_k$ can be interpreted as the score function for the $k$-th label, based on which we can do the prediction by  $\bx\gets\arg\max_{k}g_k(\bx)$.
The performance of a model $g$ on an example $(\bx,y)$ is measured by a loss function $\ell_y:\rbb^c\mapsto\rbb_+$, i.e., $\ell_y(g(\bx))$. Then, we can define the empirical and population risk as follows
\[
F_S(h)=\frac{1}{n}\sum_{i=1}^{n}\ell_{y_i}(h(\bx_i))\quad\text{and}\quad F(h)=\ebb_{\bz}[\ell_y(h(\bx))],
\]
which measures the performance of $h$ on training and testing examples, respectively. A term of a central interest in machine learning theory is the generalization gap, which is defined as the difference between the population and empirical risks.

A popular model for prediction is the neural network.
In this paper, we focus on shallow neural networks (SNNs) with a single hidden layer.  Let $\gamma:\rbb\mapsto\rbb$ be an activation function and $m$ be the width. Then, an SNN parameterized by $\bW\in\rbb^{m\times d}$ and $\bV\in\rbb^{c\times m}$ takes the form
\begin{equation}\label{snn}
\Psi_{\bW,\bV}(\bx)=\bV\begin{pmatrix}
                                   \gamma(\bw_1^\top\bx) \\
                                   \vdots \\
                                   \gamma(\bw_m^\top\bx)
                                 \end{pmatrix},
\end{equation}
where ($\bw_j\in\rbb^d$ and $\bv_k\in\rbb^m$)
\begin{equation}\label{WVm}
  \bW=\begin{pmatrix}
        \bw_1^\top \\
        \vdots \\
        \bw_m^\top
      \end{pmatrix},\quad \bV=\begin{pmatrix}
                                \bv_1^\top \\
                                \vdots \\
                                \bv_c^\top
                              \end{pmatrix}.
\end{equation}
Assume that $|\gamma'(a)|\leq G_\gamma$ for all $a\in\rbb$, which is satisfied for popular activation functions such as the ReLU function, the sigmoid function and the  hyperbolic tangent activation function.
Let $R_W,R_V\geq0$.
We consider hypothesis spaces for SNNs of the following form
\begin{equation}\label{gcal}
\gcal:=\Big\{\bx\mapsto\Psi_{\bW,\bV}(\bx):\bW\in\wcal,\bV\in\vcal\Big\},
\end{equation}
where
  \begin{equation}\label{WV}
    \wcal=\{\bW\in\rbb^{m\times d}:\|\bW-\bW^{(0)}\|_F\leq R_W\}\quad\text{and}\quad \vcal=\{\bV\in\rbb^{c\times m}:\|\bV\|_F\leq R_V\},
  \end{equation}
and $\Psi_{\bW,\bV}$ is defined in Eq.~\eqref{snn}.
\begin{remark}[Initialization-dependency]
Note we handle the hidden-layer weights $\bW$ and top-layer weights $\bV$ differently. For the hidden-layer, we measure the complexity by the Frobenius norm of the distance from the initialization. The underlying motivation is that the distance from the initialization (for hidden layers) is often significantly smaller than the Frobenius norm for neural networks trained by optimization algorithms such gradient methods~\citep{dziugaite2017computing,nagarajan2019generalization,neyshabur2019towards}. Then, a complexity bound in terms of the distance from initialization is significantly better than that in terms of the Frobenius norm. For the top-layer, we measure the complexity by the Frobenius norm. The motivation is that the Frobenius norm and distance from initialization are quite close for the top-layer of neural networks trained by gradient methods~\citep{neyshabur2019towards}, which suggests a limited role of initialization for this layer. Indeed, the work~\citep{neyshabur2019towards,magen2023initialization,daniely2024sample} also considered similar hypothesis spaces with initialization-dependent constraints on the hidden layer, and initialization-independent constraints on the top layer.
\end{remark}

The generalization behavior of a model often depends on the complexity of the corresponding hypothesis space. A popular complexity measure is the Rademacher complexity~\citep{bartlett2002rademacher}. Since our model outputs are vector-valued, we employ vector-valued Rademacher complexities in our generalization analysis. If $c=1$, then the vector-valued Rademacher complexity reduces to the standard Rademacher complexity~\citep{bartlett2002rademacher}.
\begin{definition}[Vector-valued Rademacher complexity~\citep{maurer2016vector}]
Let $\gcal$ be a class of vector-valued functions and $c\in\nbb$ be the output dimension. Let $S=\{\bz_1,\ldots,\bz_n\}$. Then, we define the vector-valued (empirical) Rademacher complexity by
\[
\mathfrak{R}_{S}(\gcal)=\frac{1}{n}\ebb_{\bm{\sigma}}\sup_{g\in\gcal}\sum_{i=1}^{n}\sum_{k=1}^{c}\sigma_{ik}g_k(\bz_i),
\]
where $g_k$ is the $k$-th component of $g$ and $\sigma_{ik}$ are independent Rademacher variables, i.e., they take values in $\{\pm1\}$ with the same probability.
\end{definition}

We collect some notations here.
For a vector $\bw$, we denote by $\|\bw\|_2$ the Euclidean norm. For a matrix $\bW$, we denote by $\|\bW\|_F$ the Frobenius norm, by $\|\bW\|_\sigma$ the spectral norm and by $\|\bW\|_{p,q}$ the $(p,q)$ matrix norm, defined by $\|\bW\|_{p,q}=\big\|\big(\|W_{:,1}\|_p,\ldots,\|W_{:,d}\|_p\big)\big\|_q$.
For any $n\in\nbb$, denote $[n]=\{1,\ldots,n\}$. We denote $A\lesssim B$ if there exists a universal constant $C\geq0$ such that $A\leq CB$. We also denote $A\gtrsim B$ if there exists a universal constant $C\geq0$ such that $A\geq CB$. We denote $A\asymp B$ if $A\lesssim B$ and $A\gtrsim B$.
We say a function $\ell:\rbb\mapsto\rbb$ is $L$-smooth if $|\ell'(t)-\ell(t')|\leq L|t-t'|$ for all $t,t'\in\rbb$.

\section{Main Results\label{sec:result}}

\subsection{Complexity and Generalization Bounds}

In this subsection, we present our main results on Rademacher complexity and generalization bounds. Theorem~\ref{thm:rad-snn-g} gives data-dependent Rademacher complexity bounds in the sense of involving $\big(\sum_{i=1}^n\|\bx_i\|_2^2\big)^{\frac{1}{2}}$ and $\|\sum^n_{i=1}\bx_i\bx_i^\top\|_{\sigma}^{\frac{1}{2}}$. Eq.~\eqref{rad-snn-g-a} gives complexity bounds in terms of the path-norm, while Eq.~\eqref{rad-snn-g-b} gives bounds in terms of the product of the Frobenius norm. As we will show, Eq.~\eqref{rad-snn-g-a}  is essential for us to derive generalization bounds in terms of the path-norm of the distance from the initialization.
\begin{definition}[Path-norm\label{def:path}]
For any $\bW\in\rbb^{m\times d},\bV\in\rbb^{c\times m}$, we define the path-norm with a reference matrix $\bW^{(0)}$ as
\[
\kappa(\bW,\bV)= \sum_{j=1}^{m}\sum_{k=1}^c|v_{kj}|\|\bw_j-\bw_j^{(0)}\|_2.
\]
\end{definition}
\begin{remark}[Standard path-norm]
If $c=1$ and $\bW^{(0)}=0$, then we know $\kappa(\bW,\bV)=\sum_{j=1}^{m}|v_j|\|\bw_j\|_2$ (here $\bV=\bv=(v_1,\ldots,v_m)$ since $c=1$), which recovers the standard path-norm~\citep{neyshabur2015path,neyshabur2015norm,liu2024learning,yang2024nonparametric}. The standard path-norm appears as a natural complexity measure due to the positive-homogeneity of the ReLU activation function. Specifically, let $\gamma(t)=\max\{t,0\}$ and $\Psi_{\bW,\bV}(\bx)=\sum_{j=1}^{m}v_j\gamma(\bw_j^\top\bx)$. Then, we can use the property $\gamma(ta)=t\gamma(a)$ for any $t\geq0$ to derive
\begin{equation}\label{path-exp}
\Psi_{\bW,\bV}(\bx)=\sum_{j=1}^{m}v_j\|\bw_j\|_2\gamma(\bw_j^\top\bx/\|\bw_j\|_2)=\sum_{j=1}^{m}v_j\|\bw_j\|_2\gamma(\tilde{\bw}_j^\top\bx),
\end{equation}
where $\tilde{\bw}_j=\bw_j/\|\bw_j\|_2$ is of unit norm.
If we ignore $\gamma(\tilde{\bw}_j^\top\bx)$, the right-hand side is closely-related to the standard path-norm $\kappa_s(\bW,\bV):=\sum_{j=1}^{m}|v_j|\|\bw_j\|_2$.
In this way, the Rademacher complexity of SNNs can be related to the product of the path-norm and the Rademacher complexity of the space $\{\bx\mapsto\gamma(\tilde{\bw}^\top\bx):\|\tilde{\bw}\|_2\leq 1\}$, which reads as
\begin{equation}\label{rad-snn-spn}
    \rcal_S(\gcal) \leq \frac{2G_\gamma}{n}\sup_{\bW,\bV}\sum_{j=1}^{m}|v_j|\|\bw_j\|_2 (\sum^n_{i=1}\|\bx_i\|_2^2)^\frac{1}{2},
\end{equation}
where $\gcal$ is defined in Eq.~\eqref{gcal} and $G_\gamma$ denotes the Lipschitz constant of $\gamma(\cdot)$.
\begin{itemize}
  \item However, Eq.~\eqref{path-exp} only holds for the ReLU activation function, and it is challenging to use path-norm to derive meaningful complexity bounds for general Lipschitz activation functions even for initialization-independent hypothesis spaces. For example, the work~\citep{li2020complexityb} considered the hypothesis space in Eq.~\eqref{gcal} with $\bW^{(0)}=0$ and a general activation function. They considered a different path-norm $\kappa_0(\bW,\bv)=\sum_{j=1}^{m}|v_j|\big(1+\|\bw_j\|_1\big)$, and showed that the Rademacher complexity is bounded by $\zeta(\gamma)\sup_{\bW,\bv}\kappa_0(\bW,\bv)/\sqrt{n}$, where $\zeta(\gamma)$ is a measure on the regularity of the activation function~\citep{li2020complexityb}. This result cannot be applied to overparameterized models since $\sum_{j=1}^{m}|v_j|$ can be very large. Indeed, one often set $|v_j|=1/\sqrt{m}$ in the existing analyses~\citep{ji2019polylogarithmic,arora2019fine,taheri2024generalization}, and then $\sum_{j=1}^{m}|v_j|=\sqrt{m}$, which enjoys a square-root dependency on $m$.
  \item Furthermore, Eq.~\eqref{path-exp} only motivates the standard path-norm $\sum_{j=1}^{m}|v_j|\|\bw_j\|_2$ (i.e., $\bW^{(0)}=0$). It is not clear how to use the path-norm to measure the complexity of initialization-dependent spaces. Theorem~\ref{thm:rad-snn-g} addresses these challenging questions by introducing complexity bounds in terms of the path-norm of the distance from initialization. As shown in Figure~\ref{Fig_path_norm} in Section~\ref{sec:empirical}, the path-norm in Definition~\ref{def:path} can be substantially smaller than the standard path-norm (i.e., with $\bW^{(0)}=0$) in practice. This shows the generalization bounds expressed by distance from initialization can be much more effective than bounds expressed by the norm itself.
\end{itemize}
\end{remark}


\begin{theorem}[Complexity Bounds\label{thm:rad-snn-g}]
  Let $\gcal$ be defined in Eq.~\eqref{gcal} and $\gamma(\cdot)$ be $G_\gamma$-Lipschitz. Then, we have
  \begin{equation}\label{rad-snn-g-a}
  \mathfrak{R}_{S}(\gcal)\leq \frac{R_V}{n}\Big(c\sum_{j=1}^{m}\sum_{i=1}^{n}\gamma^2(\bx_i^\top\bw_j^{(0)})\Big)^{\frac{1}{2}}+
  G_\gamma\sup_{\bW,\bV}\kappa(\bW,\bV)\Big(\frac{2+\sqrt{5}}{n}\big(\sum_{i=1}^{n}\|\bx_i\|_2^2\big)^{\frac{1}{2}}+\frac{c_m\big\|\sum_{i=1}^{n}\bx_i\bx_i^\top\big\|_\sigma^{\frac{1}{2}}}{n}\Big)
\end{equation}
and
  \begin{equation}\label{rad-snn-g-b}
\mathfrak{R}_{S}(\gcal)\leq \frac{R_V}{n}\Big(c\sum_{j=1}^{m}\sum_{i=1}^{n}\gamma^2(\bx_i^\top\bw_j^{(0)})\Big)^{\frac{1}{2}}+
G_\gamma c^{\frac{1}{2}}R_WR_V\Big(\frac{2+\sqrt{5}}{n}\big(\sum_{i=1}^{n}\|\bx_i\|_2^2\big)^{\frac{1}{2}}+\frac{c_m\big\|\sum_{i=1}^{n}\bx_i\bx_i^\top\big\|_\sigma^{\frac{1}{2}}}{n}\Big),
\end{equation}
where $c_m=2\sqrt{2}\big(1+\frac{1}{2\log(2mc)}\big)\log^{\frac{1}{2}}\big(2mc\big\lceil\log_2\big(2R_WR_V(cm)^{\frac{1}{2}}/\sup_{\bW,\bV}\kappa(\bW,\bV)\big)\big\rceil\big)$.
\end{theorem}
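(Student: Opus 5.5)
We split $\gamma(\bw_j^\top\bx)=\gamma(\bx^\top\bw_j^{(0)})+\big[\gamma(\bx^\top\bw_j)-\gamma(\bx^\top\bw_j^{(0)})\big]$, so that $\Psi_{\bW,\bV}(\bx)=\bV\gamma(\bW^{(0)}\bx)+\bV\big(\gamma(\bW\bx)-\gamma(\bW^{(0)}\bx)\big)$, and bound the supremum of each part separately.

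\textbf{First term.} The initialization part is linear in $\bV$. Cauchy--Schwarz over the $c\times m$ entries gives
\[
\sup_{\|\bV\|_F\le R_V}\sum_{i,k}\sigma_{ik}\bv_k^\top\gamma(\bW^{(0)}\bx_i)\le R_V\Big(\sum_{k,j}\big(\sum_i\sigma_{ik}\gamma(\bx_i^\top\bw_j^{(0)})\big)^2\Big)^{1/2}.
\]
Jensen's inequality and the orthogonality of Rademacher variables then produce exactly $\frac{R_V}{n}\big(c\sum_{j,i}\gamma^2(\bx_i^\top\bw^{(0)}_j)\big)^{1/2}$.

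\textbf{Second term: reduction to a finite union.} Write $\bu_j=\bw_j-\bw_j^{(0)}$ and $a_{kj}=v_{kj}\|\bu_j\|_2$, so that $\sum_{k,j}|a_{kj}|=\kappa(\bW,\bV)$. The remainder becomes
\[
\sum_{k,j}a_{kj}\,\phi_j(\bx;\tilde\bu_j),\qquad \phi_j(\bx;\tilde\bu)=\frac{\gamma(\bx^\top(\bw^{(0)}_j+r\tilde\bu))-\gamma(\bx^\top\bw^{(0)}_j)}{r},
\]
with $r=\|\bu_j\|_2$. This would mimic Eq.~\eqref{path-exp}, except that $\phi_j$ still depends on the radius $r$ because $\gamma$ is not homogeneous. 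That dependence is the main obstacle.

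\textbf{Peeling over the radius.} To handle it, I would peel the scale. Fix $\kappa^*=\sup\kappa$. Since $|a_{kj}|\le R_W R_V$, and by an $\ell_1/\ell_2$ comparison each admissible pair has $\kappa\le R_WR_V\sqrt{cm}$, the relevant scales for $|a_{kj}|$ (equivalently for $r$) run from $\kappa^*/(R_WR_V\sqrt{cm})$ up to the maximum. We split this range dyadically into $N=\lceil\log_2(2R_WR_V(cm)^{1/2}/\kappa^*)\rceil$ shells, and lump the tiny scales into a residual handled crudely; I expect that residual to produce the $(2+\sqrt5)$ constant. Because the supremum of a linear functional in $a$ over an $\ell_1$-ball of radius $\kappa^*$ is attained at a vertex, we get
\[
\ebb\sup\le\kappa^*\,\ebb\max_{k,j,\ell,\pm}\sup_{\tilde\bu,\,r\in\text{shell }\ell}\pm\sum_i\sigma_{ik}\phi_j(\bx_i;\tilde\bu,r).
\]
This is a maximum over $2mcN$ function classes.

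\textbf{Bounding each class and the union.} For each fixed $(j,k,\ell)$, each function is Lipschitz in the linear form $\bx^\top\bu$ with constant $G_\gamma$ after scaling, and it vanishes at $\bu=0$. The contraction inequality of \citep{maurer2014inequality}, in its vector/structural form, then bounds its expected supremum by $\lesssim G_\gamma\big(\sum_i\|\bx_i\|_2^2\big)^{1/2}$; the factor of $2$ comes from the shell width. For the maximum over the union we use the structural result that the expected maximum over $M$ classes is at most the largest individual expectation plus a sub-Gaussian deviation term. The supremum of $\sum_i\sigma_i\bx_i^\top\tilde\bu$ concentrates with variance proxy $\|\sum_i\bx_i\bx_i^\top\|_\sigma$ by Lipschitz concentration in $\bm\sigma$. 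This yields the additive term $c_m\|\sum_i\bx_i\bx_i^\top\|^{1/2}_\sigma$ with $c_m\approx2\sqrt2\sqrt{\log(2mcN)}$, which matches the stated $c_m$. Dividing by $n$ gives Eq.~\eqref{rad-snn-g-a}.

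\textbf{Eq.~\eqref{rad-snn-g-b}.} This follows from Eq.~\eqref{rad-snn-g-a}, since by Cauchy--Schwarz
\[
\kappa(\bW,\bV)=\sum_{j,k}|v_{kj}|\,\|\bu_j\|_2\le\sqrt c\,\|\bV\|_F\,\|\bW-\bW^{(0)}\|_F\le\sqrt c\,R_WR_V.
\]
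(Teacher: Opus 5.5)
Your overall route matches the paper's: the initialization term by Cauchy--Schwarz and Jensen, H\"older against the path-norm, dyadic peeling of the radius, and the union inequality of \citep{maurer2014inequality} for the $\log(2mcK)$ deviation term. However, the step you defer, the small-radius residual, is a genuine gap. It is exactly the step that produces the $\sqrt5$ and fixes the number of shells in $c_m$.

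You cannot peel all the way down. For $0<\|\bw-\bw_j^{(0)}\|_2\le a$ the normalized class has no bounded ratio of outer to inner radius, so the factor-$2$ shell argument fails. A genuinely crude bound such as $|\sum_i\sigma_{ik}(\gamma(\bx_i^\top\bw_j)-\gamma(\bx_i^\top\bw_j^{(0)}))|\le G_\gamma a\sum_i\|\bx_i\|_2$ loses a factor up to $\sqrt n$.

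The missing idea is to leave these neurons unnormalized and control them through the Frobenius constraint $\|\bV\|_F\le R_V$ rather than the path-norm. Cauchy--Schwarz over the $cm$ entries of $\bV$ and Jensen reduce the residual to $R_V\big(\sum_{j,k}\ebb\sup_{\|\bw-\bw_j^{(0)}\|_2\le a}\big(\sum_i\sigma_{ik}(\gamma(\bx_i^\top\bw)-\gamma(\bx_i^\top\bw_j^{(0)}))\big)^2\big)^{1/2}$. This requires a \emph{second} moment of a Rademacher supremum. The paper bounds it by $5a^2G_\gamma^2\|\bX\|_F^2$ in two pieces:
\begin{itemize}
\item Efron--Stein gives variance at most $a^2G_\gamma^2\|\bX\|_F^2$, since the supremum differs from its conditional mean given $\{\sigma_{i'}\}_{i'\neq i}$ by at most $aG_\gamma\|\bx_i\|_2$.
\item Contraction gives first moment at most $2aG_\gamma\|\bX\|_F$.
\end{itemize}
The convex Ledoux--Talagrand contraction with $F(t)=t^2$ would also work and give $4$ instead of $5$. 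The residual is thus $\sqrt{5cm}\,aR_VG_\gamma\|\bX\|_F$. The choice $a=\sup\kappa/(\sqrt{cm}R_V)$ turns it into $\sqrt5\,G_\gamma\sup\kappa\,\|\bX\|_F$ and makes the shell count $1+\lceil\log_2(R_W/a)\rceil=\lceil\log_2(2R_WR_V\sqrt{cm}/\sup\kappa)\rceil$. So the $2$ comes from the shells and the $\sqrt5$ from the residual, not all of $2+\sqrt5$ from the residual.

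This also corrects your justification of the cutoff. The peeling must be in $r_j=\|\bw_j-\bw_j^{(0)}\|_2$, because that is what $\phi_j$ depends on. Scales of $|a_{kj}|=|v_{kj}|r_j$ are not equivalent, since $|v_{kj}|$ can be small while $r_j$ is large. The facts $|a_{kj}|\le R_WR_V$ and $\kappa\le\sqrt{cm}R_WR_V$ do not yield any lower cutoff on $r_j$. Your $\kappa^*/(R_WR_V\sqrt{cm})$ is dimensionless, whereas a cutoff on $r_j$ must scale like $R_W$. The right cutoff is the one that balances the residual cost $\sqrt{cm}R_Va$ against $\sup\kappa$.

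Two smaller points. First, contraction comes from \citep{bartlett2002rademacher} (Ledoux--Talagrand), not \citep{maurer2014inequality}. It also does not apply directly, because the normalizer depends on $\bw$. The paper uses $t/b\le\max\{t/r_\ell,0\}$ for $b\in(r_\ell,r_{\ell+1}]$, enlarges to the ball of radius $r_{\ell+1}$, notes the supremum is nonnegative (take $\bw=\bw_j^{(0)}$), and only then contracts. Second, the concentration step needs the weak variance of the nonlinear normalized class, $\sup_h\sum_ih^2(\bx_i)\le G_\gamma^2\|\sum_i\bx_i\bx_i^\top\|_\sigma$, not concentration of the linear process $\sum_i\sigma_i\bx_i^\top\tilde{\bu}$.
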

\begin{remark}[Comparison]
We make a detailed comparison with two mostly related work~\citep{neyshabur2019towards,magen2023initialization} on initialization-dependent bounds. We will give comparisons with more work on the generalization bounds (Section~\ref{sec:comparison-gen}). The seminal work~\citep{neyshabur2019towards} considered SNNs with the ReLU activation function under a different constraint (we denote $\bv^j$ the $j$-th column of $\bV$)
\[
\gcal_1=\Big\{\bx\mapsto\Psi_{\bW,\bV}(\bx):\bW\in\rbb^{m\times d},\bV\in\rbb^{c\times m}:\|\bv^j\|_2\leq\alpha_j,\|\bw_j-\bw_j^{(0)}\|_2\leq \beta_j,\forall j\in[m]\Big\},
\]
and derived the following Rademacher complexity bound
\begin{equation}\label{neyshabur2019-R}
\mathfrak{R}_{S}(\gcal_1)\leq \frac{2(\sqrt{2c}+2)}{\sqrt{n}}  \|\mathbf{\alpha}\|_2\Big(\|\mathbf{\beta}\|_2\big(\frac{1}{n}\sum_{i=1}^{n}\|\bx_i\|_2^2\big)^{\frac{1}{2}}+
\big(\frac{1}{n}\sum_{i=1}^{n}\|\bW^{(0)}\bx_i\|_2^2\big)^{\frac{1}{2}}\Big),
\end{equation}
where $\mathbf{\alpha}=(\alpha_1,\ldots,\alpha_m)$ and $\mathbf{\beta}=(\beta_1,\ldots,\beta_m)$. The space $\gcal_1$ ignores the correlations among different $\bv^j$ and $\bw_j$ since it imposes component-wise constraints. As a comparison, we impose the Frobenius constraint on $\bV$ and $\bW-\bW^{(0)}$, which preserves the correlation. The Frobenius-norm constraint is a more natural choice due to the implicit regularization methods: gradient methods prefer models close to the initialization and the closeness is often measured by the Frobenius norm~\citep{soltanolkotabi2018theoretical,oymak2020toward,ma2019implicit}. For example, the work~\citep{oymak2020toward,taheri2025sharper,soltanolkotabi2018theoretical} showed that  $\|\bW^{(t)}-\bW^{(0)}\|_F$ is bounded  for iterates $\{\bW^{(t)}\}$ produced by gradient methods applied to shallow and deep neural networks. 


The work~\citep{magen2023initialization} considered the space $\gcal$ with a $L_\gamma$-smooth activation function and $c=1$ (for binary classification problems, it suffices to consider a real-valued prediction function), and derived the following complexity bound
\begin{equation}\label{magen2023-R}
\mathfrak{R}_{S}(\gcal)=\widetilde{O}\Big(\frac{1}{\sqrt{n}}+\frac{R_Vb_x}{\sqrt{n}}\Big(G_\gamma\|\bW^{(0)}\|_\sigma+(G_\gamma+L_\gamma)R_W(1+\|\bW^{(0)}\|_\sigma b_x)\Big)\Big),
\end{equation}
where it was assumed that $b_x=\sup_{\bx}\|\bx\|_2$. This bound is data-independent in the sense of involving $b_x$ instead of $\bx_i$.
If $\gamma(0)=0$ (this is assumed in \citep{magen2023initialization}), then the first term on the right-hand side of Eq.~\eqref{rad-snn-g-a} satisfies
\begin{align*}
\frac{1}{n}\Big(\sum_{j=1}^{m}\sum_{i=1}^{n}\gamma^2(\bx_i^\top\bw_j^{(0)})\Big)^{\frac{1}{2}} & \leq \frac{G_\gamma}{n}\Big(\sum_{j=1}^{m}\sum_{i=1}^{n}\big(\bx_i^\top\bw_j^{(0)}\big)^2\Big)^{\frac{1}{2}}=  \frac{G_\gamma}{n}\Big(\sum_{i=1}^{n}\bx_i^\top\sum_{j=1}^{m}\bw_j^{(0)}(\bw_j^{(0)})^\top\bx_i\Big)^{\frac{1}{2}} \\
& = \frac{G_\gamma}{n}\Big(\sum_{i=1}^{n}\bx_i^\top(\bW^{(0)})^\top\bW^{(0)}\bx_i\Big)^{\frac{1}{2}}=\frac{G_\gamma}{n}\Big(\sum_{i=1}^{n}\|\bW^{(0)}\bx_i\|_2^2\Big)^{\frac{1}{2}},
\end{align*}
which is smaller than the term $\frac{b_x}{\sqrt{n}}G_\gamma\|\bW^{(0)}\|_\sigma$ in Eq.~\eqref{magen2023}. If we consider Kaiming initialization, then $\frac{G_\gamma}{n}\big(\sum_{i=1}^{n}\|\bW^{(0)}\bx_i\|_2^2\big)^{\frac{1}{2}}$ does not increase with $m$.
As shown in Figure~\ref{Fig_W0}, our empirical results support above claim. 


Note the work~\citep{neyshabur2019towards} considered the specific ReLU activation function, while the work~\citep{magen2023initialization} imposed a critical smoothness assumption on the activation function. We extend these discussions to general Lipschitz activation functions.
Moreover, the existing initialization-dependent~\citep{neyshabur2019towards,magen2023initialization} analyses use the Frobenius norm to measure the distance from initialization. In comparison to these results, we also get a bound depending on the path-norm, which is smaller.
\end{remark}
\begin{remark}[Idea and Novelty]
A key challenge in the analysis is to handle the constraint in terms of initialization, due to which the homogeneity property does not apply.
Our basic idea to estimate the Rademacher complexity is to relate $n\mathfrak{R}_{S}(\gcal)$ to the following two terms
\begin{multline*}
 \ebb_{\bm{\sigma}}\sup_{\bW,\bV}\sum_{j=1}^{m}\sum_{k=1}^cv_{kj}\sum_{i=1}^{n}\ibb_{[\|\bw_j-\bw_j^{(0)}\|_2\leq a]}\sigma_{ik}\big(\gamma(\bx_i^\top\bw_j)-\gamma(\bx_i^\top\bw_j^{(0)})\big)\\
  +\ebb_{\bm{\sigma}}\sup_{\bW,\bV}\sum_{j=1}^{m}\sum_{k=1}^cv_{kj}\sum_{i=1}^{n}\ibb_{[\|\bw_j-\bw_j^{(0)}\|_2> a]}\sigma_{ik}\big(\gamma(\bx_i^\top\bw_j)-\gamma(\bx_i^\top\bw_j^{(0)})\big):=I_1+I_2,
\end{multline*}
where $\ibb_{[\cdot]}$ is an indicator function, returning $1$ if the argument holds and $0$ otherwise.
For $I_1$, we use Schwarz's inequality and the constraint $\|\bV\|_F\leq R_V$ to get
\[
I_1\leq R_V\Big(\sum_{j=1}^{m}\sum_{k=1}^c\ebb_{\bm{\sigma}}\sup_{\bw:\|\bw-\bw_j^{(0)}\|_2\leq a}\Big(\sum_{i=1}^{n}\sigma_{ik}\big(\gamma(\bx_i^\top\bw)-\gamma(\bx_i^\top\bw_j^{(0)})\big)\Big)^2\Big)^{\frac{1}{2}}.
\]
The term $\ebb_{\bm{\sigma}}\sup_{\bw:\|\bw-\bw_j^{(0)}\|_2\leq a}\big(\sum_{i=1}^{n}\sigma_{ik}\big(\gamma(\bx_i^\top\bw)-\gamma(\bx_i^\top\bw_j^{(0)})\big)\big)^2$ is a second-moment of Rademacher complexity, and we control it by the  Efron-Stein inequality.

Our idea to control $I_2$ is to rewrite it as follows
\begin{align*}
  & I_2 = \ebb_{\bm{\sigma}}\sup_{\bW,\bV}\sum_{j=1}^{m}\sum_{k=1}^cv_{kj}\|\bw_j-\bw_j^{(0)}\|_2\ibb_{[\|\bw_j-\bw_j^{(0)}\|_2> a]}\sum_{i=1}^{n}\sigma_{ik}\big(\gamma(\bx_i^\top\bw_j)-\gamma(\bx_i^\top\bw_j^{(0)})\big)/\|\bw_j-\bw_j^{(0)}\|_2 \\
  & \leq \ebb_{\bm{\sigma}}\sup_{\bW,\bV}\sum_{j=1}^{m}\sum_{k=1}^c|v_{kj}|\|\bw_j-\bw_j^{(0)}\|_2\max_{j\in[m]}\max_{k\in[c]}\ibb_{[\|\bw_j-\bw_j^{(0)}\|_2> a]}\Big|\sum_{i=1}^{n}\sigma_{ik}\big(\gamma(\bx_i^\top\bw_j)-\gamma(\bx_i^\top\bw_j^{(0)})\big)/\|\bw_j-\bw_j^{(0)}\|_2\Big|.
\end{align*}
Note $\sum_{j=1}^{m}\sum_{k=1}^c\big|v_{kj}\big|\|\bw_j-\bw_j^{(0)}\|_2$ is the path-norm and then it suffices to estimate (we assume $c=1$ here for simplicity)
\[
I_3:=\ebb_{\bm{\sigma}}\sup_{\bW}\max_{j\in[m]}\ibb_{[\|\bw_j-\bw_j^{(0)}\|_2> a]}\big|\sum_{i=1}^{n}\sigma_{ik}\big(\gamma(\bx_i^\top\bw_j)-\gamma(\bx_i^\top\bw_j^{(0)})\big)/\|\bw_j-\bw_j^{(0)}\|_2\big|,
\]
which is achieved by a peeling trick. Note that $I_3$ involves a supremum over $\bW\in\wcal$ and a maximum over $j\in[m]$. Our key idea to handle this supremum and maximum is to relate it to the Rademacher complexity of a union of function spaces. Specifically, we define $r_k=2^{k-1}a$ for $k\in[K]$ with  $K:=1+\lceil\log_2(R_W/a)\rceil$, and introduce
\[
\hcal_{k,j}=\Big\{\bx\mapsto \big(\gamma(\bx^\top\bw)-\gamma(\bx^\top\bw_j^{(0)})\big)/\|\bw-\bw_j^{(0)}\|_2:\bw\in\rbb^{d},
r_k<\|\bw-\bw_j^{(0)}\|_2\leq r_{k+1}\Big\},\; k\in[K],j\in[m].
\]
Then, we can control $I_3$ by the Rademacher complexity of the union of $\hcal_{k,j}$. 
We show $\mathfrak{R}_S\big(\hcal_{k,j}\big)\leq \frac{2G_\gamma }{n}\big(\sum_{i=1}^{n}\|\bx_i\|_2^2\big)^{\frac{1}{2}}$, which is then used to control $I_3$ by existing Rademacher complexity bounds of a union of function spaces~\citep{maurer2014inequality}. The construction of $\hcal_{k,j}$ depends on the initialization, which is a key to handle the initialization-dependent constraint.

We combine the bounds on $I_1$ and $I_2$ to get complexity bounds in Theorem~\ref{thm:rad-snn-g}. The detailed proof is given in Section~\ref{sec:proof-rad-snn-g}.
\end{remark}

The following theorem gives lower bounds on Rademacher complexities. For simplicity, we consider binary classification with $c=1$ and the ReLU activation function. This shows that our upper bounds in Theorem~\ref{thm:rad-snn-g} are tight up to a logarithmic factor. The proof is given in Section~\ref{sec:proof-rad-snn-g}.
Our basic idea is to relate the complexity of SNNs to the complexity of function spaces with only a single neuron, the latter of which can be further related to the complexity of linear function spaces by the simple observation that $t=\gamma(t)-\gamma(-t)$ if $\gamma$ is the ReLU activation function.
\begin{theorem}[Lower Bounds\label{thm:lower}]
Let $r_0=\min_{j\in[m]}\|\bw^{(0)}_j\|_2$, $c=1$ and $\gamma$ be the ReLU activation. If $R_W\geq r_0$ and $\gcal$ is defined in Eq.~\eqref{gcal}, then
\[
\mathfrak{R}_S(\gcal)\geq \frac{(R_W-r_0)R_V}{4\sqrt{2}n}\big(\sum_{i=1}^{n}\|\bx_i\|_2^2\big)^{\frac{1}{2}}+\frac{R_V}{2\sqrt{2}n}\big(\sum_{i=1}^{n}\sum^m_{j=1}\gamma^2(\bx_i^\top\bw_j^{(0)})\big)^{\frac{1}{2}}
\]
\end{theorem}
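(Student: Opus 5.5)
Since the statement is a sum of two terms, I would prove each one separately as a lower bound on $\mathfrak{R}_S(\gcal)$ and then average them: $\max\{A,B\}\ge (A+B)/2$. This explains the factors $\frac{1}{4\sqrt2}=\frac12\cdot\frac{1}{2\sqrt2}$ and $\frac{1}{2\sqrt2}$. So the plan is to show
$$\mathfrak{R}_S(\gcal)\ge \frac{(R_W-r_0)R_V}{2\sqrt2 n}\Big(\sum_i\|\bx_i\|_2^2\Big)^{1/2}\quad\text{and}\quad \mathfrak{R}_S(\gcal)\ge \frac{R_V}{\sqrt2 n}\Big(\sum_{i,j}\gamma^2(\bx_i^\top\bw_j^{(0)})\Big)^{1/2}.$$
The factor $\frac{1}{\sqrt2}$ in both will come from the Khintchine inequality with optimal constant, $\ebb|\sum_i\sigma_i a_i|\ge \frac{1}{\sqrt2}\|a\|_2$.

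\textbf{Second term.} Take $\bW=\bW^{(0)}$, which is feasible. Then $\sup_{\bV}\sum_i\sigma_i\sum_j v_j\gamma(\bx_i^\top\bw^{(0)}_j)$ over $\|\bv\|_2\le R_V$ equals $R_V\big\|\big(\sum_i\sigma_i\gamma(\bx_i^\top\bw_j^{(0)})\big)_{j}\big\|_2$. Taking expectations and applying a vector-valued Khintchine inequality, $\ebb\|\sum_i\sigma_i\mathbf a_i\|_2\ge\frac{1}{\sqrt2}(\sum_i\|\mathbf a_i\|_2^2)^{1/2}$ (the Khintchine--Kahane inequality with Szarek's constant, valid in Hilbert space), gives exactly the bound. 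Dividing by $n$ finishes this term.

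\textbf{First term.} Let $j_0$ attain $r_0=\|\bw^{(0)}_{j_0}\|_2$.
1. Restrict $\bV$ to have a single nonzero entry $v_{j_0}=\pm R_V$.
2. Restrict $\bW$ so that it differs from $\bW^{(0)}$ only in row $j_0$. Set $\bw_{j_0}=\bu$ with $\|\bu\|_2\le R_W-r_0$; this is feasible because $\|\bu-\bw^{(0)}_{j_0}\|_2\le R_W$.
3. With this restriction the complexity is at least $\frac{R_V}{n}\ebb\sup_{\|\bu\|\le R_W-r_0}\big|\sum_i\sigma_i\gamma(\bx_i^\top\bu)\big|$, where the sign choice of $v_{j_0}$ produces the absolute value.
4. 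Use $t=\gamma(t)-\gamma(-t)$ to get
$$\Big|\sum_i\sigma_i\bx_i^\top\bu\Big|\le\Big|\sum_i\sigma_i\gamma(\bx_i^\top\bu)\Big|+\Big|\sum_i\sigma_i\gamma(-\bx_i^\top\bu)\Big|.$$
Since $-\bu$ lies in the same ball, taking the supremum yields
$$\sup_{\bu}\Big|\sum_i\sigma_i\gamma(\bx_i^\top\bu)\Big|\ge\frac12(R_W-r_0)\Big\|\sum_i\sigma_i\bx_i\Big\|_2.$$
5. Applying Khintchine again gives the lower bound $\frac{(R_W-r_0)R_V}{2\sqrt2n}(\sum_i\|\bx_i\|^2)^{1/2}$.

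\textbf{Main obstacle.} The delicate point is justifying the vector-valued Khintchine constant $1/\sqrt2$ for $\ebb\|\sum_i\sigma_i\mathbf a_i\|_2$. I would cite Latała--Oleszkiewicz/Szarek for Hilbert spaces. Alternatively, if only the scalar version is available, I would reduce to it by taking a worst-case direction, and accept whatever constant results.
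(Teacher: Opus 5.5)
Your proposal is correct and matches the paper's proof essentially step for step. It uses the same single-neuron subclass built on the neuron of smallest initial norm, the same subclass with $\bW=\bW^{(0)}$ combined with the Khintchine--Kahane inequality with constant $2^{-1/2}$, and the same averaging $\max\{A,B\}\ge (A+B)/2$ to produce the stated constants. The only difference is minor and lies in the ReLU-to-linear step. You use the sign freedom $v_{j_0}=\pm R_V$ to obtain an absolute value and then argue pointwise in $\bm{\sigma}$. The paper instead keeps $v_{j^*}=R_V$ and works at the level of expectations, using subadditivity of the supremum together with the symmetry of the Rademacher variables and of the ball $\mathbb{B}$. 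Both routes give the same factor $\frac{1}{2}$.
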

\begin{remark}
  Lower bounds of Rademacher complexity for neural networks have been studied in the literature~\citep{neyshabur2019towards,bartlett2017spectrally}, which, however, focused on the case with $\bW^{(0)}=0$ and different hypothesis spaces.
  For example, if $\bW^{(0)}=0$, it was shown that
  \[
  \mathfrak{R}_S(\gcal_1)\gtrsim \frac{\sum_{j=1}^{m}\alpha_j\beta_j}{n}\big(\sum_{i=1}^{n}\|\bx_i\|_2^2\big)^{\frac{1}{2}}.
  \]
  The lower bounds in \citep{bartlett2017spectrally} were developed for $\bW^{(0)}=0$ and constraints expressed by the spectral norm.
  We extend these discussions to lower bounds on initialization-dependent hypothesis spaces with constraints expressed by the Frobenius norm. The condition $R_W\geq r_0$ is mild as $r_0$ refers to a single neuron with the smallest norm.
\end{remark}

Our Rademacher complexity bounds directly imply generalization bounds, which are expressed in terms of the Frobenius norm of $\bX:=(\bx_1,\ldots,\bx_n)\in\rbb^{d\times n}$ and the path-norm of the distance from initialization. The proof of Theorem~\ref{thm:generalization} is given in Section~\ref{sec:proof-thm-generalization}.
We impose a Lipschitzness assumption on the loss function. Popular Lipschitz loss functions include the multinomial logistic loss $\ell_y(\bm{t})=\log\big(\sum_{k\in[c]}\exp(t_j-t_y)\big)$ and the multi-class margin-based loss~\citep{lei2023generalization}. We hide logarithmic factors in $\widetilde{O}(\cdot)$.

\begin{definition}[Lipschitzness]
  Let $G>0$.
  We say a vector-valued function $g:\rbb^c\mapsto \rbb$ is $G$-Lipschitz continuous if
  \[
  \big|g(t_1,\ldots,t_c)-g(t_1',\ldots,t_c')\big|\leq G\big(\sum_{j=1}^{c}(t_j-t_j')^2\big)^{\frac{1}{2}}.
  \]
\end{definition}

\begin{theorem}[Generalization Bounds\label{thm:generalization}]
  Assume $\ell_y(\cdot)$ is $G$-Lipschitz continuous and $\ell_y(\Psi_{\bW,\bV}(\bx))\in[0,b]$ almost surely.  Then, with probability at least $1-\delta$ the following inequality holds uniformly for all $(\bW,\bV)$
  \[
  F(\bW,\bV)-F_S(\bW,\bV)= \widetilde{O}\Big(\frac{G\|\bV\|_F}{n}\Big(c\sum_{j=1}^{m}\sum_{i=1}^{n}\gamma^2(\bx_i^\top\bw_j^{(0)})\Big)^{\frac{1}{2}}+
  \frac{GG_\gamma \kappa(\bW,\bV)\|\bX\|_F}{n}+b\Big(\frac{\log(1/\delta)}{n}\Big)^{\frac{1}{2}}\Big).
  \]
\end{theorem}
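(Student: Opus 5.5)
We derive Theorem~\ref{thm:generalization} from the complexity bound \eqref{rad-snn-g-a} of Theorem~\ref{thm:rad-snn-g}, combined with a vector-contraction inequality and a stratification (union-bound) argument. The stratification is needed because the statement holds uniformly over all $(\bW,\bV)$, with $\|\bV\|_F$ and $\kappa(\bW,\bV)$ in place of the fixed radii $R_V$ and $\sup\kappa$.

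\emph{Step 1 (fixed radii).} Fix $R_W,R_V$ and an upper bound $R_\kappa$ on the path-norm. Let $\gcal_{R_W,R_V,R_\kappa}$ be the subclass of $\gcal$ with $\kappa(\bW,\bV)\le R_\kappa$. The proof of \eqref{rad-snn-g-a} only uses $\sup\kappa$ through the path-norm, so it applies verbatim to this subclass, with $\sup\kappa$ replaced by $R_\kappa$ (also inside $c_m$). Since $\ell_y$ is $G$-Lipschitz with respect to the Euclidean norm on $\rbb^c$, Maurer's vector-contraction inequality~\citep{maurer2016vector} gives
\[
\ebb_{\bm\sigma}\sup\frac1n\sum_i\sigma_i\ell_{y_i}(g(\bx_i))\le\sqrt2\,G\,\mathfrak{R}_S(\gcal_{R_W,R_V,R_\kappa}).
\]
The standard symmetrization plus McDiarmid argument, using the boundedness $\ell\in[0,b]$, then yields with probability $1-\delta$, uniformly over the subclass,
\[
F-F_S\le 2\sqrt2\,G\,\mathfrak{R}_S+3b\sqrt{\log(2/\delta)/(2n)}.
\]
Finally, $\|\sum_i\bx_i\bx_i^\top\|_\sigma^{1/2}\le\|\bX\|_F=(\sum_i\|\bx_i\|_2^2)^{1/2}$, so both data-dependent terms in \eqref{rad-snn-g-a} collapse to $\|\bX\|_F$ up to the logarithmic factor $c_m$.

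\emph{Step 2 (removing the radii).} Use dyadic grids: $R_V\in\{2^{p}\}$, $R_\kappa\in\{2^{q}\}$, and also $R_W\in\{2^{s}\}$, since $R_W$ enters $c_m$. Restrict the indices to ranges $|p|,|q|,|s|\lesssim\log(nmc\cdot\text{scale})$. Assign confidence $\delta_{p,q,s}=\delta/(8(|p|+1)^2(|q|+1)^2(|s|+1)^2)$ (the constant $8$ makes $\sum\delta_{p,q,s}\le\delta$) and take a union bound. For a given $(\bW,\bV)$, pick the smallest grid values dominating $\|\bV\|_F$, $\kappa(\bW,\bV)$ and $\|\bW-\bW^{(0)}\|_F$. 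These are within a factor $2$ of the true values, and the extra $\log(1/\delta_{p,q,s})$ contributes only $\log\log$ terms, absorbed into $\widetilde O$.

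\emph{Main obstacle.} The difficulty is controlling $c_m$: it contains $\log\lceil\log_2(2R_WR_V\sqrt{cm}/\kappa)\rceil$, which blows up if $\kappa$ is tiny relative to $R_WR_V$. I would handle this in two ways. First, clamp the grids: for $\kappa$ below a threshold such as $R_WR_V/(nmc)$, the second term is negligible and can be bounded by its value at the threshold. Second, note that $\kappa\le\sqrt{c}\,\|\bV\|_F\|\bW-\bW^{(0)}\|_F$ by Cauchy--Schwarz, so the ratio inside the logarithm is at least $1$ at the grid points. The small-norm regimes $\|\bV\|_F\to0$ and $\kappa\to0$ are handled similarly, by bounding with the smallest grid value, which costs only additive $O(1/n)$-type terms absorbed into the rate. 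After these adjustments all such factors are polylogarithmic, and the stated bound follows.
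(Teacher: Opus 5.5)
Your Step~1 is the paper's Proposition~\ref{prop:gen-snn-fix}: apply \eqref{rad-snn-g-a} to the subclass, use Maurer's contraction with factor $\sqrt{2}G$, then use $2\mathfrak{R}_S+3b(\log(2/\delta)/(2n))^{1/2}$. Your Step~2 is the same stratification idea. The paper's version of Step~2 works as follows.
\begin{itemize}
\item It uses the integer grid $r_1,r_2,r_3\in\nbb$ with weights $\delta/(r_1(r_1+1)r_2(r_2+1)r_3(r_3+1))$.
\item It computes $\sup\kappa=\min\{R_\kappa,c^{1/2}R_WR_V\}$ on each subclass, so the logarithm in $c_m'$ is well defined at every grid point.
\item It uses $r_3\ge 1$ to get $c_m'\le c_{r_1,r_2}$.
\end{itemize}
The price is the additive ``$+1$'' in $(\|\bV\|_F+1)$ and $(\kappa(\bW,\bV)+1)$.

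The step of yours that fails is the clamp at the \emph{relative} threshold $R_WR_V/(nmc)$. Bounding the second term by its value there leaves an additive term of order $GG_\gamma c_m\|\bW-\bW^{(0)}\|_F\|\bV\|_F\|\bX\|_F/(n^2mc)$. This is linear in the Frobenius product and is not controlled by anything in the statement. Concretely, put all of $\bV$ on neurons with $\bw_j=\bw_j^{(0)}$ and move the remaining neurons arbitrarily far. Then $\kappa(\bW,\bV)=0$, and the claimed right-hand side changes at most through hidden logarithmic factors, while your extra term grows linearly in $\|\bW-\bW^{(0)}\|_F$. No $\widetilde{O}$ can hide that, and it reintroduces exactly the Frobenius-type dependence the theorem is meant to remove.

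The clamp is also not needed for the reason you give. At fixed $R_W,R_V$, $c_m$ grows only like $(\log\log(1/R_\kappa))^{1/2}$, so $R_\kappa c_m\to 0$. What genuinely needs a floor is the union-bound weight as the dyadic index $q\to-\infty$, together with the case $\kappa=0$. Use an \emph{absolute} floor instead, such as the paper's $R_\kappa\ge 1$ or, e.g., $R_\kappa\ge 1/n$. Then $c_m$ involves only $\log\log(R_WR_V(cm)^{1/2})$, and the residual is a ``$+1$''-type additive term, as in the paper.

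Smaller points:
\begin{itemize}
\item $\sum_{p\in\mathbb{Z}}(|p|+1)^{-2}=\pi^2/3-1\approx 2.29$, so your triple sum is about $12$. The constant $8$ therefore does not keep the total failure probability below $\delta$.
\item Truncating the indices from above is unjustified. Networks with large $\|\bW-\bW^{(0)}\|_F$ would be left uncovered, and the claimed bound does not become trivial there because it involves that norm only through hidden logarithms. The weights $(|s|+1)^{-2}$ already keep the penalty at $\log\log$, so simply drop the truncation.
\item The union bound needs a valid inequality at every grid triple. This includes triples with $R_\kappa\ge 2R_WR_V(cm)^{1/2}$, where $c_m$ with $R_\kappa$ plugged in is undefined. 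Replace $R_\kappa$ by $\min\{R_\kappa,c^{1/2}R_WR_V\}$ inside $c_m$, as the paper does. Your Cauchy--Schwarz remark only covers the triple selected for a given $(\bW,\bV)$.
\end{itemize}
With these repairs, your dyadic grid gives factor-$2$ slack and $\log\log$ union-bound penalties, instead of the paper's additive $+1$ and $\log$ penalties. That is mildly sharper when the norms are below $1$.
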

\begin{table}
  \centering\renewcommand{\arraystretch}{1.36}
  \begin{tabular}{|c|c|c|c|c|}
    \hline
    \# & Reference & Activation & Bound & D \\ \hline
    (1) & \citep{bartlett2019nearly} & Piecewise linear & $\widetilde{O}(\sqrt{dm})$ & $-$\\ \hline
    (2) & \citep{bartlett2002rademacher} & Lipschitz & $\widetilde{O}\big(\|\bW\|_{\infty,1}\|\bV\|_{\infty,1}\big)$ & \checkmark\\ \hline
    (3) & \citep{neyshabur2015norm} & ReLU & $\widetilde{O}\big(\sum_{j=1}^{m}|v_{j}|\|\bw_j\|_2\big)$ & $-$\\ \hline
    (4) & \citep{golowich2018size} & ReLU & $\widetilde{O}\big(\|\bW\|_F\|\bV\|_F\big)$ & \checkmark\\ \hline
    (5) & \citep{bartlett2017spectrally} & Lipschitz & $\widetilde{O}\big(\|\bW\|_\sigma\|\bV-\bV^{(0)}\|_{1,2}+\|\bW-\bW^{(0)}\|_{1,2}\|\bV\|_\sigma\big)$ & \checkmark\\ \hline
    (6) & \citep{neyshabur2018pac} & ReLU & $\widetilde{O}\big(\|\bW\|_\sigma\|\bV-\bV^{(0)}\|_F+\sqrt{m}\|\bW-\bW^{(0)}\|_F\|\bV\|_\sigma\big)$ & $-$ \\ \hline
    (7) & \citep{neyshabur2019towards} & ReLU & $\widetilde{O}\big(\|\bW^{(0)}\|_\sigma\|\bV\|_F+\|\bW-\bW^{(0)}\|_F\|\bV\|_F+\sqrt{m}\big)$ & \checkmark \\ \hline
    (8) & \citep{magen2023initialization} & Lipschitz, smooth & $\widetilde{O}\big(\frac{1}{b_x}+\|\bV\|_F\big(\|\bW^{(0)}\|_\sigma+R_W(1+\|\bW^{(0)}\|_\sigma \sup_{\bx}\|\bx\|_2)\big)\big)$ & $-$\\ \hline
    (9) & \citep{daniely2024sample} & Lipschitz & $\widetilde{O}\big(\|\bW^{(0)}\|_\sigma\|\bV\|_F+\|\bW-\bW^{(0)}\|_F\|\bV\|_F\big)$ & $-$\\ \hline
    (10) & Thm~\ref{thm:generalization} & Lipschitz & $\widetilde{O}\big(\frac{\|\bV\|_F}{\|\bX\|_F}\big(\sum_{j=1}^{m}\sum_{i=1}^{n}\gamma^2(\bx_i^\top\bw_j^{(0)})\big)^{\frac{1}{2}}+\sum_{j=1}^{m}|v_{j}|\|\bw_j-\bw_j^{(0)}\|_2\big)$ & \checkmark\\
    \hline
  \end{tabular}
  \caption{Comparison with existing generalization bounds for two-layer networks with constant number of outputs. For the column `D', `\checkmark' means data-dependent bounds, while `$-$' means data-independent bounds.
  We ignore a factor of ${\|\bX\|_F}/{n}$ for data-dependent bounds, and a factor of $\sup_{\bx}\|\bx\|_2/\sqrt{n}$ for data-independent bounds (note ${\|\bX\|_F}/{n}\leq \sup_{\bx}\|\bx\|_2/\sqrt{n}$). For simplicity, we assume $c=1$ here. }\label{tab:comp-gen}
\end{table} 
\subsection{Comparison with Existing Generalization Bounds\label{sec:comparison-gen}}
We now compare our generalization bounds with existing results. For simplicity, we ignore the constants $G$ and $G_\gamma$, and also the term $b\big(\frac{\log(1/\delta)}{n}\big)^{\frac{1}{2}}$ here. Then, our bound in Theorem~\ref{thm:generalization} takes the following simple form
\begin{equation}\label{gen-ours}
  F(\bW,\bV)-F_S(\bW,\bV)=\widetilde{O}\Big(\frac{\|\bV\|_F}{n}\Big(c\sum_{j=1}^{m}\sum_{i=1}^{n}\gamma^2(\bx_i^\top\bw_j^{(0)})\Big)^{\frac{1}{2}}+\frac{\kappa(\bW,\bV)\|\bX\|_F}{n}\Big).
\end{equation}
The work~\citep{bartlett2019nearly,bartlett2002rademacher,neyshabur2015norm,golowich2018size} considered initialization-independent hypothesis spaces. Specifically, the paper~\citep{bartlett2019nearly} studied the VC dimension of neural networks with piecewise linear activation functions and derived bounds of order $\widetilde{O}(\sup_{\bx}\|\bx\|_2\sqrt{dm}/\sqrt{n})$. The work~\citep{bartlett2002rademacher,golowich2018size} derived norm-based capacity bounds for neural networks, where the bounds depend on the product of the norm of $\bW$ and $\bV$. The work~\citep{neyshabur2015norm} introduced the standard path-norm, and derived complexity bounds in terms of the path-norm with $\bW^{(0)}=0$. As illustrated in~\citep{neyshabur2019towards,dziugaite2017computing}, the norm of these matrices can be significantly larger than the distance from the initialization matrix. The proof techniques used in~\citep{bartlett2002rademacher,neyshabur2015norm,golowich2018size} do not allow for getting bounds with norms measured from initialization. For example, the analysis in \citep{golowich2018size} crucially relies on the positive homogeneity of the ReLU function, and one cannot use this homogeneity property if we consider either a general Lipschitz activation function or initialization-dependent spaces. Therefore, one has to introduce new techniques to get tighter bounds depending on the distance from initialization.

The work~\citep{bartlett2017spectrally} used the Maurey sparsification lemma to derive the following generalization bounds
\begin{equation}\label{bartlett2017}
  F(\bW,\bV)-F_S(\bW,\bV)=\widetilde{O}\Big(\frac{\|\bX\|_F}{n}\Big(\|\bW\|_\sigma\|\bV-\bV^{(0)}\|_{1,2}+\|\bW-\bW^{(0)}\|_{1,2}\|\bV\|_\sigma\Big)\Big).
\end{equation}
A key term in this bound is $\|\bW-\bW^{(0)}\|_{1,2}\|\bV\|_\sigma$, which is replaced by $\|\bW-\bW^{(0)}\|_F\|\bV\|_F$ in Eq.~\eqref{gen-ours}. Note that $\|\bV\|_\sigma$ and $\|\bV\|_F$ differs at most by a factor of $\sqrt{c}$, which is a small constant. On the other hand, $\|\bW-\bW^{(0)}\|_{1,2}$ can be larger than $\|\bW-\bW^{(0)}\|_F$ by a factor of $\sqrt{m}$. Therefore, it admits a square-root dependency on $m$ and yields a loose bounds when $m$ is large. The work~\citep{neyshabur2018pac} took a PAC-Bayes approach and the associated bound involves $\sqrt{m}\|\bW-\bW^{(0)}\|_F\|\bV\|_\sigma$, which again admits a square-root dependency on $m$. The work~\citep{neyshabur2019towards} introduced a new decomposition to control the Rademacher complexity of SNNs with the ReLU activation, and showed
\begin{equation}\label{neyshabur2019}
  F(\bW,\bV)-F_S(\bW,\bV)=\widetilde{O}\Big(\frac{\|\bX\|_F}{n}\Big(\|\bW^{(0)}\|_\sigma\|\bV\|_F+\sqrt{c}\|\bW-\bW^{(0)}\|_F\|\bV\|_F+\sqrt{m}\Big)\Big).
\end{equation}
It is clear that our bound in Eq.~\eqref{gen-ours} improves Eq.~\eqref{neyshabur2019} by removing the term $\|\bW^{(0)}\|_\sigma\|\bV\|_F+\sqrt{m}$, and replacing the term $\|\bW-\bW^{(0)}\|_F\|\bV\|_F$ with $\kappa(\bW,\bV)$.
Furthermore, by Schwarz's inequality, we have
\begin{align}\label{path-fro}
  \sum_{j=1}^{m}\sum_{k=1}^c|v_{kj}|\|\bw_j-\bw_j^{(0)}\|_2 & \leq \Big(\sum_{j=1}^{m}\big(\sum_{k=1}^c|v_{kj}|\big)^2\Big)^{\frac{1}{2}}\Big(\sum_{j=1}^{m}\|\bw_j-\bw_j^{(0)}\|_2^2\Big)^{\frac{1}{2}}\notag\\
  & \leq \Big(c\sum_{j=1}^{m}\sum_{k=1}^cv^2_{kj}\Big)^{\frac{1}{2}}\|\bW-\bW^{(0)}\|_F =c^{\frac{1}{2}}\|\bW-\bW^{(0)}\|_F\|\bV\|_F.
\end{align}
Therefore, the path-norm is always smaller than the product of the Frobenius norms and  $\sqrt{c}$. In the extreme case, we can have $\kappa(\bW,\bV)\ll c^{\frac{1}{2}}\|\bW-\bW^{(0)}\|_F\|\bV\|_F$. For example, consider $c=1$, $\bV=(1/\sqrt{m},\ldots,1/\sqrt{m})$, $\bw_j-\bw_j^{(0)}=0$ if $j\neq 1$. Then, we have $\kappa(\bW,\bV)=\frac{1}{\sqrt{m}}\|\bw_1-\bw_1^{(0)}\|_2$ and $\|\bW-\bW^{(0)}\|_F\|\bV\|_F=\|\bw_1-\bw_1^{(0)}\|_2$. Therefore, there is a gap by a factor of $\sqrt{m}$ between $\kappa(\bW,\bV)$ and $\|\bW-\bW^{(0)}\|_F\|\bV\|_F$.
Finally, the work~\citep{neyshabur2019towards} only considered the ReLU activation function, which is extended to general Lipschitz activation functions here.

The more recent work~\citep{magen2023initialization} considered Lipschitz and smooth activation functions with $\gamma(0)=0$ and $c=1$, and got
\begin{equation}\label{magen2023}
  F(\bW,\bV)-F_S(\bW,\bV)=\widetilde{O}\Big(\frac{1}{\sqrt{n}}+\frac{\|\bV\|_F\sup_{\bx}\|\bx\|_2}{\sqrt{n}}\Big(G_\gamma\|\bW^{(0)}\|_\sigma+(G_\gamma+L_\gamma)R_W(1+\|\bW^{(0)}\|_\sigma \sup_{\bx}\|\bx\|_2)\Big)\Big).
\end{equation}
First, the analysis in \citep{magen2023initialization} applies to smooth activation functions, which does not apply to the ReLU activation function. We remove the smoothness assumption. Second, the bound in Eq.~\eqref{magen2023} is not data-dependent in the sense of involving the factor of $\sup_{\bx}\|\bx\|_2/\sqrt{n}$, which is replaced by a data-dependent term $\|\bX\|_F/n$ in Eq.~\eqref{gen-ours}.


The work~\citep{daniely2024sample} used the approximate description length to develop generalization bounds for $c=1$
\begin{equation}\label{daniely2024}
  F(\bW,\bV)-F_S(\bW,\bV)=\widetilde{O}\Big(\frac{G_\gamma\sup_{\bx}\|\bx\|_2}{\sqrt{n}}\Big(\|\bW^{(0)}\|_\sigma\|\bV\|_F+\|\bW-\bW^{(0)}\|_F\|\bV\|_F\Big)\Big).
\end{equation}
Again, this bound is data-independent and depends on the product of the Frobenius norm. We replace the term $\|\bW^{(0)}\|_\sigma\|\bV\|_F/\sqrt{n}$ in Eq.~\eqref{daniely2024} by $\|\bV\|_F\big(\sum_{j=1}^{m}\sum_{i=1}^{n}\gamma^2(\bx_i^\top\bw_j^{(0)})\big)^{\frac{1}{2}}/n$, and replace the Frobenius norm by the path-norm, which can be much smaller. Finally, the work~\citep{daniely2024sample} studied generalization by approximation description length instead of Rademacher complexity analysis. Their bound involves various logarithmic and constant factors that are difficult to track, and therefore their bound is difficult to compute in empirical analyses. As a comparison, our generalization bounds are based on Rademacher complexity analysis of function spaces, and our bounds are directly computable in practice.
We summarize the comparison in Table~\ref{tab:comp-gen}.

\section{Empirical Studies\label{sec:empirical}}
In this section, we conduct empirical evaluations on binary classification tasks on the MNIST ($d=784,\ n=60000$) and CIFAR 10 ($d=1024,\ n=50000$) datasets to consolidate our theoretical studies.

\subsection{Experimental Setup}
For MNIST, we resize the image from $28\times28$ to $32\times32$ and consider a binary classification problem of identifying whether a handwritten digit is $1$  or $7$. We have $6742$ images with label $1$, and $6265$ images with label $7$, resulting a binary classification problem with sample size $n=6742+6265=13007$. For the CIFAR-10 dataset, we classify whether an object is an airplane or an automobile. There are $5000$ images of each in the training dataset. We train two-layer ReLU neural networks of varying widths, where the number of hidden units $m$ ranges from $2^6$ to $2^{18}$. For CIFAR 10, $m$ is restricted to $2^6$-$2^{17}$ due to the limited computational resources. The network is trained using the binary cross-entropy with logits loss, defined as
\begin{equation}
\mathcal{L}_{\mathrm{BCE}}(\bW,\bV)
= - \frac{1}{n} \sum_{i=1}^{n} \Big[ y_i \log \sigma(\Psi_{\bW,\bV}(\bx_i)) + (1 - y_i) \log(1 - \sigma(\Psi_{\bW,\bV}(\bx_i))) \Big],
\end{equation}
where $y_i\in\{0,1\}$ represents the true labels, $\Psi_{\bW,\bV}(\bx_i)$ denotes the network output, and $\sigma(z) = 1 / (1 + e^{-z})$ is the sigmoid function. We optimize this loss using stochastic gradient descent (SGD) with a mini-batch size of $256$, momentum of $0.9$, and an initial learning rate of $0.001$. Each model is trained until the training error is less than $0.1$ or the number of epochs reaches $20$ for MNIST. The maximum epoch for CIFAR 10 is set to $50$. We initialized both layers with Kaiming Gaussian distribution. We have flatten the input into a one-dimensional vector and apply $\ell_2$-normalization, ensuring the Euclidean norm of the input equals $1$. Considering the generalization bounds of the trained model, we map the labels from $\{0,1\}$ to $\{-1,1\}$ and choose the $1$-Lipschitz loss as
\begin{equation*}
    \ell_y(\Psi_{\bW,\bV}(\bx)) =
    \begin{cases}
        0 \quad  &\text{if } y\Psi_{\bW,\bV}(\bx) > 1 \\
        1-y\Psi_{\bW,\bV}(\bx) \quad &\text{if } y\Psi_{\bW,\bV}(\bx) \in[0,1] \\
        1 \quad & \text{if } y\Psi_{\bW,\bV}(\bx) < 0,
    \end{cases}
\end{equation*}
where $y\in\{-1,1\}$ denotes the true labels. The experiments were completed using the PyTorch framework on NVIDIA RTX4060 GPUs. Five trials were conducted with different random seeds to eliminate interference.

\subsection{Observations and Comparison}
\begin{figure}[htbp]
    \centering
    \begin{subfigure}{0.45\textwidth}
    \includegraphics[width=\textwidth]{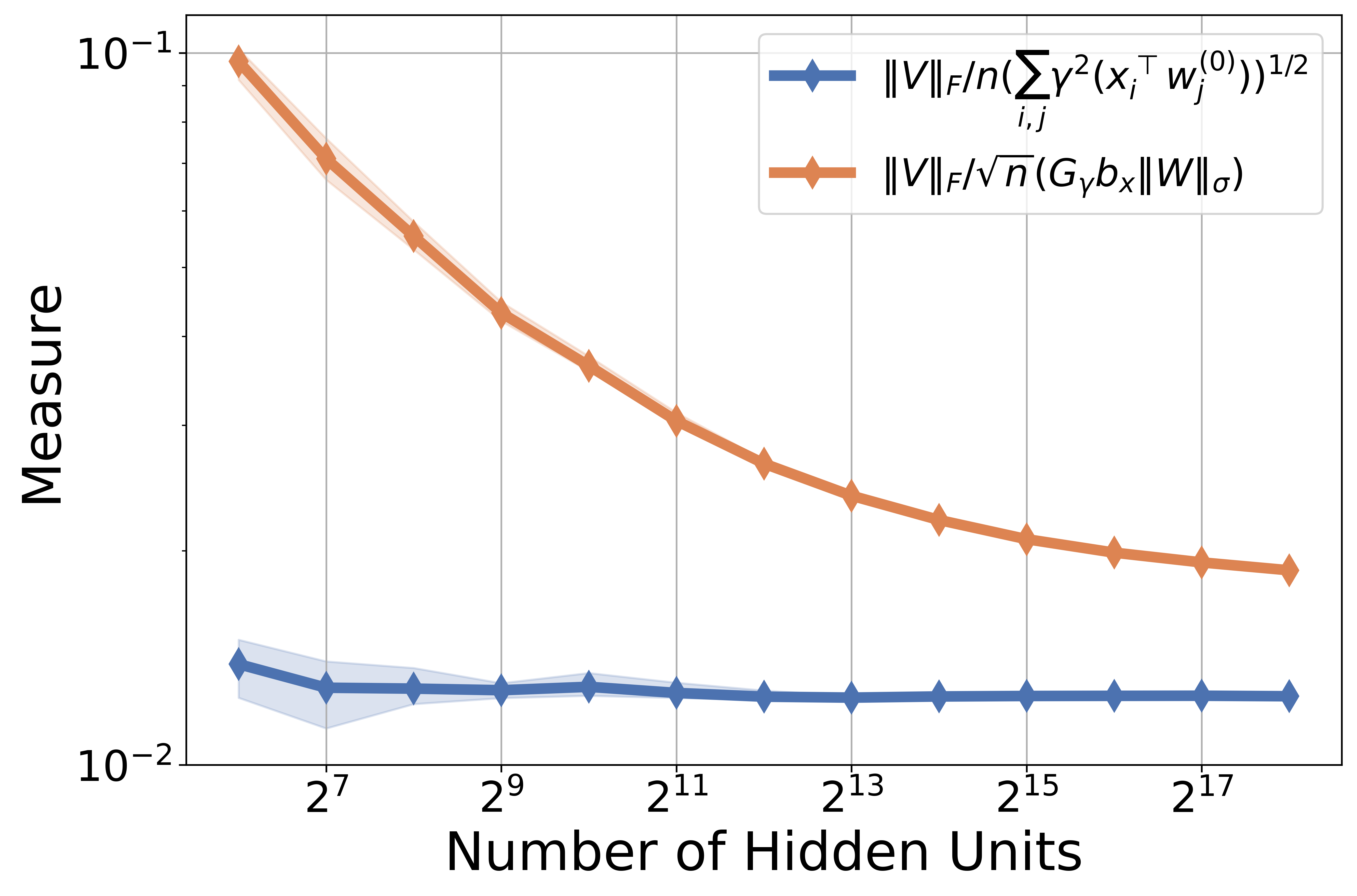}
    \caption{$\frac{\|V\|_F}{n}\big(\sum_{j=1}^{m}\sum_{i=1}^{n}\gamma^2(\bx_i^\top\bw_j^{(0)})\big)^{\frac{1}{2}}$ versus $\|V\|_F(G_\gamma b_x\|\bW^{(0)}\|_\sigma)/\sqrt{n}$.}
    \label{Fig_W0}
    \end{subfigure}
    \hfill
    \begin{subfigure}{0.45\textwidth}
        \includegraphics[width=\textwidth]{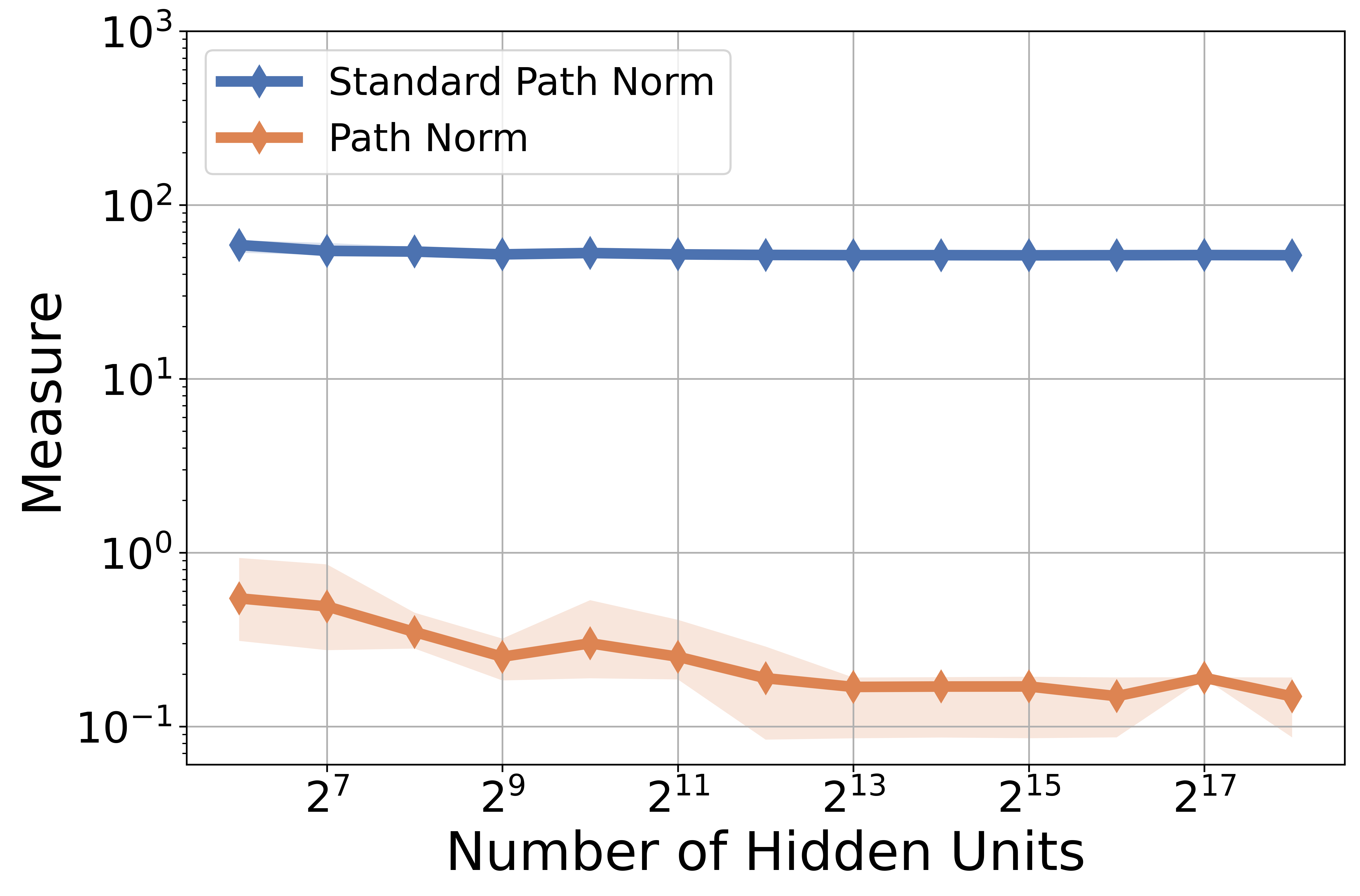}
    \caption{A comparison between standard path norm and path norm (Definition~\ref{def:path}).}
    \label{Fig_path_norm}
    \end{subfigure}
    \caption{Behavior of $\frac{\|V\|_F}{n}\big(\sum_{j=1}^{m}\sum_{i=1}^{n}\gamma^2(\bx_i^\top\bw_j^{(0)})\big)^{\frac{1}{2}}$, $\|V\|_F(G_\gamma b_x\|\bW^{(0)}\|_\sigma)/\sqrt{n}$, and path norm with respect to the number of hidden units $m$ on the MNIST dataset. The gray area represents the value range under different random trials.}
    \label{Fig_W0_path_norm}
\end{figure}

Figure~\ref{Fig_W0} compares $\frac{\|V\|_F}{n}\big(\sum_{j=1}^{m}\sum_{i=1}^{n}\gamma^2(\bx_i^\top\bw_j^{(0)})\big)^{\frac{1}{2}}$ and $\|V\|_F(G_\gamma b_x\|\bW^{(0)}\|_\sigma)/\sqrt{n}$, versus the number of hidden units $m$. We initialized the model weights using Kaiming Gaussian distribution. Empirically, we observe that $\frac{\|V\|_F}{n}\big(\sum_{j=1}^{m}\sum_{i=1}^{n}\gamma^2(\bx_i^\top\bw_j^{(0)})\big)^{\frac{1}{2}}$ is consistently smaller than $\|V\|_F(G_\gamma b_x\|\bW^{(0)}\|_\sigma)/\sqrt{n}$, and the difference is significant for small $m$. With Kaiming Gaussian initialization, Figure~\ref{Fig_path_norm} compares the evolution of standard path-norm (i.e., $\bW^{(0)}=0$)~\citep{neyshabur2015path,neyshabur2015norm,yang2024nonparametric} and the path-norm (Definition~\ref{def:path}) over SNNs of increasing sizes. This observation demonstrates that our path-norm is significantly smaller than the standard path-norm and insensitive to the model width, showing that the distance from initialization can be substantially smaller than the norm of the model. This justifies the effectiveness of incorporating the distance from initialization in the generalization analysis.


\begin{figure}[htbp]
    \centering
    \begin{subfigure}{0.45\textwidth}
        \includegraphics[width=\textwidth]{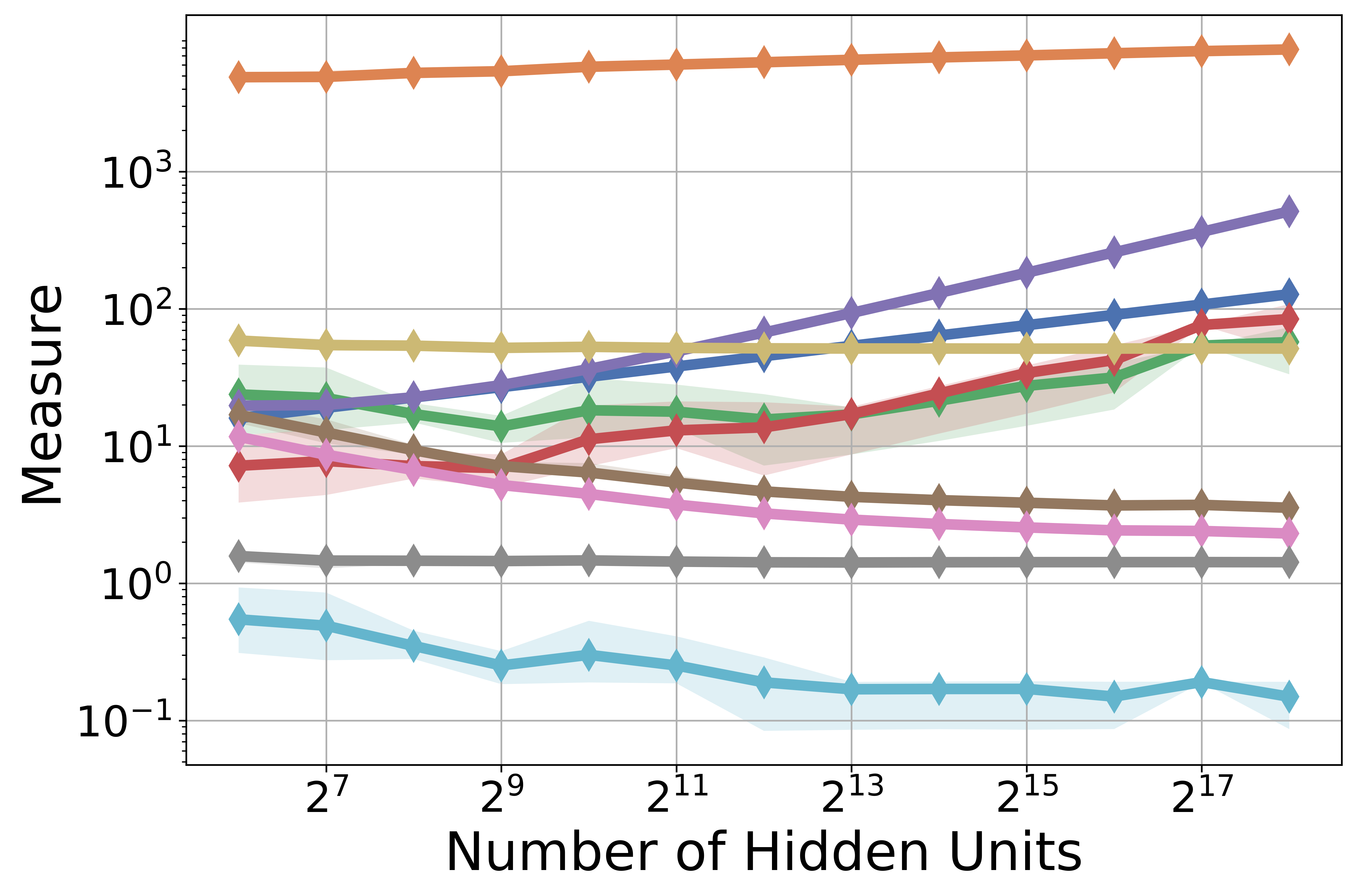}
    \end{subfigure}
    \begin{subfigure}{0.45\textwidth}
        \includegraphics[width=\textwidth]{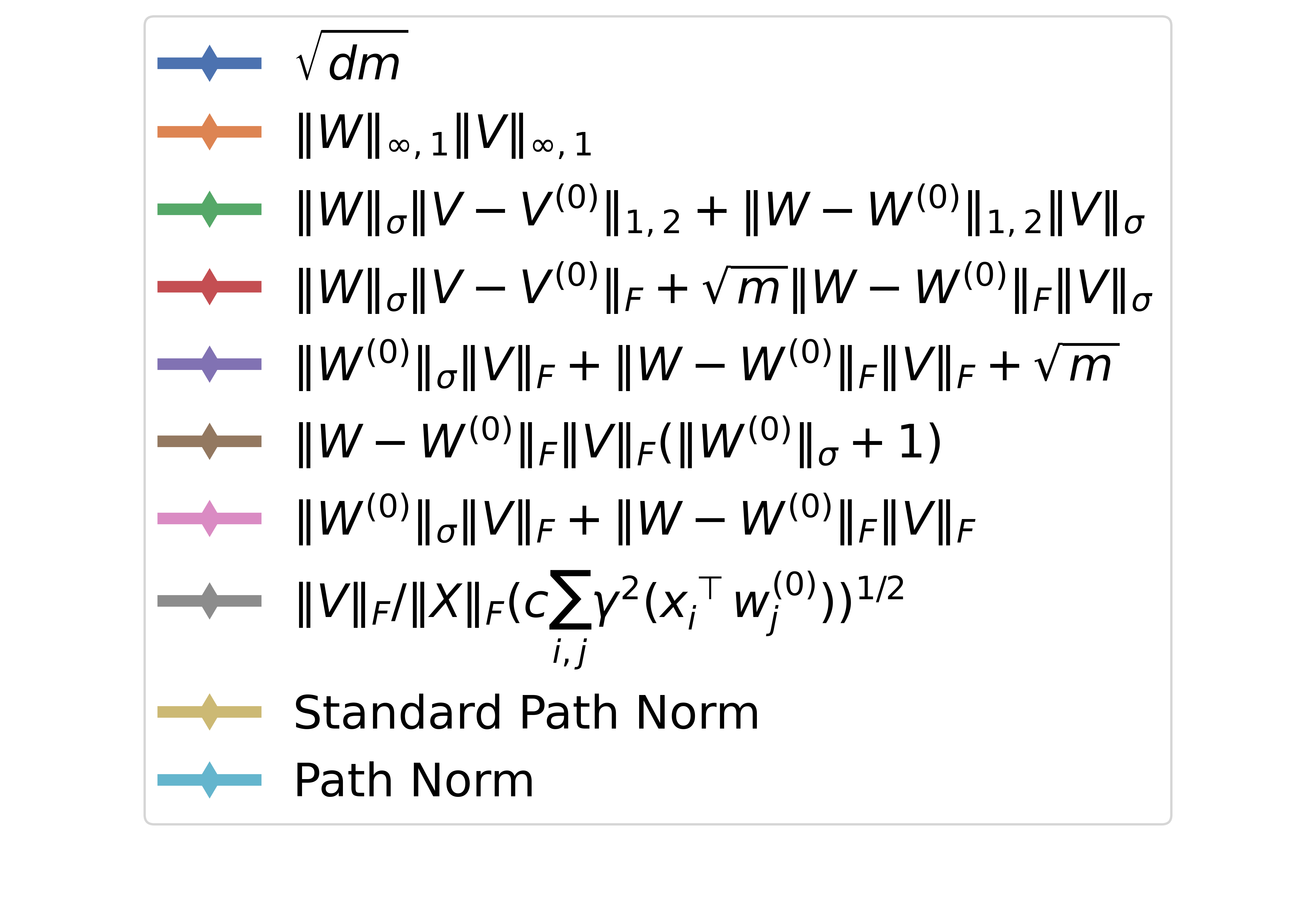}
    \end{subfigure}
    \caption{Dominant terms in the generalization bounds summarized in Table~\ref{tab:comp-gen} on MNIST dataset. The gray area represents the value range under different random trials.}
    \label{Fig_Measure_comparison}
\end{figure}

Figure~\ref{Fig_Measure_comparison} compares dominant terms in the generalization bounds as stated in Table~\ref{tab:comp-gen}. It shows that the path norm in Definition~\ref{def:path} remains substantially smaller and less sensitive to the growth of $m$ compared to other complexity measures used in the existing generalization analyses. The results illustrate the advantage of incorporating the initialization-dependent path-norm in the generalization analysis, which consequently explain the improved generalization guarantees for SNNs.

\begin{figure}[htbp]
    \centering
    \begin{subfigure}{0.49\textwidth}
        \includegraphics[width=\textwidth]{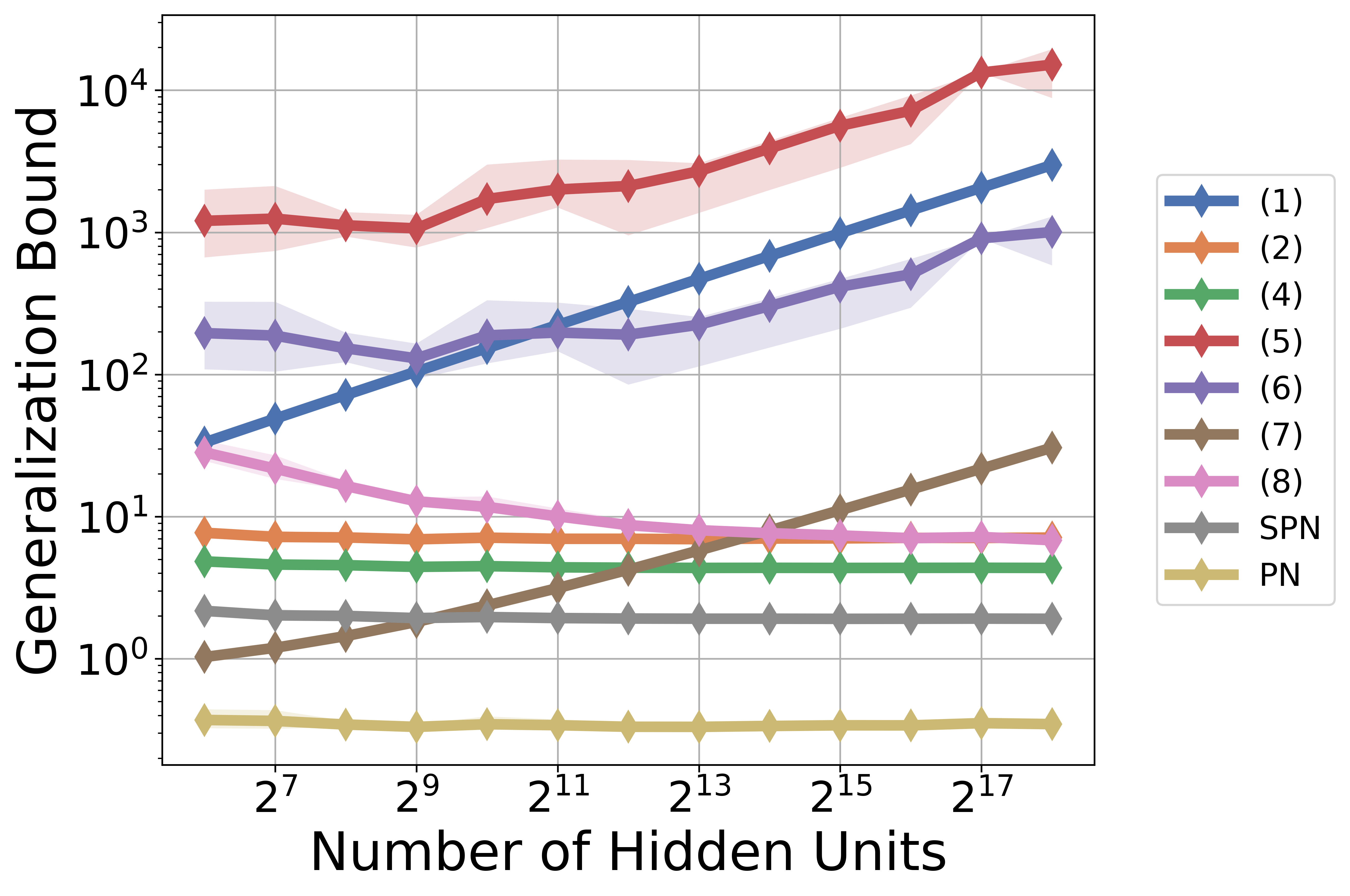}
    \caption{MNIST}
    \end{subfigure}
    \hfill
    \begin{subfigure}{0.49\textwidth}
        \includegraphics[width=\textwidth]{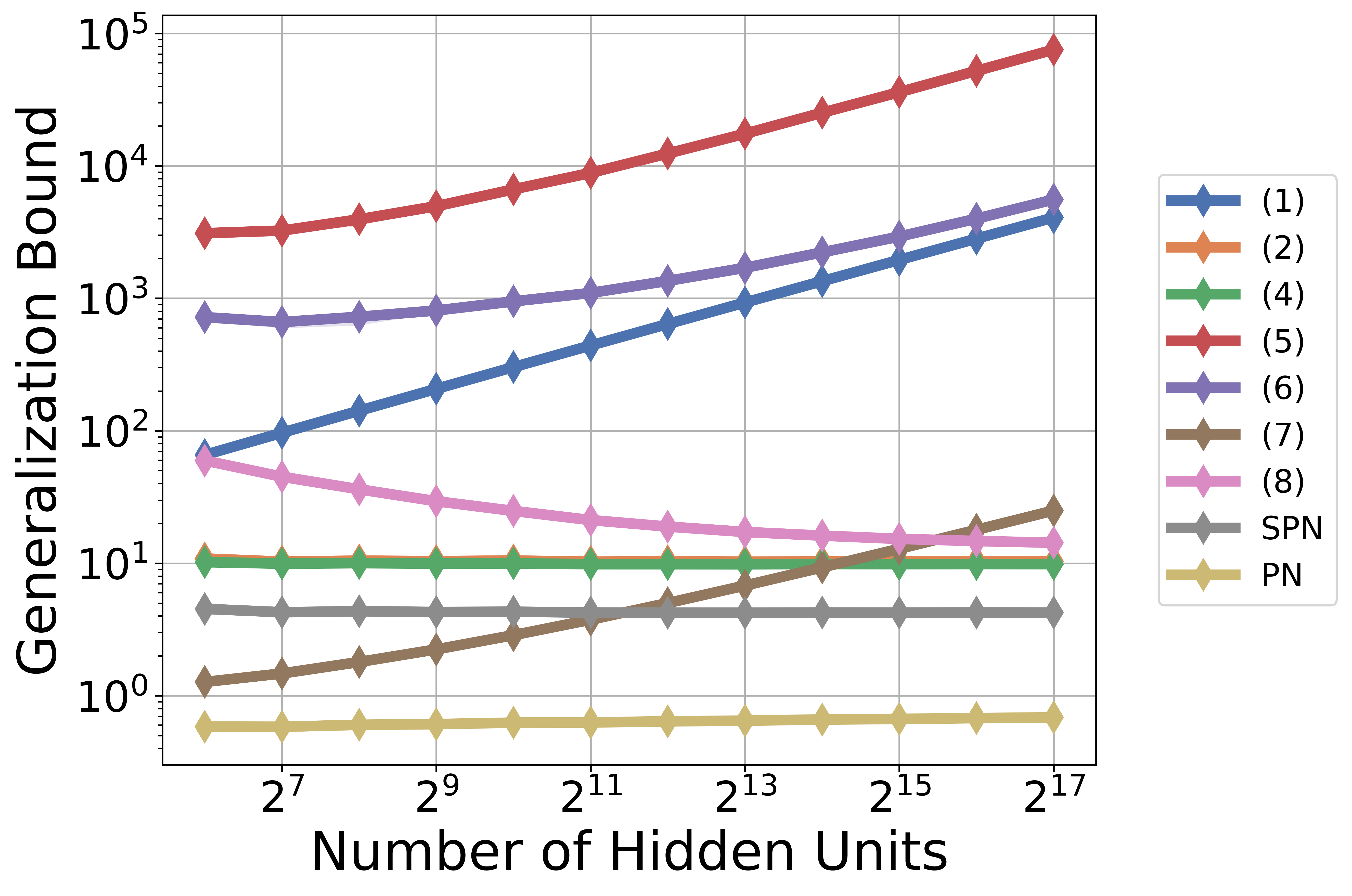}
    \caption{CIFAR10}
    \end{subfigure}
    \caption{The comparison of existing generalization bounds with all the terms and constants across the MNIST and Cifar10 datasets. The gray area represents the value range under different random trials. The label corresponds to the index number in Table~\ref{tab:comp-gen}. SPN denotes generalization bounds based on standard path norm, i,e., Eq.~\eqref{spn}, and PN means our generalization bounds based on path norm, i.e., Eq.~\eqref{constant-bound}.}
    \label{Fig_Com}
\end{figure}

To further show the strength of our generalization bounds, we present a comparison of our generalization bounds with existing ones by incorporating all relevant terms and constants. In particular, we use Eq.~\eqref{constant-bound} as our exact generalization bound\footnote{{The constant $\sqrt{2}$ in the first term of the right-hand side of Eq.~\eqref{constant-bound} can be removed since we consider binary classification, i.e., $c=1$, and the bound in Lemma~\ref{lem:maurer-contraction} can be improved by a factor of $1/\sqrt{2}$.}}, and compare it with the exact bounds in \citep{bartlett2019nearly,bartlett2002rademacher, golowich2018size, bartlett2017spectrally, neyshabur2018pac, neyshabur2019towards, magen2023initialization}. {Note that \citet{magen2023initialization} assume a smooth activation function to derive generalization bounds. To facilitate comparison, we omit the smoothness parameter in their bounds.} For the standard path-norm, we use the following generalization bound as a direct corollary of Eq.~\eqref{rad-snn-spn} and the peeling arguments
\begin{align}
&F(\bW,\bV)-F_S(\bW,\bV)\leq \frac{4}{n}\big(\kappa_s(\bW,\bV)+1\big)\big(\sum_{i=1}^{n}\|\bx_i\|_2^2\big)^{\frac{1}{2}}+
3\Big(\frac{\log(2(\kappa_s(\bW,\bV)\!+\!1)(\kappa_s(\bW,\bV)\!+\!2)/\delta)}{2n}\Big)^{\frac{1}{2}},\label{spn}
\end{align}
where $\kappa_s(\bW,\bV)=\sum_{j=1}^{m}|v_j|\|\bw_j\|_2$ is the standard path-norm.
The experimental results shown in Figure~\ref{Fig_Com} demonstrate that our bound not only exhibits minimal dependence on model width, but also significantly outperforms existing bounds. Moreover, out of all the existing studies, ours is the only one that remains non-vacuous across all settings, consistently maintaining a value below $1$. This is remarkable since the networks are highly overparameterized. For example, for the MNIST dataset, the number of parameters is more than $md=2^{18}\cdot 1024\approx 2\cdot10^8$, which is much larger than the sample size $n=13007$. 

\section{Proof of Generalization Bounds\label{sec:proof-gen}}
\subsection{Necessary Lemmas}
To prove Theorem~\ref{thm:rad-snn-g}, we first introduce some necessary lemmas. The following lemma gives the  Efron-Stein inequality to control the variance of a general function of random variables.
\begin{lemma}[Efron-Stein Inequality\label{lem:efron}]
Consider $g:\zcal^n\mapsto\rbb$. Let $Z_1,\ldots,Z_n$ be independent random variables and $\widetilde{Z}=g(Z_1,\ldots,Z_n)$. Then
\[
\var(\widetilde{Z})\leq \sum_{i=1}^{n}\ebb\big[(\widetilde{Z}-\ebb_i[\widetilde{Z}])^2\big],
\]
where $\ebb_i[\widetilde{Z}]=\ebb\big[\widetilde{Z}|Z_1,\ldots,Z_{i-1},Z_{i+1},\ldots,Z_n\big]$.
\end{lemma}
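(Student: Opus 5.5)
The plan is the standard martingale-difference (Doob decomposition) argument. Throughout I assume $\widetilde{Z}$ is square integrable; otherwise the right-hand side is infinite and there is nothing to prove. For $i=0,1,\ldots,n$ let $\ebb^{(i)}[\cdot]=\ebb[\cdot\,|\,Z_1,\ldots,Z_i]$, with $\ebb^{(0)}=\ebb$ and $\ebb^{(n)}[\widetilde{Z}]=\widetilde{Z}$. Define the increments
\[
\Delta_i=\ebb^{(i)}[\widetilde{Z}]-\ebb^{(i-1)}[\widetilde{Z}],\qquad i\in[n].
\]
These telescope, so $\widetilde{Z}-\ebb[\widetilde{Z}]=\sum_{i=1}^{n}\Delta_i$.

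The first step is to show the increments are orthogonal. For $i<j$, the variable $\Delta_i$ is measurable with respect to $Z_1,\ldots,Z_{j-1}$, and $\ebb^{(j-1)}[\Delta_j]=0$. By the tower property this gives $\ebb[\Delta_i\Delta_j]=0$. Expanding the square then yields
\[
\var(\widetilde{Z})=\sum_{i=1}^n\ebb[\Delta_i^2].
\]
It therefore suffices to prove $\ebb[\Delta_i^2]\le\ebb[(\widetilde{Z}-\ebb_i[\widetilde{Z}])^2]$ for each $i$.

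The second step, which is the only place independence enters, is to rewrite $\Delta_i$ as a conditional expectation of $\widetilde{Z}-\ebb_i[\widetilde{Z}]$. The claim is
\[
\ebb^{(i)}\big[\ebb_i[\widetilde{Z}]\big]=\ebb^{(i-1)}[\widetilde{Z}].
\]
To see this, note that by independence $\ebb_i[\widetilde{Z}]$ is obtained by integrating out $Z_i$ alone, so it is a function of $(Z_j)_{j\neq i}$ only. Conditioning it on $Z_1,\ldots,Z_i$ therefore integrates out $Z_{i+1},\ldots,Z_n$, and the result does not depend on $Z_i$. Again by independence (Fubini on the product measure), that result equals integrating $\widetilde{Z}$ over $Z_i,\ldots,Z_n$, which is exactly $\ebb^{(i-1)}[\widetilde{Z}]$. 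Consequently
\[
\Delta_i=\ebb^{(i)}\big[\widetilde{Z}-\ebb_i[\widetilde{Z}]\big].
\]
This identity is the main point requiring care: one must check explicitly that the conditional expectations factor as integrals over the product measure. I would write it out with the product-measure representation to avoid measurability subtleties.

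Finally, conditional Jensen (or the contraction property of conditional expectation in $L^2$) gives
\[
\ebb[\Delta_i^2]\le\ebb\big[(\widetilde{Z}-\ebb_i[\widetilde{Z}])^2\big].
\]
Summing over $i\in[n]$ and combining with the variance identity from the first step gives the stated inequality.
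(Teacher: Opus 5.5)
The paper gives no proof of this lemma. It is quoted as the classical Efron--Stein inequality and used only in the proof of Lemma~\ref{lem:rad-square-a}, where $Z_i=\sigma_i$ are Rademacher variables and $\widetilde{Z}$ is a bounded function of finitely many of them.

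Your proposal is the standard textbook proof and it is correct. The steps are: the Doob martingale increments $\Delta_i$, their orthogonality, the identity $\Delta_i=\ebb^{(i)}[\widetilde{Z}-\ebb_i[\widetilde{Z}]]$ derived from independence, and conditional Jensen.

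The independence step can be stated without product measures, which avoids the measurability bookkeeping you mention:
\begin{itemize}
\item $\ebb_i[\widetilde{Z}]$ is measurable with respect to $\sigma(Z_j:j\neq i)$ by definition, and this $\sigma$-field is independent of $Z_i$.
\item Hence conditioning $\ebb_i[\widetilde{Z}]$ on $(Z_1,\ldots,Z_i)$ is the same as conditioning it on $(Z_1,\ldots,Z_{i-1})$.
\item The tower property then gives $\ebb^{(i-1)}[\widetilde{Z}]$.
\end{itemize}

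The only loose spot is your opening claim that the right-hand side is infinite whenever $\widetilde{Z}\notin L^2$. This needs a sentence of justification, and your own argument supplies it. If $\widetilde{Z}\in L^1$ and every $\widetilde{Z}-\ebb_i[\widetilde{Z}]\in L^2$, then each $\Delta_i\in L^2$ by conditional Jensen. Hence $\widetilde{Z}-\ebb[\widetilde{Z}]=\sum_i\Delta_i\in L^2$. If $\widetilde{Z}\notin L^1$, neither side is defined. In the paper's application $\widetilde{Z}$ is bounded, so the point is moot there.
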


The following lemma gives a contraction principle to remove nonlinear Lipschitz functions in estimating Rademacher complexities.
\begin{lemma}[Contraction Lemma~\citep{bartlett2002rademacher}\label{lem:rade-cont}]
  Suppose $\psi:\rbb\mapsto\rbb$ is $G$-Lipschitz. Then the following inequality holds for any $\widetilde{\fcal}$
  \[
  \ebb_{\bm{\sigma}}\sup_{f\in\widetilde{\fcal}}\sum_{i=1}^{n}\sigma_{i}\psi\big(f(x_i)\big)\leq G\ebb_{\bm{\sigma}}\sup_{f\in\widetilde{\fcal}}\sum_{i=1}^{n}\sigma_{i}f(x_i).
  \]
\end{lemma}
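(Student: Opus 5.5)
We prove the inequality by replacing $\psi$ with $G$ times the identity one Rademacher variable at a time. By symmetry it suffices to handle $\sigma_1$ with $\sigma_2,\ldots,\sigma_n$ held fixed. The same argument applied successively to $i=1,\ldots,n$, with Fubini's theorem to combine the conditional expectations, then gives the claim. We may assume every supremum involved is finite; otherwise the right-hand side is $+\infty$ and there is nothing to prove.

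\textbf{Key step (one coordinate).} Fix $\sigma_2,\ldots,\sigma_n$ and set $A(f)=\sum_{i=2}^n\sigma_i\phi_i(f(x_i))$, where $\phi_i$ is either $\psi$ or $G\cdot\mathrm{id}$, depending on whether coordinate $i$ has already been processed. We claim
\[
\ebb_{\sigma_1}\sup_{f\in\widetilde{\fcal}}\big[A(f)+\sigma_1\psi(f(x_1))\big]\leq \ebb_{\sigma_1}\sup_{f\in\widetilde{\fcal}}\big[A(f)+\sigma_1 G f(x_1)\big].
\]

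\textbf{Proof of the claim.} Average over $\sigma_1=\pm1$ and merge the two suprema into a single supremum over pairs $(f,f')$:
\[
\text{LHS}=\frac12\sup_{f,f'\in\widetilde{\fcal}}\Big[A(f)+A(f')+\psi(f(x_1))-\psi(f'(x_1))\Big].
\]
The Lipschitz property gives $\psi(f(x_1))-\psi(f'(x_1))\le G|f(x_1)-f'(x_1)|$.

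The expression $A(f)+A(f')+G|f(x_1)-f'(x_1)|$ is symmetric under swapping $f$ and $f'$. Hence its supremum over pairs equals the supremum of $A(f)+A(f')+G(f(x_1)-f'(x_1))$: for any pair, swapping if necessary makes $f(x_1)-f'(x_1)\geq 0$. Splitting this last supremum back into a supremum over $f$ and a supremum over $f'$ gives
\[
\frac12\Big(\sup_f[A(f)+Gf(x_1)]+\sup_{f'}[A(f')-Gf'(x_1)]\Big),
\]
which is exactly $\ebb_{\sigma_1}\sup_f[A(f)+\sigma_1Gf(x_1)]$.

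\textbf{Main obstacle.} The only delicate point is the symmetry step that removes the absolute value. It works because the objective has no absolute value outside the supremum; that is why no extra factor of $2$ appears, unlike in the absolute-value version of the contraction lemma. If the suprema are not attained, the same manipulations go through with $\epsilon$-optimal pairs $(f,f')$ and we then let $\epsilon\to0$.

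Taking expectation over $\sigma_2,\ldots,\sigma_n$ and iterating over all $n$ coordinates yields
\[
\ebb_{\bm{\sigma}}\sup_{f\in\widetilde{\fcal}}\sum_{i=1}^{n}\sigma_{i}\psi\big(f(x_i)\big)\leq G\,\ebb_{\bm{\sigma}}\sup_{f\in\widetilde{\fcal}}\sum_{i=1}^{n}\sigma_{i}f(x_i).
\]
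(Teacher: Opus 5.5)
Your proposal is correct. The paper gives no proof of its own here, only a citation. The form stated (no absolute values, constant $G$ rather than $2G$, no requirement $\psi(0)=0$) is the Ledoux--Talagrand comparison inequality. Your coordinate-by-coordinate replacement, with the pair-swapping trick to remove $|f(x_1)-f'(x_1)|$, is exactly its standard proof.

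Your argument also visibly does not need $\psi(0)=0$. This matters because the paper applies the lemma to general Lipschitz activations $\gamma$ (e.g.\ in Lemma~\ref{lem:rad-square-a}).

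One small tidy-up concerns the finiteness and attainment remarks. Every step in your key claim is an identity or inequality between suprema, so it holds verbatim in $(-\infty,+\infty]$ without attainment or finiteness, and the $\epsilon$-optimal pairs are unnecessary. If you keep the finiteness reduction, it needs one line of justification. Finiteness of $\sup_f\sum_i\sigma_i f(x_i)$ for all $2^n$ sign vectors forces $\{(f(x_1),\ldots,f(x_n)):f\in\widetilde{\fcal}\}$ to be bounded, so all intermediate suprema involving $\psi$ are finite too.
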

The following lemma uses the Efron-Stein Inequality to estimate an $\ell_2$ version of Rademacher complexities.
\begin{lemma}\label{lem:rad-square-a}
 Let $\gamma(\cdot)$ be $G_\gamma$-Lipschitz. For any $a\geq0$ and $\bw^{(0)}\in\rbb^d$, we have
  \[
  \ebb_{\bm{\sigma}}\sup_{\bw:\|\bw-\bw^{(0)}\|_2\leq a}\Big(\sum_{i=1}^{n}\sigma_{i}\big(\gamma(\bx_i^\top\bw)-\gamma(\bx_i^\top\bw^{(0)})\big)\Big)^2\leq 5a^2G^2_\gamma\sum_{i=1}^n\|\bx_i\|_2^2.
  \]
\end{lemma}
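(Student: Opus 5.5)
Write $Z=\sup_{\bw:\|\bw-\bw^{(0)}\|_2\leq a}\sum_{i=1}^{n}\sigma_i f_\bw(\bx_i)$, where $f_\bw(\bx)=\gamma(\bx^\top\bw)-\gamma(\bx^\top\bw^{(0)})$. The plan is to split the second moment as
\[
\ebb_{\bm\sigma}\sup_{\bw}\Big(\sum_{i=1}^{n}\sigma_i f_\bw(\bx_i)\Big)^2=\ebb\Big[\sup_\bw\Big|\sum_i\sigma_i f_\bw(\bx_i)\Big|^2\Big].
\]
We then bound the first moment by contraction and the fluctuation by Lemma~\ref{lem:efron}. Since $f_{\bw^{(0)}}=0$, the supremum of the absolute value can be controlled through the supremum without absolute value. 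Write $\tilde Z=\sup_\bw|\sum_i\sigma_i f_\bw(\bx_i)|$. Because $0$ lies in the class, $\tilde Z\leq \max(Z,Z')$, where $Z'$ is the same supremum with $-\bm\sigma$. Both $Z,Z'\geq0$ and have the same distribution, so $\ebb\tilde Z^2\leq \ebb Z^2+\ebb Z'^2=2\ebb Z^2$. I will instead try to work with $\tilde Z$ directly, to keep the constant at $5$.

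\textbf{Step 1 (first moment).} By Lemma~\ref{lem:rade-cont} applied to $\psi_i(t)=\gamma(t)-\gamma(\bx_i^\top\bw^{(0)})$ (coordinatewise $G_\gamma$-Lipschitz; the shift by a constant is harmless since the contraction lemma holds for index-dependent Lipschitz maps), we get $\ebb Z\leq G_\gamma\ebb\sup_\bw\sum_i\sigma_i\bx_i^\top\bw$. Writing $\bw=\bw^{(0)}+\bu$, the $\bw^{(0)}$ part has zero mean, and $\ebb\sup_{\|\bu\|\leq a}\sum_i\sigma_i\bx_i^\top\bu\leq a(\sum_i\|\bx_i\|_2^2)^{1/2}$ by Cauchy--Schwarz and Jensen. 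Hence $(\ebb Z)^2\leq a^2G_\gamma^2\sum_i\|\bx_i\|_2^2$. For $\tilde Z$, a symmetric argument bounds $\ebb\tilde Z\leq 2aG_\gamma\|\bX\|_F$, giving $(\ebb\tilde Z)^2\le 4a^2G_\gamma^2\sum_i\|\bx_i\|^2$.

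\textbf{Step 2 (variance via Efron--Stein).} Apply Lemma~\ref{lem:efron} to $\tilde Z$ as a function of $\sigma_1,\ldots,\sigma_n$. Flipping $\sigma_i$ changes the sum by $2\sigma_i f_\bw(\bx_i)$, and $|f_\bw(\bx_i)|\leq G_\gamma|\bx_i^\top(\bw-\bw^{(0)})|\leq G_\gamma a\|\bx_i\|_2$. So the supremum changes by at most $2aG_\gamma\|\bx_i\|_2$. Since $\ebb_i$ averages over the two values of $\sigma_i$, $|\tilde Z-\ebb_i\tilde Z|\leq aG_\gamma\|\bx_i\|_2$. Summing gives $\var(\tilde Z)\leq a^2G_\gamma^2\sum_i\|\bx_i\|_2^2$.

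\textbf{Step 3 (combine).} We have $\ebb\tilde Z^2=(\ebb\tilde Z)^2+\var(\tilde Z)\leq 4a^2G_\gamma^2\sum_i\|\bx_i\|_2^2+a^2G_\gamma^2\sum_i\|\bx_i\|_2^2=5a^2G_\gamma^2\sum_i\|\bx_i\|_2^2$, which is the claimed bound.

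The main obstacle is Step 1 for $\tilde Z$, i.e.\ handling the square (equivalently the absolute value) without losing more than a factor $2$ in the mean. I would use $\tilde Z\leq Z+Z'$ with $Z,Z'\geq0$, so $\ebb\tilde Z\leq 2\ebb Z\leq 2aG_\gamma\|\bX\|_F$. I also need care that the contraction lemma is invoked in its version with Lipschitz maps $\psi_i$ depending on $i$.
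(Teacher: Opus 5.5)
Your proposal is correct and follows essentially the same route as the paper. Both arguments apply Efron--Stein to $\widetilde{Z}=\sup_{\bw}|\sum_i\sigma_i(\gamma(\bx_i^\top\bw)-\gamma(\bx_i^\top\bw^{(0)}))|$ with the bounded-difference estimate $|\widetilde{Z}-\ebb_i\widetilde{Z}|\leq aG_\gamma\|\bx_i\|_2$, and both bound $\ebb\widetilde{Z}\leq 2aG_\gamma(\sum_i\|\bx_i\|_2^2)^{1/2}$ by dropping the absolute value at a factor-$2$ cost (you justify this directly via $\widetilde{Z}=\max(Z,Z')\leq Z+Z'$, where the paper cites it as a standard result) followed by contraction, which gives the constant $1+4=5$.
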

\begin{proof}
We define $g:\{+1,-1\}^n\mapsto \rbb$ as
\[
g(\sigma_1,\ldots,\sigma_n)=\sup_{\bw:\|\bw-\bw^{(0)}\|_2\leq a}\Big|\sum_{i=1}^{n}\sigma_{i}\big(\gamma(\bx_i^\top\bw)-\gamma(\bx_i^\top\bw^{(0)})\big)\Big|
\]
and $\widetilde{Z}=g(\sigma_1,\ldots,\sigma_n)$. For simplicity, we simplify $\sup_{\bw:\|\bw-\bw^{(0)}\|_2\leq a}[\cdot]$ as $\sup_{\bw}[\cdot]$ in the following analysis. We know
\begin{align*}
  & \widetilde{Z}-\ebb_k[\widetilde{Z}] = \sup_{\bw}\Big|\sum_{i=1}^{n}\sigma_{i}\big(\gamma(\bx_i^\top\bw)-\gamma(\bx_i^\top\bw^{(0)})\big)\Big| - \ebb_k\Big[\sup_{\bw}\Big|\sum_{i=1}^{n}\sigma_{i}\big(\gamma(\bx_i^\top\bw)-\gamma(\bx_i^\top\bw^{(0)})\big)\Big|\Big]\\
  & = \frac{1}{2}\Big(\sup_{\bw}\Big|\sum_{i=1}^{n}\sigma_{i}\big(\gamma(\bx_i^\top\bw)-\gamma(\bx_i^\top\bw^{(0)})\big)\Big| - \sup_{\bw}\Big|\sum_{i\in[n]:i\neq k}\sigma_{i}\big(\gamma(\bx_i^\top\bw)-\gamma(\bx_i^\top\bw^{(0)})\big)+\big(\gamma(\bx_k^\top\bw)-\gamma(\bx_k^\top\bw^{(0)})\big)\Big|\Big)\\
  & + \frac{1}{2}\Big(\sup_{\bw}\Big|\sum_{i=1}^{n}\sigma_{i}\big(\gamma(\bx_i^\top\bw)-\gamma(\bx_i^\top\bw^{(0)})\big)\Big| - \sup_{\bw}\Big|\sum_{i\in[n]:i\neq k}\sigma_{i}\big(\gamma(\bx_i^\top\bw)-\gamma(\bx_i^\top\bw^{(0)})\big)-\big(\gamma(\bx_k^\top\bw)-\gamma(\bx_k^\top\bw^{(0)})\big)\Big|\Big),
\end{align*}
where $\ebb_k[\cdot]=\ebb[\cdot|\sigma_j:j\neq k]$.
If $\sigma_k=1$, then we know
\begin{align*}
  & \big|\widetilde{Z}-\ebb_{k}[\widetilde{Z}]\big| \\
  & = \frac{1}{2}\Big|\sup_{\bw}\Big|\sum_{i=1}^{n}\sigma_{i}\big(\gamma(\bx_i^\top\bw)-\gamma(\bx_i^\top\bw^{(0)})\big)\Big| - \sup_{\bw}\Big|\sum_{i\in[n]:i\neq k}\sigma_{i}\big(\gamma(\bx_i^\top\bw)-\gamma(\bx_i^\top\bw^{(0)})\big)-\big(\gamma(\bx_k^\top\bw)-\gamma(\bx_k^\top\bw^{(0)})\big)\Big|\Big|\\
  & \leq \frac{1}{2}\sup_{\bw}\bigg|\Big|\sum_{i=1}^{n}\sigma_{i}\big(\gamma(\bx_i^\top\bw)-\gamma(\bx_i^\top\bw^{(0)})\big)\Big|-\Big|\sum_{i\in[n]:i\neq k}\sigma_{i}\big(\gamma(\bx_i^\top\bw)-\gamma(\bx_i^\top\bw^{(0)})\big)-\big(\gamma(\bx_k^\top\bw)-\gamma(\bx_k^\top\bw^{(0)})\big)\Big|\bigg|\\
  & \leq \frac{1}{2}\sup_{\bw}\Big|\gamma(\bx_k^\top\bw)-\gamma(\bx_k^\top\bw^{(0)})+\gamma(\bx_k^\top\bw)-\gamma(\bx_k^\top\bw^{(0)})\Big| \leq {G_\gamma}\sup_{\bw}\big|\bx_k^\top\bw-\bx_k^\top\bw^{(0)}\big|\leq aG_\gamma\|\bx_k\|_2,
\end{align*}
where we have used the Lipschitzness of $\gamma$ and the elementary inequality for any $h,\tilde{h}:\wcal\mapsto\rbb$
\begin{equation}\label{supsup}
\big|\sup_{\bw}h(\bw)-\sup_{\bw}\tilde{h}(\bw)\big| \leq \sup_{\bw}|h(\bw)-\tilde{h}(\bw)|.
\end{equation}
Similarly, we can also show that $\big|\widetilde{Z}-\ebb_{k}[\widetilde{Z}]\big|\leq aG_\gamma\|\bx_k\|_2$ if $\sigma_k=-1$. Therefore, we always have $\big|\widetilde{Z}-\ebb_{k}[\widetilde{Z}]\big|\leq aG_\gamma\|\bx_k\|_2$. We plug this inequality back into Lemma~\ref{lem:efron}, and derive
\[
\ebb\big[\widetilde{Z}^2\big]-\big(\ebb[\widetilde{Z}]\big)^2=\var(\widetilde{Z}) \leq \sum_{k=1}^{n}\ebb_{\bm{\sigma}}\big[(\widetilde{Z}-\ebb_k[\widetilde{Z}])^2\big]\leq a^2G^2_\gamma\sum_{i=1}^n\|\bx_i\|_2^2.
\]
Furthermore, by the standard result $\ebb\sup_{f\in\fcal} |\sum^n_{i=1}\sigma_i f(\bx_i)| \leq 2\ebb\sup_{f\in\{0\}\cup\fcal} \sum^n_{i=1}\sigma_i f(\bx_i)$~\citep{golowich2018size}
and Lemma~\ref{lem:rade-cont}, we know
\begin{align*}
  \ebb[\widetilde{Z}] &= \ebb\sup_{\bw}\Big|\sum_{i=1}^{n}\sigma_{i}\big(\gamma(\bx_i^\top\bw)-\gamma(\bx_i^\top\bw^{(0)})\big)\Big|
  \leq 2\ebb\sup_{\bw}\sum_{i=1}^{n}\sigma_{i}\big(\gamma(\bx_i^\top\bw)-\gamma(\bx_i^\top\bw^{(0)})\big)\\
  & = 2\ebb\sup_{\bw}\sum_{i=1}^{n}\sigma_{i}\gamma(\bx_i^\top\bw) \leq 2G_\gamma\ebb\sup_{\bw}\sum_{i=1}^{n}\sigma_{i}\bx_i^\top\bw \\
  & = 2G_\gamma\ebb\sup_{\bw}\sum_{i=1}^{n}\sigma_{i}\bx_i^\top(\bw -\bw^{(0)}) \leq  2G_\gamma\ebb\sup_{\bw}\Big\|\sum_{i=1}^{n}\sigma_{i}\bx_i\Big\|_2\|\bw -\bw^{(0)}\|_2\\
  & \leq 2aG_\gamma\ebb\Big\|\sum_{i=1}^{n}\sigma_{i}\bx_i\Big\|_2\leq 2aG_\gamma\big(\sum_{i=1}^{n}\|\bx_i\|_2^2\big)^{\frac{1}{2}},
\end{align*}
where we have used the following inequality due to the concavity of $x\mapsto\sqrt{x}$
\begin{equation}\label{rad-a-1}
\ebb_{\bm{\sigma}}\big\|\sum_{i=1}^{n}\sigma_i\bx_i\big\|_2 \leq \Big(\ebb_{\bm{\sigma}}\big\|\sum_{i=1}^{n}\sigma_i\bx_i\big\|_2^2\Big)^{\frac{1}{2}}
=\Big(\ebb_{\bm{\sigma}}\sum_{i=1}^{n}\sum_{j=1}^{n}\langle\sigma_i\bx_i,\sigma_j\bx_j\rangle\Big)^{\frac{1}{2}}=\big(\sum_{i=1}^{n}\|\bx_i\|_2^2\big)^{\frac{1}{2}}.
\end{equation}
We combine the above inequalities together and derive
\[
\ebb\big[\widetilde{Z}^2\big]\leq a^2G^2_\gamma\sum_{i=1}^n\|\bx_i\|_2^2+\Big(2aG_\gamma\big(\sum_{i=1}^{n}\|\bx_i\|_2^2\big)^{\frac{1}{2}}\Big)^2=5a^2G^2_\gamma\sum_{i=1}^n\|\bx_i\|_2^2.
\]
The proof is completed.
\end{proof}

The following lemma gives an estimate of the path-norm.
\begin{lemma}\label{lem:wv-sup}
  Let $\wcal$ and $\vcal$ be defined in Eq.~\eqref{WV}. Then, we have
  \[
  \sup_{\bW\in\wcal,\bV\in\vcal}\kappa(\bW,\bV)=c^{\frac{1}{2}}R_WR_V.
  \]
\end{lemma}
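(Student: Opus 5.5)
We prove the equality as two inequalities. The upper bound is already essentially available from Eq.~\eqref{path-fro}; the lower bound comes from an explicit pair $(\bW,\bV)$ attaining the value.

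\textbf{Upper bound.} Fix $\bW\in\wcal$ and $\bV\in\vcal$. Apply Schwarz's inequality over $j$, and then the elementary bound $\big(\sum_{k=1}^c|v_{kj}|\big)^2\leq c\sum_{k=1}^c v_{kj}^2$, exactly as in Eq.~\eqref{path-fro}. This gives
\[
\kappa(\bW,\bV)\leq c^{\frac{1}{2}}\|\bV\|_F\|\bW-\bW^{(0)}\|_F\leq c^{\frac{1}{2}}R_WR_V .
\]
Taking the supremum over $\wcal\times\vcal$ yields $\sup\kappa\leq c^{\frac{1}{2}}R_WR_V$.

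\textbf{Lower bound.} We make both inequalities in the chain tight, which is the only point needing care.
\begin{itemize}
\item The first inequality ($\sum_k|v_{kj}|\leq c^{1/2}(\sum_k v_{kj}^2)^{1/2}$) is tight when all entries within a column of $\bV$ have equal modulus.
\item The Schwarz step over $j$ is tight when $\|\bw_j-\bw_j^{(0)}\|_2$ is proportional to $\sum_k|v_{kj}|$.
\end{itemize}
The simplest choice concentrates everything on a single neuron. Take any unit vector $\bu\in\rbb^d$ and set $\bw_1=\bw_1^{(0)}+R_W\bu$, with $\bw_j=\bw_j^{(0)}$ for $j\geq2$. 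Then $\|\bW-\bW^{(0)}\|_F=R_W$, so $\bW\in\wcal$.

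Set $v_{k1}=R_V/\sqrt{c}$ for all $k\in[c]$, and all other entries of $\bV$ to zero. Then $\|\bV\|_F=R_V$, so $\bV\in\vcal$. Direct computation gives
\[
\kappa(\bW,\bV)=\sum_{k=1}^c\frac{R_V}{\sqrt{c}}R_W=c^{\frac{1}{2}}R_WR_V .
\]
Hence the supremum is attained and the equality follows.

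The degenerate cases $R_W=0$ or $R_V=0$ are trivial, since both sides equal zero.
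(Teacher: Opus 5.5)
Your proof is correct and follows the paper's argument: the upper bound is the same Schwarz chain as in Eq.~\eqref{path-fro}, and the lower bound exhibits an explicit pair in $\wcal\times\vcal$ attaining $c^{\frac{1}{2}}R_WR_V$. The only difference is the extremizer. You put everything on one neuron ($v_{k1}=R_V/\sqrt{c}$, $\|\bw_1-\bw_1^{(0)}\|_2=R_W$), whereas the paper spreads it uniformly ($v_{kj}=R_V/\sqrt{cm}$, $\|\bw_j-\bw_j^{(0)}\|_2=R_W/\sqrt{m}$ for all $j$); both are equality cases of the same chain.
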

\begin{proof}
By the Schwarz's inequality, we derive
\begin{align}
  & \sum_{j=1}^{m}\sum_{k=1}^c|v_{kj}|\|\bw_j-\bw_j^{(0)}\|_2 \leq \Big(\sum_{j=1}^{m}\big(\sum_{k=1}^c|v_{kj}|\big)^2\Big)^{\frac{1}{2}}\Big(\sum_{j=1}^{m}\|\bw_j-\bw_j^{(0)}\|_2^2\Big)^{\frac{1}{2}}\notag\\
  & \leq \Big(c\sum_{j=1}^{m}\sum_{k=1}^cv^2_{kj}\Big)^{\frac{1}{2}}\|\bW-\bW^{(0)}\|_F =c^{\frac{1}{2}}\|\bW-\bW^{(0)}\|_F\|\bV\|_F\leq c^{\frac{1}{2}}R_WR_V.\label{wv-sup-1}
\end{align}
Furthermore, we can choose $\bV$ with $v_{kj}=R_V/\sqrt{cm}$ for all $k\in[c],j\in[m]$, and $\bW$ with $\|\bw_j-\bw_j^{(0)}\|_2=R_W/\sqrt{m}$ for all $j\in[m]$. It is clear that $\bW\in\wcal$ and $\bV\in\vcal$. Furthermore, we have
\[
\sum_{j=1}^{m}\sum_{k=1}^c|v_{kj}|\|\bw_j-\bw_j^{(0)}\|_2 = R_V\sum_{j=1}^{m}\frac{\sqrt{c}}{\sqrt{m}}\frac{R_W}{\sqrt{m}}=c^{\frac{1}{2}}R_WR_V.
\]
We combine the above two inequalities and get the stated bound.
\end{proof}

The following elementary lemma controls the expectation of a random variable in terms of its tail probability. We present the proof for completeness.
\begin{lemma}\label{lem:X-exp}
  Let $X$ be a random variable. Then, $\ebb[X]\leq \int_0^\infty \mathrm{Pr}\{X> a\}\mathrm{d}a$.
\end{lemma}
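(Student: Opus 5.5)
The plan is to compare $X$ with its positive part $X_+:=\max\{X,0\}$ and then apply the layer-cake representation of a nonnegative random variable. First, $X\leq X_+$ pointwise, so $\ebb[X]\leq \ebb[X_+]$ by monotonicity of expectation. This requires $\ebb[X]$ to be well defined, which we assume implicitly as the statement does. If $\ebb[X_+]=\infty$, the right-hand side is also $+\infty$ (by the identity below), and there is nothing to prove. So the inequality reduces to showing $\ebb[X_+]=\int_0^\infty \mathrm{Pr}\{X>a\}\mathrm{d}a$.

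For the layer-cake identity, fix an outcome and use the pointwise representation
\[
X_+=\int_0^{X_+}\mathrm{d}a=\int_0^\infty \ibb_{[X>a]}\,\mathrm{d}a .
\]
This holds because, for $a\geq0$, we have $X>a$ if and only if $X_+>a$; and if $X\leq 0$ both sides vanish. Taking expectations gives
\[
\ebb[X_+]=\ebb\int_0^\infty \ibb_{[X>a]}\,\mathrm{d}a=\int_0^\infty \ebb\big[\ibb_{[X>a]}\big]\mathrm{d}a=\int_0^\infty \mathrm{Pr}\{X>a\}\,\mathrm{d}a .
\]
The interchange of expectation and integral is justified by Tonelli's theorem, since the integrand $(\omega,a)\mapsto \ibb_{[X(\omega)>a]}$ is nonnegative and jointly measurable, being the indicator of the measurable set $\{(\omega,a):X(\omega)-a>0\}$.

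Combining the two steps gives $\ebb[X]\leq\ebb[X_+]=\int_0^\infty\mathrm{Pr}\{X>a\}\mathrm{d}a$. The only point needing care is the Tonelli interchange: nonnegativity makes it valid without any integrability assumption, and this is also what covers the case where $X_+$ is not integrable. Everything else is elementary.
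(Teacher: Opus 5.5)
Your proposal is correct and follows the paper's proof. Both pass to the positive part $\max\{X,0\}$, use $\ebb[X]\leq\ebb[\max\{X,0\}]$ and the layer-cake formula for nonnegative variables, and note that $\mathrm{Pr}\{\max\{X,0\}>a\}=\mathrm{Pr}\{X>a\}$ for $a\geq0$; the only difference is that you derive the layer-cake identity via Tonelli's theorem, whereas the paper cites it as a standard result.
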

\begin{proof}
  Define $\widetilde{X}=\max\{X,0\}$. Then, by the standard result on the expectation of a nonnegative random variable, we know
  \[
  \ebb[X]\leq \ebb[\widetilde{X}]=\int_0^\infty \mathrm{Pr}\{\widetilde{X}> a\}\mathrm{d}a.
  \]
  It is clear that $\mathrm{Pr}\{\max\{X,0\}> a\}=\mathrm{Pr}\{X> a\}$ for any $a\geq0$. The stated bound then follows directly.
\end{proof}

\subsection{Multiple Rademacher Complexity}
Our analysis will encounter a variant of Rademacher complexity which we call the multiple Rademacher complexity.
The difference is that there is a maximum over $k\in[c]$.
If $c=1$, it is clear that it recovers the standard Rademacher complexity.
\begin{definition}[Multiple Rademacher Complexity]
Let $\gcal$ be a class of real-valued functions. Let $S=\{\bz_1,\ldots,\bz_n\}$, and $\sigma_{ik}$ be independent Rademacher variables for $i\in[n],k\in[c]$. Then, we define the multiple Rademacher complexity as
\[
\mathfrak{R}^c_{S}(\gcal)=\frac{1}{n}\ebb_{\bm{\sigma}}\sup_{g\in\gcal}\max_{k\in[c]}\sum_{i=1}^{n}\sigma_{ik}g(\bz_i).
\]
\end{definition}
The following lemma controls the multiple Rademacher complexity for a finite union of  function classes. The case with $c=1$ has been considered in \citep{maurer2014inequality}. The proof follows directly from \citep{maurer2014inequality}.
\begin{lemma}\label{lem:mauer}
Let $S=\{\bz_1,\ldots,\bz_n\}$ and $\fcal_j$ be a class of functions from $\zcal$ to $\rbb$ for each $j\in[m]$. Then
\[
\mathfrak{R}_S^c\big(\cup_{j\in[m]}\fcal_j\big)\leq \max_{j\in[m]}\mathfrak{R}_S(\fcal_j)+2\sqrt{2}\Big(\sup_{f\in\cup_j\fcal_j}\frac{1}{n}\sum_{i=1}^{n}f^2(\bz_i)\Big)^{\frac{1}{2}}\Big(\frac{\log (mc)}{n}\Big)^{\frac{1}{2}}
\Big(1+\frac{1}{2\log(mc)}\Big).
\]
\end{lemma}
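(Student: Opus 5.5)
The plan is to follow Maurer's argument for a finite union of classes, with the extra maximum over $k\in[c]$ absorbed into the union bound.

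Write $Z_{j,k}:=\sup_{f\in\fcal_j}\frac1n\sum_{i=1}^n\sigma_{ik}f(\bz_i)$ for $j\in[m]$, $k\in[c]$. Interchanging the supremum over the union with the maximum over $k$ gives
\[
n\mathfrak{R}_S^c\big(\cup_j\fcal_j\big)=\ebb_{\bm{\sigma}}\max_{j\in[m],k\in[c]}nZ_{j,k}.
\]
Each $Z_{j,k}$ is a function of the single row $(\sigma_{1k},\ldots,\sigma_{nk})$, so it has the law of the usual empirical Rademacher supremum. Hence $\ebb Z_{j,k}=\mathfrak{R}_S(\fcal_j)\le M:=\max_j\mathfrak{R}_S(\fcal_j)$.

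\textbf{Concentration of each $Z_{j,k}$.} Set $s^2:=\sup_{f\in\cup_j\fcal_j}\frac1n\sum_i f^2(\bz_i)$. As a function of $\sigma_{\cdot k}$, $Z_{j,k}$ is convex and Lipschitz with respect to the Euclidean norm. The Lipschitz constant is at most
\[
\sup_f\frac1n\Big(\sum_i f^2(\bz_i)\Big)^{1/2}=\frac{s}{\sqrt n},
\]
by Cauchy--Schwarz together with the inequality \eqref{supsup}. One could also use the bounded-difference inequality with differences $2|f(\bz_i)|/n$; this gives the variance proxy $\sup_f\sum_i 4f^2(\bz_i)/n^2=4s^2/n$. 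Either route yields a sub-Gaussian upper tail of the form
\[
\Pr\{Z_{j,k}-\ebb Z_{j,k}>t\}\le\exp\big(-nt^2/(8s^2)\big).
\]
I will pin down the constant to match $2\sqrt2$. This is where the route matters: the constant $2\sqrt{2}$ suggests a variance proxy of $4s^2/n$ through bounded differences, which gives $\exp(-t^2n/(8s^2))$.

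\textbf{Integrating the tail.} Use Lemma~\ref{lem:X-exp} on $X=\max_{j,k}Z_{j,k}-M$, together with a union bound over the $mc$ pairs. For any $u\ge0$,
\[
\ebb X\le u+\int_u^\infty mc\,e^{-nt^2/(8s^2)}\,dt\le u+mc\frac{4s^2}{nu}e^{-nu^2/(8s^2)},
\]
where the last step is the Gaussian tail estimate $\int_u^\infty e^{-\alpha t^2}dt\le e^{-\alpha u^2}/(2\alpha u)$. Choose $u=2\sqrt2\,s\sqrt{\log(mc)/n}$, so that $mc\,e^{-nu^2/(8s^2)}=1$. The second term then becomes
\[
\frac{4s^2}{nu}=u\cdot\frac{1}{2\log(mc)},
\]
which produces exactly the factor $1+\frac1{2\log(mc)}$. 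Adding $M$ back gives the stated bound.

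The main obstacle is getting the sub-Gaussian constant to match $2\sqrt2$. This needs a one-sided bounded-difference (McDiarmid) bound in which only $\sup_f\sum_i f(\bz_i)^2$ enters, rather than $\sum_i\sup_f f(\bz_i)^2$. I will justify this by invoking the sharper version of the bounded-difference inequality used by \citep{maurer2014inequality}. That version is a sub-Gaussian bound for suprema of Rademacher processes, with the weak variance $\sup_f\sum_i(2f(\bz_i)/n)^2$, obtained via the entropy method or Talagrand's convex-distance inequality. Since the paper states that the proof follows \citep{maurer2014inequality} directly, I will cite that concentration step as is. The only new ingredient I add is the union over the $c$ independent rows, which merely replaces $m$ by $mc$.
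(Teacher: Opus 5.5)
Your proposal is correct and follows the paper's proof step for step. Both use the same sub-Gaussian tail $\exp(-nt^2/(8s^2))$ for each $Z_{j,k}$, taken from Theorem~4 of \citep{maurer2014inequality}, followed by a union bound over the $mc$ pairs, Lemma~\ref{lem:X-exp} with the Mills-type tail estimate, and the same threshold $u=2\sqrt{2}\,s\sqrt{\log(mc)/n}$, which produces the factor $1+\frac{1}{2\log(mc)}$. As a side remark, your convex-Lipschitz observation already gives the constant $8$ directly: rescaling $\{\pm1\}$ to $\{0,1\}$ doubles the Lipschitz constant $s/\sqrt{n}$, and the one-sided bound $e^{-t^2/(2L^2)}$ for separately convex $L$-Lipschitz functions of independent $[0,1]$-valued variables then yields $\exp(-nt^2/(8s^2))$, so you do not need a weak-variance form of McDiarmid's inequality.
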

\begin{proof}
For simplicity, we define $B=\big(\sup_{f\in\cup_j\fcal_j}\frac{1}{n}\sum_{i=1}^{n}f^2(\bz_i)\big)^{\frac{1}{2}}$. For any $j\in[m],k\in[c]$, define
\[
F_{jk}=\frac{1}{n}\sup_{f\in\fcal_j}\sum_{i=1}^{n}\sigma_{ik}f(\bz_i).
\]
Then, according to Theorem 4 in \citep{maurer2014inequality}, we have for any $s>0$
\[
\mathrm{Pr}\Big\{F_{jk}-\ebb_{\bm{\sigma}}[F_{j,k}]\geq s\Big\}\leq \exp\Big(-\frac{ns^2}{8B^2}\Big).
\]
A union bound shows that
\begin{align*}
\mathrm{Pr}\Big\{\max_{j\in[m],k\in[c]}F_{jk}-\max_{j\in[m],k\in[c]}\ebb_{\bm{\sigma}}[F_{j,k}]\geq s\Big\} \leq
mc\cdot\exp\Big(-\frac{ns^2}{8B^2}\Big).
\end{align*}
It then follows from Lemma~\ref{lem:X-exp} that
\begin{align*}
  & \ebb\big[\max_{j\in[m],k\in[c]}F_{jk}\big]  \leq \int_0^\infty \mathrm{Pr}\Big\{\max_{j\in[m],k\in[c]}F_{jk}>a\Big\}\mathrm{d}a \\
  & = \int_0^{\max_{j\in[m],k\in[c]}\ebb_{\bm{\sigma}}[F_{j,k}]+\delta}\mathrm{Pr}\Big\{\max_{j\in[m],k\in[c]}F_{jk}>a\Big\}\mathrm{d}a+\int^\infty_{\max_{j\in[m],k\in[c]}\ebb_{\bm{\sigma}}[F_{j,k}]+\delta}\mathrm{Pr}\Big\{\max_{j\in[m],k\in[c]}F_{jk}>a\Big\}\mathrm{d}a\\
  & \leq
  \max_{j\in[m],k\in[c]}\ebb_{\bm{\sigma}}[F_{j,k}]+\delta+\int^\infty_{\delta}\mathrm{Pr}\Big\{\max_{j\in[m],k\in[c]}F_{jk}>\max_{j\in[m],k\in[c]}\ebb_{\bm{\sigma}}[F_{j,k}]+a\Big\}\mathrm{d}a\\
  & \leq \max_{j\in[m],k\in[c]}\ebb_{\bm{\sigma}}[F_{j,k}]+\delta+mc\int^\infty_{\delta}\exp\Big(-\frac{na^2}{8B^2}\Big)\mathrm{d}a,
\end{align*}
where we have used the inequality that the probability of any event is no larger than $1$. It was shown that $\int_\delta^\infty\exp(-a^2/(2b))\mathrm{d}a\leq\frac{b}{\delta}\exp(-\delta^2/(2b))$ for any $b\geq0$ (Mill's inequality)~\citep{maurer2014inequality}. We apply this inequality and get
\[
\ebb\big[\max_{j\in[m],k\in[c]}F_{jk}\big]\leq  \max_{j\in[m],k\in[c]}\ebb_{\bm{\sigma}}[F_{j,k}]+\delta+\frac{mc\cdot 4B^2}{n\delta}\exp\Big(-\frac{\delta^2n}{8B^2}\Big).
\]
We choose $\delta=\frac{2B(2\log(mc))^{\frac{1}{2}}}{\sqrt{n}}$ and get
\begin{align*}
  \frac{mc\cdot 4B^2}{n\delta}\exp\Big(-\frac{\delta^2n}{8B^2}\Big) &= \frac{mc\cdot 4B^2}{\sqrt{2n}2B\log^{\frac{1}{2}}(mc)}\exp\Big(-\frac{8B^2\log(mc)n}{8B^2n}\Big)\\
  & = \frac{\sqrt{2}mcB}{\sqrt{n\log(mc)}}\exp\big(-\log(mc)\big) = \frac{\sqrt{2}B}{\sqrt{n\log(mc)}}.
\end{align*}
We combine the above two inequalities and get
\[
\ebb\big[\max_{j\in[m],k\in[c]}F_{jk}\big]\leq  \max_{j\in[m],k\in[c]}\ebb_{\bm{\sigma}}[F_{j,k}]+\frac{2\sqrt{2}B\log^{\frac{1}{2}}(mc)}{\sqrt{n}}+\frac{\sqrt{2}B}{\sqrt{n\log(mc)}}.
\]
The proof is completed.
\end{proof}
\subsection{Proof of on Rademacher Complexity Bounds\label{sec:proof-rad-snn-g}}
Before proving Theorem~\ref{thm:rad-snn-g}, we first introduce a lemma.
\begin{lemma}\label{lem:rad-snn-26}
Let $R_V\geq0$ and $\sigma_{ik}$ be independent Rademacher variables. We have
\begin{equation}\label{rad-snn-26-a}
\ebb_{\bm{\sigma}}\sup_{\bV:\|\bV\|_F\leq R_V}\sum_{i=1}^{n}\sum_{k=1}^c\sum_{j=1}^{m}v_{kj}\sigma_{ik}\gamma(\bx_i^\top\bw_j^{(0)})\leq
R_V\Big(c\sum_{j=1}^{m}\sum_{i=1}^{n}\gamma^2(\bx_i^\top\bw_j^{(0)})\Big)^{\frac{1}{2}}.
\end{equation}
\end{lemma}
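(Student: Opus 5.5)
The plan is to treat the left-hand side of Eq.~\eqref{rad-snn-26-a} as the expectation of a linear functional of $\bV$ over the Frobenius ball, so the supremum can be computed in closed form. I will introduce the matrix $\mathbf{A}\in\rbb^{c\times m}$ with entries
\[
A_{kj}=\sum_{i=1}^{n}\sigma_{ik}\gamma(\bx_i^\top\bw_j^{(0)}).
\]
Exchanging the order of the finite sums, the objective becomes $\sum_{k=1}^{c}\sum_{j=1}^{m}v_{kj}A_{kj}=\langle \bV,\mathbf{A}\rangle$, the Frobenius inner product. By the Cauchy--Schwarz inequality,
\[
\sup_{\|\bV\|_F\le R_V}\langle\bV,\mathbf{A}\rangle=R_V\|\mathbf{A}\|_F.
\]
The equality is attained at $\bV=R_V\mathbf{A}/\|\mathbf{A}\|_F$, although only the upper bound is needed.

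Next, I will use Jensen's inequality for the concave map $x\mapsto\sqrt{x}$ to get
\[
\ebb_{\bm\sigma}\|\mathbf{A}\|_F\le\big(\ebb_{\bm\sigma}\|\mathbf{A}\|_F^2\big)^{1/2}.
\]
This is the same step as Eq.~\eqref{rad-a-1}.

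I then expand $\ebb_{\bm\sigma}A_{kj}^2=\sum_{i,i'}\ebb[\sigma_{ik}\sigma_{i'k}]\gamma(\bx_i^\top\bw_j^{(0)})\gamma(\bx_{i'}^\top\bw_j^{(0)})$. Independence of the Rademacher variables kills the cross terms, leaving $\sum_i\gamma^2(\bx_i^\top\bw_j^{(0)})$. This quantity does not depend on $k$, so summing over $k\in[c]$ gives
\[
\ebb_{\bm\sigma}\|\mathbf{A}\|_F^2=c\sum_{j=1}^{m}\sum_{i=1}^{n}\gamma^2(\bx_i^\top\bw_j^{(0)}).
\]
Combining the three displays yields the claimed bound.

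There is no real obstacle. The only point needing care is the bookkeeping that the factor $c$ arises because the same $\gamma(\bx_i^\top\bw_j^{(0)})$ appears for every output coordinate $k$, each with its own independent sign $\sigma_{ik}$.
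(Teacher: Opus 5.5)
Your proposal is correct and follows essentially the same route as the paper. Both arguments compute the supremum over the Frobenius ball as $R_V$ times the Frobenius norm of the random coefficient matrix, apply Jensen's inequality for $x\mapsto\sqrt{x}$, and use independence of the $\sigma_{ik}$ to kill the cross terms, with the factor $c$ coming from the $c$ independent sign columns.
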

\begin{proof}
By the identity $\sup_{\mathbf{a}:\|\mathbf{a}\|_2\leq R}\sum_{i=1}^{n}a_ib_i=(\sum_{i=1}^{n}b_i^2)^{\frac{1}{2}}R$, we know
\begin{align}
  & \ebb_{\bm{\sigma}}\sup_{\bV}\sum_{i=1}^{n}\sum_{k=1}^c\sum_{j=1}^{m}v_{kj}\sigma_{ik}\gamma(\bx_i^\top\bw_j^{(0)}) = \ebb_{\bm{\sigma}}\sup_{\bV}\sum_{k=1}^c\sum_{j=1}^{m}v_{kj}\sum_{i=1}^{n}\sigma_{ik}\gamma(\bx_i^\top\bw_j^{(0)})\notag\\
  & = \ebb_{\bm{\sigma}}\sup_{\bV}\big(\sum_{k=1}^c\sum_{j=1}^{m}v^2_{kj}\big)^{\frac{1}{2}}\Big(\sum_{k=1}^c\sum_{j=1}^{m}\Big(\sum_{i=1}^{n}\sigma_{ik}\gamma(\bx_i^\top\bw_j^{(0)})\Big)^2\Big)^{\frac{1}{2}}\label{rad-snn-26-1}\\
  & \leq R_V\Big(\sum_{k=1}^c\sum_{j=1}^{m}\ebb_{\bm{\sigma}}\Big(\sum_{i=1}^{n}\sigma_{ik}\gamma(\bx_i^\top\bw_j^{(0)})\Big)^2\Big)^{\frac{1}{2}},\notag
\end{align}
where we have used the concavity of $t\mapsto\sqrt{t}$. By the independency of $\sigma_{ik},i\in[n],k\in[c]$, we know $\ebb[\sigma_{ik}\sigma_{i'k}]=0$ if $i\neq i'$. It then follows the following identity for any $k\in[c],j\in[m]$
\[
\ebb_{\bm{\sigma}}\Big[\Big(\sum_{i=1}^{n}\sigma_{ik}\gamma(\bx_i^\top\bw_j^{(0)})\Big)^2\Big]
= \ebb_{\bm{\sigma}}\Big[\sum_{i=1}^{n}\sum_{i'=1}^{n}\sigma_{ik}\sigma_{i'k}\gamma(\bx_i^\top\bw_j^{(0)})\gamma(\bx_k^\top\bw_j^{(0)})\Big]
= \sum_{i=1}^{n}\gamma^2(\bx_i^\top\bw_j^{(0)}).
\]
The stated bound then follows directly by combining the above two inequalities together.
\end{proof}
\begin{proof}[Proof of Theorem~\ref{thm:rad-snn-g}]
Note for $g=\Psi_{\bW,\bV}$, we have $g_k(\bx)=\sum_{j\in[m]}v_{kj}\gamma(\bx^\top\bw_j)$. Then, by the definition of vector-valued Rademacher complexity, we know
\[
\mathfrak{R}_{S}(\gcal)=\frac{1}{n}\ebb_{\bm{\sigma}}\sup_{\bW,\bV}\sum_{i=1}^{n}\sum_{k=1}^c\sum_{j=1}^{m}\sigma_{ik}v_{kj}\gamma(\bx_i^\top\bw_j).
\]
Let $a\in(0,R_W)$ be a real number to be determined later.
We control the Rademacher complexity as follows
\begin{align}
  & n\mathfrak{R}_{S}(\gcal)-\ebb_{\bm{\sigma}}\sup_{\bV}\sum_{i=1}^{n}\sum_{k=1}^c\sum_{j=1}^{m}v_{kj}\sigma_{ik}\gamma(\bx_i^\top\bw_j^{(0)})
  \leq \ebb_{\bm{\sigma}}\sup_{\bW,\bV}\sum_{i=1}^{n}\sum_{k=1}^c\sum_{j=1}^{m}v_{kj}\sigma_{ik}\big(\gamma(\bx_i^\top\bw_j)-\gamma(\bx_i^\top\bw_j^{(0)})\big)\notag\\
  & = \ebb_{\bm{\sigma}}\sup_{\bW,\bV}\sum_{j=1}^{m}\sum_{k=1}^cv_{kj}\sum_{i=1}^{n}\big(\ibb_{[\|\bw_j-\bw_j^{(0)}\|_2\leq a]}+\ibb_{[\|\bw_j-\bw_j^{(0)}\|_2> a]}\big)\sigma_{ik}\big(\gamma(\bx_i^\top\bw_j)-\gamma(\bx_i^\top\bw_j^{(0)})\big)\notag\\
  & \leq \ebb_{\bm{\sigma}}\sup_{\bW,\bV}\sum_{j=1}^{m}\sum_{k=1}^cv_{kj}\sum_{i=1}^{n}\ibb_{[\|\bw_j-\bw_j^{(0)}\|_2\leq a]}\sigma_{ik}\big(\gamma(\bx_i^\top\bw_j)-\gamma(\bx_i^\top\bw_j^{(0)})\big)\notag\\
  & +\ebb_{\bm{\sigma}}\sup_{\bW,\bV}\sum_{j=1}^{m}\sum_{k=1}^cv_{kj}\sum_{i=1}^{n}\ibb_{[\|\bw_j-\bw_j^{(0)}\|_2> a]}\sigma_{ik}\big(\gamma(\bx_i^\top\bw_j)-\gamma(\bx_i^\top\bw_j^{(0)})\big),\label{rad-snn-1}
  \end{align}
  where we have used the subadditivity of supremum.
For the first term, by Schwarz's inequality and the concavity of $x\mapsto\sqrt{x}$, we know
\begin{align}
  & \ebb_{\bm{\sigma}}\sup_{\bW,\bV}\sum_{j=1}^{m}\sum_{k=1}^cv_{kj}\sum_{i=1}^{n}\ibb_{[\|\bw_j-\bw_j^{(0)}\|_2\leq a]}\sigma_{ik}\big(\gamma(\bx_i^\top\bw_j)-\gamma(\bx_i^\top\bw_j^{(0)})\big)\notag\\
  & \leq \ebb_{\bm{\sigma}}\sup_{\bW,\bV}\Big(\sum_{j=1}^{m}\sum_{k=1}^cv_{kj}^2\Big)^{\frac{1}{2}}\Big(\sum_{j=1}^{m}\sum_{k=1}^c\Big(\sum_{i=1}^{n}\ibb_{[\|\bw_j-\bw_j^{(0)}\|_2\leq a]}\sigma_{ik}\big(\gamma(\bx_i^\top\bw_j)-\gamma(\bx_i^\top\bw_j^{(0)})\big)\Big)^2\Big)^{\frac{1}{2}}\notag\\
  & \leq R_V\ebb_{\bm{\sigma}}\sup_{\bW}\Big(\sum_{j=1}^{m}\ibb_{[\|\bw_j-\bw_j^{(0)}\|_2\leq a]}\sum_{k=1}^c\Big(\sum_{i=1}^{n}\sigma_{ik}\big(\gamma(\bx_i^\top\bw_j)-\gamma(\bx_i^\top\bw_j^{(0)})\big)\Big)^2\Big)^{\frac{1}{2}}\notag\\
  & \leq R_V\Big(\sum_{j=1}^{m}\ebb_{\bm{\sigma}}\sup_{\bW}\ibb_{[\|\bw_j-\bw_j^{(0)}\|_2\leq a]}\sum_{k=1}^c\Big(\sum_{i=1}^{n}\sigma_{ik}\big(\gamma(\bx_i^\top\bw_j)-\gamma(\bx_i^\top\bw_j^{(0)})\big)\Big)^2\Big)^{\frac{1}{2}}\notag\\
  & \leq R_V\Big(\sum_{j=1}^{m}\sum_{k=1}^c\ebb_{\bm{\sigma}}\sup_{\bw:\|\bw-\bw_j^{(0)}\|_2\leq a}\Big(\sum_{i=1}^{n}\sigma_{ik}\big(\gamma(\bx_i^\top\bw)-\gamma(\bx_i^\top\bw_j^{(0)})\big)\Big)^2\Big)^{\frac{1}{2}}.\notag
\end{align}
It then follows from Lemma~\ref{lem:rad-square-a}  that
\begin{align}
  & \ebb_{\bm{\sigma}}\sup_{\bW,\bV}\sum_{j=1}^{m}\sum_{k=1}^cv_{kj}\sum_{i=1}^{n}\ibb_{[\|\bw_j-\bw_j^{(0)}\|_2\leq a]}\sigma_{ik}\big(\gamma(\bx_i^\top\bw_j)-\gamma(\bx_i^\top\bw_j^{(0)})\big)\notag\\
  & \leq R_V\Big(mc\cdot 5a^2G^2_\gamma\sum_{i=1}^n\|\bx_i\|_2^2\Big)^{\frac{1}{2}} = (5cm)^{\frac{1}{2}}aR_VG_\gamma\big(\sum_{i=1}^n\|\bx_i\|_2^2\big)^{\frac{1}{2}}.\label{rad-snn-2}
\end{align}
We now consider the second term in the decomposition~\eqref{rad-snn-1}, which can be bounded as follows
\begin{align}
  & \ebb_{\bm{\sigma}}\sup_{\bW,\bV}\sum_{j=1}^{m}\sum_{k=1}^cv_{kj}\sum_{i=1}^{n}\ibb_{[\|\bw_j-\bw_j^{(0)}\|_2> a]}\sigma_{ik}\big(\gamma(\bx_i^\top\bw_j)-\gamma(\bx_i^\top\bw_j^{(0)})\big) \notag\\
  & = \ebb_{\bm{\sigma}}\sup_{\bW,\bV}\sum_{j=1}^{m}\sum_{k=1}^cv_{kj}\|\bw_j-\bw_j^{(0)}\|_2\ibb_{[\|\bw_j-\bw_j^{(0)}\|_2> a]}\sum_{i=1}^{n}\sigma_{ik}\frac{\gamma(\bx_i^\top\bw_j)-\gamma(\bx_i^\top\bw_j^{(0)})}{\|\bw_j-\bw_j^{(0)}\|_2} \notag\\
  & \leq \ebb_{\bm{\sigma}}\sup_{\bW,\bV}\sum_{j=1}^{m}\sum_{k=1}^c|v_{kj}|\|\bw_j-\bw_j^{(0)}\|_2\max_{j\in[m]}\max_{k\in[c]}\ibb_{[\|\bw_j-\bw_j^{(0)}\|_2> a]}\Big|\sum_{i=1}^{n}\sigma_{ik}\frac{\gamma(\bx_i^\top\bw_j)-\gamma(\bx_i^\top\bw_j^{(0)})}{\|\bw_j-\bw_j^{(0)}\|_2}\Big|\notag\\
  & \leq \sup_{\bW,\bV}\kappa(\bW,\bV)\ebb_{\bm{\sigma}}\sup_{\bW}\max_{j\in[m]}\max_{k\in[c]}\ibb_{[\|\bw_j-\bw_j^{(0)}\|_2> a]}\Big|\sum_{i=1}^{n}\sigma_{ik}\frac{\gamma(\bx_i^\top\bw_j)-\gamma(\bx_i^\top\bw_j^{(0)})}{\|\bw_j-\bw_j^{(0)}\|_2}\Big|\notag\\
  & \leq  \Big(\sup_{\bW,\bV}\kappa(\bW,\bV)\Big)\ebb_{\bm{\sigma}}\max_{j\in[m]}\max_{k'\in[c]}\sup_{\bw\in\rbb^{d}:a<\|\bw-\bw_j^{(0)}\|_2\leq R_W}\Big|\sum_{i=1}^{n}\sigma_{ik'}\frac{\gamma(\bx_i^\top\bw_j)-\gamma(\bx_i^\top\bw_j^{(0)})}{\|\bw_j-\bw_j^{(0)}\|_2}\Big|,\label{rad-snn-33}
\end{align}
where we have used the inequality $\|\bw_j-\bw_j^{(0)}\|_2\leq \|\bW-\bW^{(0)}\|_F\leq R_W$.
Define $r_k=2^{k-1}a$ for $k=1,\ldots,K$, where $K=1+\lceil\log_2(R_W/a)\rceil$. Then, it is clear that $r_K\geq R_W$. For any $k\in[K]$ and $j\in[m]$, introduce
\begin{gather*}
\hcal_{k,j}=\Big\{\bx\mapsto \big(\gamma(\bx^\top\bw)-\gamma(\bx^\top\bw_j^{(0)})\big)/\|\bw-\bw_j^{(0)}\|_2:\bw\in\rbb^{d},
r_k<\|\bw-\bw_j^{(0)}\|_2\leq r_{k+1}\Big\},\\
\widetilde{\hcal}_{k,j}=\Big\{\bx\mapsto -\big(\gamma(\bx^\top\bw)-\gamma(\bx^\top\bw_j^{(0)})\big)/\|\bw-\bw_j^{(0)}\|_2:\bw\in\rbb^{d},
r_k<\|\bw-\bw_j^{(0)}\|_2\leq r_{k+1}\Big\}.
\end{gather*}
It then follows from Eq.~\eqref{rad-snn-33} and the definition of multiple Rademacher complexity that
\begin{multline}\label{rad-snn-3}
  \frac{1}{n}\ebb_{\bm{\sigma}}\sup_{\bW,\bV}\sum_{j=1}^{m}\sum_{k=1}^cv_{kj}\sum_{i=1}^{n}\ibb_{[\|\bw_j-\bw_j^{(0)}\|_2> a]}\sigma_{ik}\big(\gamma(\bx_i^\top\bw_j)-\gamma(\bx_i^\top\bw_j^{(0)})\big)\\
  \leq \sup_{\bW,\bV}\kappa(\bW,\bV)\cdot\mathfrak{R}_S^c\Big(\cup_{k\in[K],j\in[m]}\big(\hcal_{k,j}\cup\widetilde{\hcal}_{k,j}\big)\Big).
\end{multline}
For any $h\in\cup_{k\in[K],j\in[m]}(\hcal_{k,j}\cup\widetilde{\hcal}_{k,j})$ indexed by $\bw$, by the $G_\gamma$-Lipschitzness of $\gamma$, we know
\begin{align}
\frac{1}{n}\sum_{i=1}^{n}h^2(\bx_i)&
=\frac{1}{n}\sum_{i=1}^{n}\Big(\gamma(\bx_i^\top\bw)-\gamma(\bx_i^\top\bw_j^{(0)})\Big)^2/\|\bw-\bw_j^{(0)}\|_2^2
 \leq \frac{G_\gamma^2}{n}\sum_{i=1}^{n}\big(\bx_i^\top(\bw-\bw_j^{(0)})\big)^2/\|\bw-\bw_j^{(0)}\|_2^2\notag\\
& = \frac{G_\gamma^2}{n}(\bw-\bw_j^{(0)})^\top\sum_{i=1}^{n}\bx_i\bx_i^\top(\bw-\bw_j^{(0)}) /\|\bw-\bw_j^{(0)}\|_2^2
\leq G_\gamma^2\Big\|\frac{1}{n}\sum_{i=1}^{n}\bx_i\bx_i^\top\Big\|_\sigma,\label{rad-snn-6}
\end{align}
where the last inequality follows from the definition of the spectral norm.
Furthermore, there holds (note $x/b\leq\max\{x/b_1,0\}$ if $b\in[b_1,b_2]$ and $b_1>0$)
\begin{align*}
   n\mathfrak{R}_S\big(\hcal_{k,j}\big)  &= \ebb_{\bm{\sigma}}\sup_{\bw:r_k<\|\bw-\bw_j^{(0)}\|_2\leq r_{k+1} }\sum_{i=1}^{n}\sigma_i\big(\gamma(\bx_i^\top\bw)-\gamma(\bx_i^\top\bw_j^{(0)})\big)/\|\bw-\bw_j^{(0)}\|_2\\
  & \leq \ebb_{\bm{\sigma}}\sup_{\bw:r_k<\|\bw-\bw_j^{(0)}\|_2\leq r_{k+1} }\max\Big\{\sum_{i=1}^{n}\sigma_i\big(\gamma(\bx_i^\top\bw)-\gamma(\bx_i^\top\bw_j^{(0)})\big)/r_{k},0\Big\}\\
  & \leq \ebb_{\bm{\sigma}}\sup_{\bw:\|\bw-\bw_j^{(0)}\|_2\leq r_{k+1} }\max\Big\{\sum_{i=1}^{n}\sigma_i\big(\gamma(\bx_i^\top\bw)-\gamma(\bx_i^\top\bw_j^{(0)})\big)/r_{k},0\Big\}\\
  & = \ebb_{\bm{\sigma}}\sup_{\bw:\|\bw-\bw_j^{(0)}\|_2\leq r_{k+1} }\sum_{i=1}^{n}\sigma_i\big(\gamma(\bx_i^\top\bw)-\gamma(\bx_i^\top\bw_j^{(0)})\big)/r_{k},
\end{align*}
where the second identity holds since for any $\bm{\sigma}$ we have $\sup_{\bw:\|\bw-\bw_j^{(0)}\|_2\leq r_{k+1} }\sum_{i=1}^{n}\sigma_i\big(\gamma(\bx_i^\top\bw)-\gamma(\bx_i^\top\bw_j^{(0)})\big)\geq0$ as we can take $\bw=\bw_j^{(0)}$.
It then follows that
\begin{align*}
  n\mathfrak{R}_S\big(\hcal_{k,j}\big) & \leq \frac{1}{r_{k}}\ebb_{\bm{\sigma}}\sup_{\bw:\|\bw-\bw_j^{(0)}\|_2\leq r_{k+1} }\sum_{i=1}^{n}\sigma_i\big(\gamma(\bx_i^\top\bw)-\gamma(\bx_i^\top\bw_j^{(0)})\big)
  = \frac{1}{r_{k}}\ebb_{\bm{\sigma}}\sup_{\bw:\|\bw-\bw_j^{(0)}\|_2\leq r_{k+1} }\sum_{i=1}^{n}\sigma_i\gamma(\bx_i^\top\bw)\\
  & \leq  \frac{G_\gamma}{r_{k}}\ebb_{\bm{\sigma}}\sup_{\bw:\|\bw-\bw_j^{(0)}\|_2\leq r_{k+1} }\sum_{i=1}^{n}\sigma_i\bx_i^\top\bw
  =\frac{G_\gamma}{r_{k}}\ebb_{\bm{\sigma}}\sup_{\bw:\|\bw-\bw_j^{(0)}\|_2\leq r_{k+1} }\sum_{i=1}^{n}\sigma_i\bx_i^\top(\bw-\bw_j^{(0)})\\
  &= \frac{G_\gamma}{r_{k}}\ebb_{\bm{\sigma}}\sup_{\bw:\|\bw\|_2\leq r_{k+1} }\sum_{i=1}^{n}\sigma_i\bx_i^\top\bw
  \leq \frac{G_\gamma r_{k+1}}{r_{k}}\ebb_{\bm{\sigma}}\sup_{\bw:\|\bw\|_2\leq r_{k+1} }\big\|\sum_{i=1}^{n}\sigma_i\bx_i\big\|_2
  \leq 2G_\gamma \big(\sum_{i=1}^{n}\|\bx_i\|_2^2\big)^{\frac{1}{2}},
\end{align*}
where we have used Lemma~\ref{lem:rade-cont} and Eq.~\eqref{rad-a-1}.
Similarly, we also have
\[
n\mathfrak{R}_S\big(\widetilde{\hcal}_{k,j}\big)\leq 2G_\gamma \big(\sum_{i=1}^{n}\|\bx_i\|_2^2\big)^{\frac{1}{2}},\quad\forall j\in[m],k\in[K].
\]
We then plug Eq.~\eqref{rad-snn-6} and the above inequality back into Lemma~\ref{lem:mauer} (a finite union of $2mK$ function spaces), and get
\begin{align*}
  & \mathfrak{R}_S^c\Big(\cup_{k\in[K],j\in[m]}\big(\hcal_{k,j}\cup\widetilde{\hcal}_{k,j}\big)\Big)\\
  & \leq \frac{2G_\gamma}{n}\big(\sum_{i=1}^{n}\|\bx_i\|_2^2\big)^{\frac{1}{2}}+2\sqrt{2}\Big(G_\gamma^2\Big\|\frac{1}{n}\sum_{i=1}^{n}\bx_i\bx_i^\top\Big\|_\sigma\Big)^{\frac{1}{2}}\Big(\frac{\log (2mcK)}{n}\Big)^{\frac{1}{2}}\Big(1+\frac{1}{2\log(2mcK)}\Big).
\end{align*}
By Eq.~\eqref{rad-snn-3}, the second term in the decomposition~\eqref{rad-snn-1} can be bounded by
\begin{multline*}
\frac{1}{n}\ebb_{\bm{\sigma}}\sup_{\bW,\bV}\sum_{j=1}^{m}\sum_{k=1}^cv_{kj}\sum_{i=1}^{n}\ibb_{[\|\bw_j-\bw_j^{(0)}\|_2> a]}\sigma_{ik}\big(\gamma(\bx_i^\top\bw_j)-\gamma(\bx_i^\top\bw_j^{(0)})\big)\leq \\ G_\gamma\sup_{\bW,\bV}\kappa(\bW,\bV)\Big(\frac{2}{n}\big(\sum_{i=1}^{n}\|\bx_i\|_2^2\big)^{\frac{1}{2}}+2\sqrt{2}\Big\|\frac{1}{n}\sum_{i=1}^{n}\bx_i\bx_i^\top\Big\|_\sigma^{\frac{1}{2}}\Big(\frac{\log (2mcK)}{n}\Big)^{\frac{1}{2}}\Big(1+\frac{1}{2\log(2mcK)}\Big)\Big).
\end{multline*}
We plug the above inequality, Eq.~\eqref{rad-snn-26-a} and Eq.~\eqref{rad-snn-2} back into Eq.~\eqref{rad-snn-1}, and get
\begin{multline*}
\mathfrak{R}_{S}(\gcal)\leq \frac{R_V}{n}\Big(c\sum_{j=1}^{m}\sum_{i=1}^{n}\gamma^2(\bx_i^\top\bw_j^{(0)})\Big)^{\frac{1}{2}}+\frac{(5cm)^{\frac{1}{2}}aR_VG_\gamma}{n}\big(\sum_{i=1}^n\|\bx_i\|_2^2\big)^{\frac{1}{2}}+\\
G_\gamma\sup_{\bW,\bV}\kappa(\bW,\bV)\Big(\frac{2}{n}\big(\sum_{i=1}^{n}\|\bx_i\|_2^2\big)^{\frac{1}{2}}+2\sqrt{2}\Big\|\frac{1}{n}\sum_{i=1}^{n}\bx_i\bx_i^\top\Big\|_\sigma^{\frac{1}{2}}\Big(\frac{\log (2mcK)}{n}\Big)^{\frac{1}{2}}\Big(1+\frac{1}{2\log(2mcK)}\Big)\Big).
\end{multline*}
We choose $a=\sup_{\bW,\bV}\kappa(\bW,\bV)/(cmR_V^2)^{\frac{1}{2}}$ and derive
\begin{multline*}
\mathfrak{R}_{S}(\gcal)\leq \frac{R_V}{n}\Big(c\sum_{j=1}^{m}\sum_{i=1}^{n}\gamma^2(\bx_i^\top\bw_j^{(0)})\Big)^{\frac{1}{2}}+\\ G_\gamma\sup_{\bW,\bV}\kappa(\bW,\bV)\Big(\frac{2+\sqrt{5}}{n}\big(\sum_{i=1}^{n}\|\bx_i\|_2^2\big)^{\frac{1}{2}}+2\sqrt{2}\Big\|\frac{1}{n}\sum_{i=1}^{n}\bx_i\bx_i^\top\Big\|_\sigma^{\frac{1}{2}}\Big(\frac{\log (2mcK)}{n}\Big)^{\frac{1}{2}}\Big(1+\frac{1}{2\log(2mcK)}\Big)\Big).
\end{multline*}
Eq.~\eqref{rad-snn-g-a} follows since $K=1+\lceil\log_2(R_W/a)\rceil$ and $a=\sup_{\bW,\bV}\kappa(\bW,\bV)/(cmR_V^2)^{\frac{1}{2}}$.

We now turn to Eq.~\eqref{rad-snn-g-b}. According to Lemma~\ref{lem:wv-sup}, we know $\sup_{\bW,\bV}\kappa(\bW,\bV)=c^{\frac{1}{2}}R_WR_V$. We plug this choice back into Eq.~\eqref{rad-snn-g-a}, and get Eq.~\eqref{rad-snn-g-b}.
\end{proof}

We now prove Theorem~\ref{thm:lower} on lower bounds of Rademacher complexities.
\begin{proof}[Proof of Theorem~\ref{thm:lower}]
 Define $\mathbb{B}=\{\bw\in\rbb^d:\|\bw\|_2\leq R_W-r_0\}$ and
\[
\gcal_2=\Big\{\bx\mapsto R_V\gamma(\bw^\top\bx):\bw\in \mathbb{B}\Big\}.
\]
We first show that $\gcal_2\subseteq\gcal$. Indeed, let $j^*=\arg\min_{j\in[m]}\|\bw^{(0)}_j\|_2$. Then, for any $\bw'$ with $\|\bw'\|_2\leq R_W-r_0$, we can define $\bW=(\bw_1^\top,\ldots,\bw_m^\top)^\top$ and $\bV=(v_1,\ldots,v_m)$ as follows
\[
\bw_j=\begin{cases}
        \bw_j^{(0)}, & \mbox{if } j\neq j^* \\
        \bw', & \mbox{otherwise}
      \end{cases}\quad\text{and}\quad v_j=\begin{cases}
                             0, & \mbox{if } j\neq j^* \\
                             R_V, & \mbox{otherwise}.
                           \end{cases}
\]
It then follows that
\[
\Psi_{\bW,\bV}(\bx)=\sum_{j\in[m]}v_j\gamma(\bx^\top\bw_j)=v_{j^*}\gamma(\bx^\top\bw_{j^*})=R_V\gamma(\bx^\top\bw').
\]
Furthermore, it is clear that
\[
\|\bW-\bW^{(0)}\|_F=\Big(\sum_{j\in[m]}\|\bw_j-\bw_j^{(0)}\|_2^2\Big)^{\frac{1}{2}}=\|\bw_{j^*}-\bw_{j^*}^{(0)}\|_2\leq \|\bw'\|_2+\|\bw_{j^*}^{(0)}\|_2\leq R_W-r_0+r_0= R_W.
\]
This shows that any function in $\gcal_2$ can be realized by a function in $\gcal$. It then follows that $\mathfrak{R}_S(\gcal_2)\leq\mathfrak{R}_S(\gcal)$. We now control $\mathfrak{R}_S(\gcal_2)$ from below. For any $a\in\rbb$, we define $a_+=\max\{a,0\}$ and $a_-=\max\{-a,0\}$. It is clear that $a=a_+-a_-$ for any $a$. Therefore, we know
\begin{align*}
  \ebb_{\sigma}\sup_{\bw\in \mathbb{B}}\sum_{i=1}^{n}\sigma_i\bw^\top\bx_i & = \ebb_{\sigma}\sup_{\bw\in \mathbb{B}}\sum_{i=1}^{n}\sigma_i\Big((\bw^\top\bx_i)_+-(\bw^\top\bx_i)_-\Big)= \ebb_{\sigma}\sup_{\bw\in \mathbb{B}}\Big(\sum_{i=1}^{n}\sigma_i(\bw^\top\bx_i)_+-\sum_{i=1}^{n}\sigma_i(\bw^\top\bx_i)_-\Big)\\
  & \leq \ebb_{\sigma}\sup_{\bw\in \mathbb{B}}\sum_{i=1}^{n}\sigma_i(\bw^\top\bx_i)_++\ebb_{\sigma}\sup_{\bw}-\sum_{i=1}^{n}\sigma_i(\bw^\top\bx_i)_- \\
  & = \ebb_{\sigma}\sup_{\bw\in \mathbb{B}}\sum_{i=1}^{n}\sigma_i\max\{\bw^\top\bx_i,0\}+\ebb_{\sigma}\sup_{\bw\in \mathbb{B}}\sum_{i=1}^{n}\sigma_i\max\{-\bw^\top\bx_i,0\},
\end{align*}
where we have used the subadditivity of supremum and the symmetry of Rademacher variables. Note that $\bw\in \mathbb{B}$ implies that $-\bw\in \mathbb{B}$. Therefore, we have
\[
\ebb_{\sigma}\sup_{\bw\in \mathbb{B}}\sum_{i=1}^{n}\sigma_i\max\{\bw^\top\bx_i,0\}=\ebb_{\sigma}\sup_{\bw\in \mathbb{B}}\sum_{i=1}^{n}\sigma_i\max\{-\bw^\top\bx_i,0\}.
\]
It then follows that
\begin{align*}
  2\ebb_{\sigma}\sup_{\bw\in \mathbb{B}}\sum_{i=1}^{n}\sigma_i\max\{\bw^\top\bx_i,0\} & \geq \ebb_{\sigma}\sup_{\bw\in \mathbb{B}}\sum_{i=1}^{n}\sigma_i\bw^\top\bx_i
  = \ebb_{\sigma}\sup_{\bw\in \mathbb{B}}\bw^\top\Big(\sum_{i=1}^{n}\sigma_i\bx_i\Big) \\
  &  = \ebb_{\sigma}\sup_{\bw\in \mathbb{B}}\|\bw\|_2\Big\|\sum_{i=1}^{n}\sigma_i\bx_i\Big\|_2= (R_W-r_0)\ebb_{\sigma}\Big\|\sum_{i=1}^{n}\sigma_i\bx_i\Big\|_2 \\
  & \geq \frac{R_W-r_0}{\sqrt{2}}\big(\sum_{i=1}^{n}\|\bx_i\|_2^2\big)^{\frac{1}{2}},
\end{align*}
where we have used the Khitchine-Kahane inequality $\ebb_\sigma\|\sum_{i=1}^{n}\sigma_{i}\bv_{i}\|_2\geq 2^{-\frac{1}{2}}\big(\sum_{i=1}^{n}\|\bv_{i}\|_2^2\big)^{\frac{1}{2}}$.
The third equality above holds since $\mathbb{B}$ is an Euclidean ball centered at the origin and therefore the supremum is attained when $\bw$ and $\bu$ are aligned, yielding
$
\sup_{\bw\in\mathbb{B}}\bw^\top\bu
= \sup_{\bw\in\mathbb{B}}\|\bw\|_2\|\bu\|_2$.
 This shows
\begin{align}\label{lower-1}
  \mathfrak{R}_S(\gcal)\geq\mathfrak{R}_S(\gcal_2) = \frac{R_V}{n}\ebb_{\sigma}\sup_{\bw\in \mathbb{B}}\sum_{i=1}^{n}\sigma_i\max\{\bw^\top\bx_i,0\} \geq \frac{(R_W-r_0)R_V}{2\sqrt{2}n}\big(\sum_{i=1}^{n}\|\bx_i\|_2^2\big)^{\frac{1}{2}}.
\end{align}
Define
\[
\gcal_3=\Big\{\bx\mapsto \Psi_{\bW^{(0)},\bV}(\bx):\bV\in \vcal\Big\}.
\]
It is clear that $\mathfrak{R}_S(\gcal)\geq\mathfrak{R}_S(\gcal_3)$. Denote $M_i=\big(\gamma(\bx_i^\top\bw_1^{(0)}), \dotsc, \gamma(\bx_i^\top\bw_m^{(0)})\big)^\top\in\rbb^m$ and therefore $\|\sum^n_{i=1}\sigma_iM_i\|_2^2=\sum^m_{j=1}\big(\sum^n_{i=1}\sigma_i\gamma(\bx_i^\top\bw_j^{(0)})\big)^2$. Using these notations, we get
\begin{align*}
  \mathfrak{R}_S(\gcal_3) &= \frac{1}{n}\ebb_{\bm{\sigma}}\sup_{\bV}\sum_{i=1}^{n}\sum_{j=1}^{m}v_{j}\sigma_{i}\gamma(\bx_i^\top\bw_j^{(0)})
  = \frac{1}{n}\ebb_{\bm{\sigma}}\sup_{\bV}\big(\sum_{j=1}^{m}v^2_{j}\big)^{\frac{1}{2}}\Big(\sum_{j=1}^{m}\Big(\sum_{i=1}^{n}\sigma_{i}\gamma(\bx_i^\top\bw_j^{(0)})\Big)^2\Big)^{\frac{1}{2}} \\ &=\frac{R_V}{n}\ebb_{\bm{\sigma}}\|\sum^n_{i=1}\sigma_iM_i\|_2,
\end{align*}
where we have used Eq.~\eqref{rad-snn-26-1} in the second identity. According to Khitchine-Kahane inequality, we further get
\begin{equation*}
\mathfrak{R}_S(\gcal_3)\geq \frac{R_V}{\sqrt{2}n}\big(\sum_{i=1}^{n}\|M_{i}\|_2^2\big)^{\frac{1}{2}} = \frac{R_V}{\sqrt{2}n}\big(\sum_{i=1}^{n}\sum^m_{j=1}\gamma^2(\bx_i^\top\bw_j^{(0)})\big)^{\frac{1}{2}}.
\end{equation*}
We now have two lower bounds on $\mathfrak{R}_S(\gcal)$.
The proof is completed by combining these two lower bounds together.
\end{proof}


\subsection{Proof of Theorem~\ref{thm:generalization}\label{sec:proof-thm-generalization}}
In this section, we prove Theorem~\ref{thm:generalization} on generalization bounds. To this aim, we first introduce a contraction lemma on vector-valued Rademacher complexities. The factor $\sqrt{2}$ can be removed if $c=1$.
\begin{lemma}[\citep{maurer2016vector}\label{lem:maurer-contraction}]
  Let $S=\{\bz_i\}_{i=1}^n\in\zcal^n$. Let $\fcal$ be a class of functions $f:\zcal\mapsto\rbb^c$ and $h_i:\rbb^c\mapsto\rbb$ be $G$-Lipschitz. Then
  \[
  \ebb_{\bm{\sigma}\sim\{\pm1\}^n}\Big[\sup_{f\in\fcal}\sum_{i\in[n]}\sigma_i(h_i\circ f)(\bz_i)\Big]
  \leq\sqrt{2}G\ebb_{\bm{\sigma}\sim\{\pm1\}^{nc}}\Big[\sup_{f\in\fcal}\sum_{i\in[n]}\sum_{j\in[c]}\sigma_{ij}f_j(\bz_i)\Big].
  \]
\end{lemma}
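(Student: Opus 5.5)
The plan is the standard coordinate-by-coordinate replacement argument from \citep{maurer2016vector}. It proceeds by induction over the sample index, replacing the single Rademacher variable $\sigma_i$ attached to $h_i\circ f$ by the $c$ variables $\sigma_{i1},\ldots,\sigma_{ic}$ attached to $f_1(\bz_i),\ldots,f_c(\bz_i)$. Conditioning on all other Rademacher variables, it suffices to prove the following one-step claim. For an arbitrary function $A:\fcal\mapsto\rbb$ (collecting the terms already processed and those not yet processed), a $G$-Lipschitz $h:\rbb^c\mapsto\rbb$ and a point $\bz$,
\[
\ebb_{\sigma}\sup_{f\in\fcal}\big(\sigma h(f(\bz))+A(f)\big)\leq \ebb_{\sigma_1,\ldots,\sigma_c}\sup_{f\in\fcal}\Big(\sqrt{2}G\sum_{j=1}^c\sigma_jf_j(\bz)+A(f)\Big).
\]
Iterating this over $i=1,\ldots,n$ and taking expectations (Fubini) yields the lemma.

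For the one-step claim, I first write the left-hand side as
\[
\frac12\sup_{f,f'\in\fcal}\big(h(f(\bz))-h(f'(\bz))+A(f)+A(f')\big).
\]
By Lipschitzness of $h$ this is at most $\frac12\sup_{f,f'}\big(G\|f(\bz)-f'(\bz)\|_2+A(f)+A(f')\big)$. Next I invoke the Khintchine inequality with the optimal constant (Szarek), $\|\bu\|_2\leq\sqrt{2}\,\ebb_{\bm{\sigma}}|\sum_j\sigma_ju_j|$, which is the same inequality used in the proof of Theorem~\ref{thm:lower}. Applying it with $\bu=f(\bz)-f'(\bz)$ and pulling the supremum inside the expectation (using $\sup\ebb\leq\ebb\sup$) gives the bound
\[
\frac12\ebb_{\bm{\sigma}}\sup_{f,f'}\Big(\sqrt2G\Big|\sum_j\sigma_j(f_j(\bz)-f'_j(\bz))\Big|+A(f)+A(f')\Big).
\]

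The absolute value is removed by symmetry. The expression $A(f)+A(f')$ is invariant under swapping $f$ and $f'$, and the swap flips the sign of the inner sum. Hence the supremum with the absolute value equals the supremum without it. The supremum then splits into a part depending on $f$ and a part depending on $f'$:
\[
\frac12\ebb_{\bm{\sigma}}\Big[\sup_f\Big(\sqrt2G\sum_j\sigma_jf_j(\bz)+A(f)\Big)+\sup_{f'}\Big(-\sqrt2G\sum_j\sigma_jf'_j(\bz)+A(f')\Big)\Big].
\]
Since $-\bm{\sigma}$ has the same distribution as $\bm{\sigma}$, the two halves have equal expectation, and their sum equals the right-hand side of the one-step claim.

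The main obstacle is the constant $\sqrt2$: it relies on the sharp lower Khintchine inequality for Rademacher sums, which I would quote rather than reprove. The remaining steps (the symmetrization identity, the swap argument, and the induction) are routine. For $c=1$ the Khintchine step is unnecessary, because then $\|\bu\|_2=|u_1|=\ebb|\sigma_1u_1|$ holds exactly; this explains the remark that the factor $\sqrt2$ can be dropped in that case.
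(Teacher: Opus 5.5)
Your proposal is correct and is essentially Maurer's original argument from \citep{maurer2016vector}; the paper does not reproduce a proof and simply cites that reference. Your steps match it: the one-step symmetrization with an arbitrary remainder $A(f)$, the Lipschitz bound, Szarek's sharp Khintchine inequality $\|\bu\|_2\le\sqrt{2}\,\ebb|\sum_j\sigma_ju_j|$, removing the absolute value by swapping $f$ and $f'$, and induction over the sample index. Your observation that the Khintchine step is an exact identity when $c=1$ also correctly explains the paper's remark that the factor $\sqrt{2}$ can be dropped in that case.
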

We then apply Lemma~\ref{lem:maurer-contraction} to derive generalization bounds when learning with a fixed hypothesis space
\begin{equation}\label{gcal-new}
\gcal_{R_W,R_V,R_\kappa}:=\Big\{\bx\mapsto\Psi_{\bW,\bV}(\bx):(\bW,\bV)\in\kcal_{R_W,R_V,R_\kappa}\Big\},
\end{equation}
where
\[
\kcal_{R_W,R_V,R_\kappa}=\Big\{(\bW;\bV)\in\rbb^{m\times d}\times\rbb^{c\times m}:\|\bW-\bW^{(0)}\|_F\leq R_W,\|\bV\|_F\leq R_V,\kappa(\bW,\bV)\leq R_\kappa\Big\}.
\]
\begin{proposition}\label{prop:gen-snn-fix}
  Assume $\ell_y(\cdot)$ is $G$-Lipschitz continuous and $\ell_y(\Psi_{\bW,\bV}(\bx))\in[0,b]$ almost surely. Let $\gcal$ be defined in Eq.~\eqref{gcal}. Then, with probability at least $1-\delta$ the following inequality holds uniformly for all $(\bW,\bV)\in\gcal_{R_W,R_V,R_\kappa}$
  \[
  F(\bW,\bV)-F_S(\bW,\bV)\leq 2\sqrt{2}GG_\gamma R_\kappa\Big(\frac{2+\sqrt{5}}{n}\big(\sum_{i=1}^{n}\|\bx_i\|_2^2\big)^{\frac{1}{2}}+\frac{c_m'\big\|\sum_{i=1}^{n}\bx_i\bx_i^\top\big\|_\sigma^{\frac{1}{2}}}{n}\Big)+3b\Big(\frac{\log(2/\delta)}{2n}\Big)^{\frac{1}{2}},
  \]
  where $c_m'=2\sqrt{2}\big(1+\frac{1}{2\log_2(2mc)}\big)\log^{\frac{1}{2}}\big(2mc\big\lceil\log_2\max\big\{\frac{2R_WR_V(cm)^{\frac{1}{2}}}{R_\kappa},2m^{\frac{1}{2}}\big\}\big\rceil\big)$.
\end{proposition}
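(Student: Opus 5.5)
The plan is to combine the standard symmetrization bound for bounded losses with the vector-valued contraction Lemma~\ref{lem:maurer-contraction}, and then to re-run the proof of Theorem~\ref{thm:rad-snn-g} on the smaller class $\gcal_{R_W,R_V,R_\kappa}$. The point of re-running it, rather than citing the theorem, is that the supremum of the path-norm over the class is now at most $R_\kappa$ instead of $c^{\frac12}R_WR_V$ (Lemma~\ref{lem:wv-sup}).

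\textbf{Step 1: symmetrization.} Since $\ell_y(\Psi_{\bW,\bV}(\bx))\in[0,b]$, the usual McDiarmid-plus-symmetrization argument applies. We apply McDiarmid once to the supremum of $F-F_S$ and once to the empirical Rademacher complexity, each at level $\delta/2$. This gives, with probability at least $1-\delta$, uniformly over the class,
\[
F(\bW,\bV)-F_S(\bW,\bV)\le \frac{2}{n}\ebb_{\bm\sigma}\sup_{(\bW,\bV)\in\kcal_{R_W,R_V,R_\kappa}}\sum_{i=1}^n\sigma_i\ell_{y_i}(\Psi_{\bW,\bV}(\bx_i))+3b\Big(\frac{\log(2/\delta)}{2n}\Big)^{\frac12}.
\]

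\textbf{Step 2: contraction.} Take $h_i=\ell_{y_i}$, which is $G$-Lipschitz with respect to the Euclidean norm. Lemma~\ref{lem:maurer-contraction} then bounds the first term by $2\sqrt2\,G\,\mathfrak{R}_S(\gcal_{R_W,R_V,R_\kappa})$. This accounts for the prefactor $2\sqrt2 G$ in the statement.

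\textbf{Step 3: complexity of the restricted class.} Here I would repeat the decomposition~\eqref{rad-snn-1} verbatim. The three pieces are handled as follows.
\begin{itemize}
  \item The initialization term is controlled by Lemma~\ref{lem:rad-snn-26}. I expect the statement implicitly requires this term to be carried along or absorbed; I would keep it and check whether it can be dropped, for example when the $\gamma(\bx_i^\top\bw_j^{(0)})$ vanish.
  \item The small-radius term, where $\|\bw_j-\bw_j^{(0)}\|_2\le a$, is controlled by Lemma~\ref{lem:rad-square-a}. It contributes $(5cm)^{\frac12}aR_VG_\gamma\|\bX\|_F$.
  \item The large-radius term uses Eq.~\eqref{rad-snn-33}, now bounded by $R_\kappa$ times the multiple Rademacher complexity of $\cup_{k,j}(\hcal_{k,j}\cup\widetilde\hcal_{k,j})$. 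This union is estimated exactly as before, by Eq.~\eqref{rad-snn-6}, the bound $n\mathfrak{R}_S(\hcal_{k,j})\le 2G_\gamma\|\bX\|_F$, and Lemma~\ref{lem:mauer} with $2mK$ classes.
\end{itemize}

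\textbf{Choice of $a$ (the main obstacle).} I would take $a=R_\kappa/(cmR_V^2)^{\frac12}$, which makes the small-radius term equal to $\sqrt5\,G_\gamma R_\kappa\|\bX\|_F/n$ and produces the constant $2+\sqrt5$. The difficulty is that $R_\kappa$ is now a free parameter. If $R_\kappa$ is tiny, $K=1+\lceil\log_2(R_W/a)\rceil$ blows up. If $R_\kappa$ is large, $a$ may exceed $R_W$, which breaks the peeling requirement $a<R_W$.

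I would resolve this by replacing $a$ with $\min\{a,R_W/(2\sqrt m)\}$, or an equivalent truncation. The small-radius bound then only decreases, since it is monotone in $a$. The peeling depth satisfies $R_W/a\le\max\{R_WR_V(cm)^{\frac12}/R_\kappa,\,2\sqrt m\}$, which matches the logarithmic factor in $c_m'$ up to the stated factor $2$.

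Finally, I would multiply the resulting bound on $\mathfrak{R}_S(\gcal_{R_W,R_V,R_\kappa})$ by $2\sqrt2G$ and add the concentration term from Step~1. I expect only the bookkeeping of $K$ under the truncated choice of $a$ to need care.
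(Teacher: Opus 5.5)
Your proposal follows the paper's argument, and it is correct once the initialization term is kept (second paragraph below). The steps are the standard bound $\sup(F-F_S)\le 2\mathfrak{R}_S(\text{loss class})+3b(\log(2/\delta)/(2n))^{1/2}$, then Lemma~\ref{lem:maurer-contraction}, then the proof of Theorem~\ref{thm:rad-snn-g} run on $\gcal_{R_W,R_V,R_\kappa}$. The paper simply cites Eq.~\eqref{rad-snn-g-a} with $\sup\kappa$ taken over the restricted class, which tacitly relies on the re-run you spell out.

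The only real difference is the choice of $a$. The paper keeps $a=\sup_{\kcal_{R_W,R_V,R_\kappa}}\kappa/(cmR_V^2)^{1/2}$. It must therefore compute this supremum as $\min\{R_\kappa,c^{1/2}R_WR_V\}$ by two explicit constructions, because a lower bound on it is needed to control $K$. That choice is exactly $a=\min\{R_\kappa/((cm)^{1/2}R_V),\,R_W/\sqrt m\}$, i.e.\ your truncation with threshold $R_W/\sqrt m$, and this threshold produces the entry $2m^{1/2}$ in $c_m'$. With your threshold $R_W/(2\sqrt m)$ you get $K=1+\lceil\log_2(R_W/a)\rceil=\lceil\log_2\max\{2R_WR_V(cm)^{1/2}/R_\kappa,\,4m^{1/2}\}\rceil$. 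So for large $R_\kappa$ your claimed match with $c_m'$ is off by one in $K$.

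In fact no truncation is needed. If $R_\kappa/((cm)^{1/2}R_V)\ge R_W$, the large-radius term in \eqref{rad-snn-1} vanishes identically, since $\|\bw_j-\bw_j^{(0)}\|_2\le R_W$. Otherwise $K=\lceil\log_2(2R_WR_V(cm)^{1/2}/R_\kappa)\rceil$, which is trivially at most the ceiling inside $c_m'$. So $a>R_W$ is not an obstacle, and the direct choice avoids computing the supremum of $\kappa$ altogether. Both arguments give the prefactor $1+1/(2\log(2mcK))\le 1+1/(2\log(2mc))$, so the $\log_2$ in the printed $c_m'$ should be read as $\log$.

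Keep the initialization term; it cannot be dropped. The paper's own proof ends with $\frac{2\sqrt2GR_V}{n}\big(c\sum_{j}\sum_{i}\gamma^2(\bx_i^\top\bw_j^{(0)})\big)^{1/2}$ on the right-hand side. Eq.~\eqref{union-1} in the proof of Theorem~\ref{thm:generalization} also uses the proposition in that form, so the printed statement simply omits it.

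The term is genuinely necessary. Since $\kappa(\bW^{(0)},\bV)=0$, for every $R_\kappa>0$ the class $\gcal_{R_W,R_V,R_\kappa}$ contains the whole random-feature model $\bx\mapsto\bV\gamma(\bW^{(0)}\bx)$ with $\|\bV\|_F\le R_V$; this is the class $\gcal_3$ in the proof of Theorem~\ref{thm:lower}. For generic data with $m\ge n$, large $R_V$ and labels independent of $\bx$, this model interpolates the sample. With the ramp loss of the experiments, $F-F_S$ then stays at least $1/2$. The printed right-hand side, by contrast, tends to $3b(\log(2/\delta)/(2n))^{1/2}$ as $R_\kappa\to0$.
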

\begin{proof}
  For brevity, we let $\kcal:=\kcal_{R_W,R_V,R_\kappa}$.
  A standard result in Rademacher complexity analysis gives the following bound with probability at least $1-\delta$~\citep{neyshabur2019towards}
  \begin{multline}
  \sup_{(\bW,\bV)\in\kcal}\big[F(\bW,\bV)-F_S(\bW,\bV)\big]\leq
  2\mathfrak{R}_S\Big(\Big\{(\bx,y)\mapsto \ell_y(\Psi_{\bW,\bV}(\bx)):(\bW,\bV)\in\kcal\Big\}\Big)+3b\Big(\frac{\log(2/\delta)}{2n}\Big)^{\frac{1}{2}}.\label{gen-snn-fix-0}
  \end{multline}
  By the $G$-Lipschitzness of $\ell$, we can control $\mathfrak{R}_S\big(\big\{(\bx,y)\mapsto \ell_y(\Psi_{\bW,\bV}(\bx)):(\bW,\bV)\in\kcal\big\}\big)$ by Lemma~\ref{lem:maurer-contraction} as follows
  \begin{equation}\label{gen-snn-fix-1}
  \mathfrak{R}_S\Big((\bx,y)\mapsto \ell_y(\Psi_{\bW,\bV}(\bx)):(\bW,\bV)\in\kcal\Big)\leq \sqrt{2}G\mathfrak{R}_S\big(\gcal_{R_W,R_V,R_\kappa}\big).
  \end{equation}
  By Eq.~\eqref{rad-snn-g-a}, we know that
  \begin{equation}\label{gen-snn-fix-2}
  \mathfrak{R}_S\big(\gcal_{R_W,R_V,R_\kappa}\big)\leq \frac{R_V}{n}\Big(c\sum_{j=1}^{m}\sum_{i=1}^{n}\gamma^2(\bx_i^\top\bw_j^{(0)})\Big)^{\frac{1}{2}}+ G_\gamma\sup_{\bW,\bV}\kappa(\bW,\bV)\Big(\frac{2+\sqrt{5}}{n}\big(\sum_{i=1}^{n}\|\bx_i\|_2^2\big)^{\frac{1}{2}}+\frac{c_m\big\|\sum_{i=1}^{n}\bx_i\bx_i^\top\big\|_\sigma^{\frac{1}{2}}}{n}\Big),
  \end{equation}
  where $c_m=2\sqrt{2}\big(1+\frac{1}{2\log(2mc)}\big)\log^{\frac{1}{2}}\big(2mc\big\lceil\log_2\big(2R_WR_V(cm)^{\frac{1}{2}}/\sup_{\bW,\bV}\kappa(\bW,\bV)\big)\big\rceil\big)$. We now control $\sup_{\bW,\bV}\kappa(\bW,\bV)$ from below by considering two cases.
  \begin{itemize}
    \item If $R_\kappa\geq c^{\frac{1}{2}}R_WR_V$, then Lemma~\ref{lem:wv-sup} shows that
  $\sup_{\bW\in\wcal,\bV\in\vcal}\kappa(\bW,\bV)=c^{\frac{1}{2}}R_WR_V$ (the constraint $\kappa(\bW,\bV)\leq R_\kappa$ does not take effect in this case by Eq.~\eqref{wv-sup-1}).
    \item We now consider the case $R_\kappa< c^{\frac{1}{2}}R_WR_V$. In this case, we can choose
  $\bW$ with $\|\bw_j-\bw_j^{(0)}\|_2=R_W/\sqrt{m}$ and $\bV$ with $v_{kj}=R_\kappa/(\sqrt{m}cR_W)$ for all $j\in[m],k\in[c]$. Then we know
  \[
  \kappa(\bW,\bV)= \sum_{j=1}^{m}\sum_{k=1}^c\frac{R_\kappa}{\sqrt{m}cR_W}\frac{R_W}{\sqrt{m}}=R_\kappa.
  \]
  Furthermore, it is clear that $\|\bW-\bW^{(0)}\|_F=R_W$ and
  \[
  \|\bV\|_F=\sqrt{mc}\frac{R_\kappa}{\sqrt{m}cR_W}=\frac{R_\kappa}{\sqrt{c}R_W}\leq R_V.
  \]
  That is, the above constructed $(\bW,\bV)\in\kcal$ and therefore $\sup_{(\bW,\bV)}\kappa(\bW,\bV)= R_\kappa$.
  \end{itemize}
  We combine the above two cases, and get that $\sup_{(\bW,\bV)}\kappa(\bW,\bV)=\min\big\{R_\kappa,c^{\frac{1}{2}}R_WR_V\big\}$ and
  \begin{align}
    \frac{R_W(cmR_V^2)^{\frac{1}{2}}}{\sup_{\bW,\bV}\kappa(\bW,\bV)}& = \max\Big\{\frac{R_W(cmR_V^2)^{\frac{1}{2}}}{R_\kappa},\frac{R_W(cmR_V^2)^{\frac{1}{2}}}{c^{\frac{1}{2}}R_WR_V}\Big\}
     = \max\Big\{\frac{R_W(cmR_V^2)^{\frac{1}{2}}}{R_\kappa},m^{\frac{1}{2}}\Big\}.\notag
  \end{align}
  We plug the above inequality back into Eq.~\eqref{gen-snn-fix-2} and get
  \[
  \mathfrak{R}_S\big(\gcal_{R_W,R_V,R_\kappa}\big)\leq \frac{R_V}{n}\Big(c\sum_{j=1}^{m}\sum_{i=1}^{n}\gamma^2(\bx_i^\top\bw_j^{(0)})\Big)^{\frac{1}{2}}+G_\gamma R_\kappa\Big(\frac{2+\sqrt{5}}{n}\big(\sum_{i=1}^{n}\|\bx_i\|_2^2\big)^{\frac{1}{2}}+\frac{c_m'\big\|\sum_{i=1}^{n}\bx_i\bx_i^\top\big\|_\sigma^{\frac{1}{2}}}{n}\Big).
  \]
  We combine the above inequality, Eq.~\eqref{gen-snn-fix-0} and Eq.~\eqref{gen-snn-fix-1} together, and get
  \begin{align*}
  \sup_{\bW,\bV}\big[F(\bW,\bV)-F_S(\bW,\bV)\big]  & \leq 2\sqrt{2}G\mathfrak{R}_S\big(\gcal_{R_W,R_V,R_\kappa}\big)+3b\Big(\frac{\log(2/\delta)}{2n}\Big)^{\frac{1}{2}}\\
  & \leq \frac{2\sqrt{2}GR_V}{n}\Big(c\sum_{j=1}^{m}\sum_{i=1}^{n}\gamma^2(\bx_i^\top\bw_j^{(0)})\Big)^{\frac{1}{2}}+3b\Big(\frac{\log(2/\delta)}{2n}\Big)^{\frac{1}{2}}\\
  & +
  2\sqrt{2}GG_\gamma R_\kappa\Big(\frac{2+\sqrt{5}}{n}\big(\sum_{i=1}^{n}\|\bx_i\|_2^2\big)^{\frac{1}{2}}+\frac{c_m'\big\|\sum_{i=1}^{n}\bx_i\bx_i^\top\big\|_\sigma^{\frac{1}{2}}}{n}\Big).
  \end{align*}
  The proof is completed.
\end{proof}

\begin{proof}[Proof of Theorem~\ref{thm:generalization}]
  For any $r_1,r_2,r_3\in\nbb$, define
  \begin{equation}\label{gcal-r}
  \gcal_{r_1,r_2,r_3}=\Big\{\bx\mapsto\Psi_{\bW,\bV}(\bx):\|\bW-\bW^{(0)}\|_F\leq r_1,\|\bV\|_F\leq r_2,\kappa(\bW,\bV)\leq r_3\Big\}
  \end{equation}
  and
  \[
  c_{r_1,r_2}=2\sqrt{2}\Big(1+\frac{1}{2\log(2mc)}\Big)\log^{\frac{1}{2}}\big(2mc\big\lceil\log_2\max\big\{{2r_1r_2(cm)^{\frac{1}{2}}},2m^{\frac{1}{2}}\big\}\big\rceil\big).
  \]
  For any $r_1,r_2,r_3\in\nbb$, we apply Proposition~\ref{prop:gen-snn-fix} with $\gcal=\gcal_{r_1,r_2,r_3}$ to get the following inequality with probability at least $1-\delta/(r_1(r_1+1)r_2(r_2+1)r_3(r_3+1))$
  \begin{multline}\label{union-1}
  \sup_{\bW,\bV:\|\bW-\bW^{(0)}\|_F\leq r_1,\|\bV\|_F\leq r_2,\kappa(\bW,\bV)\leq r_3}\;\big[F(\bW,\bV)-F_S(\bW,\bV)\big]\leq \frac{2\sqrt{2}Gr_2}{n}\Big(c\sum_{j=1}^{m}\sum_{i=1}^{n}\gamma^2(\bx_i^\top\bw_j^{(0)})\Big)^{\frac{1}{2}}\\ +2\sqrt{2}GG_\gamma r_3\Big(\frac{2+\sqrt{5}}{n}\big(\sum_{i=1}^{n}\|\bx_i\|_2^2\big)^{\frac{1}{2}}+\frac{c_{r_1,r_2}\big\|\sum_{i=1}^{n}\bx_i\bx_i^\top\big\|_\sigma^{\frac{1}{2}}}{n}\Big)+3b\Big(\frac{\log(2r_1(r_1+1)r_2(r_2+1)r_3(r_3+1)/\delta)}{2n}\Big)^{\frac{1}{2}},
  \end{multline}
  where we use the inequality $r_3\geq1$ to control $c_m'$ in Proposition~\ref{prop:gen-snn-fix}.
  By the union bounds of probability, we know that Eq.~\eqref{union-1} holds simultaneously for all $r_1,r_2,r_3\in\nbb$ with probability at least
  \begin{align}
  & 1-\sum_{r_1\in\nbb}\sum_{r_2\in\nbb}\sum_{r_3\in\nbb}\frac{\delta}{r_1(r_1+1)r_2(r_2+1)r_3(r_3+1)} \notag\\
  & = 1 - \delta \Big(\sum_{r_1\in\nbb}\frac{1}{r_1(r_1+1)}\Big)\Big(\sum_{r_2\in\nbb}\frac{1}{r_2(r_2+1)}\Big)\Big(\sum_{r_3\in\nbb}\frac{1}{r_3(r_3+1)}\Big)\notag\\
  & \geq 1-\delta.\label{union-bound-1}
  \end{align}
  We now assume that Eq.~\eqref{union-1} holds simultaneously for all $r_1,r_2,r_3\in\nbb$. 
  For any $\bW,\bV$, let $r_1',r_2'$ and $r_3'$ be the smallest indices such that $\Psi_{\bW,\bV}\in\gcal_{r_1',r_2',r_3'}$. Then, it is clear that
  \begin{equation}\label{union-2}
  r_1'-1\leq \|\bW-\bW^{(0)}\|_F\leq r_1',\quad r_2'-1\leq \|\bV\|_F\leq r_2',\quad r_3'-1\leq \kappa(\bW,\bV)\leq r_3'.
  \end{equation}
  It then follows that
  \begin{align}
  & F(\bW,\bV)-F_S(\bW,\bV) \notag\\
  & \leq \frac{2\sqrt{2}Gr_2'}{n}\Big(c\sum_{j=1}^{m}\sum_{i=1}^{n}\gamma^2(\bx_i^\top\bw_j^{(0)})\Big)^{\frac{1}{2}}+2\sqrt{2}GG_\gamma r_3'\Big(\frac{2\!+\!\sqrt{5}}{n}\big(\sum_{i=1}^{n}\|\bx_i\|_2^2\big)^{\frac{1}{2}}\!+\!\frac{c_{r_1',r_2'}\big\|\sum_{i=1}^{n}\bx_i\bx_i^\top\big\|_\sigma^{\frac{1}{2}}}{n}\Big)\notag\\
  & + 3b\Big(\frac{\log(2r'_1(r'_1\!+\!1)r'_2(r'_2\!+\!1)r'_3(r'_3\!+\!1)/\delta)}{2n}\Big)^{\frac{1}{2}}
  \leq \frac{2\sqrt{2}G(\|\bV\|_F+1)}{n}\Big(c\sum_{j=1}^{m}\sum_{i=1}^{n}\gamma^2(\bx_i^\top\bw_j^{(0)})\Big)^{\frac{1}{2}}\notag\\
  & + 2\sqrt{2}GG_\gamma (\kappa(\bW,\bV)+1)\Big(\frac{2+\sqrt{5}}{n}\big(\sum_{i=1}^{n}\|\bx_i\|_2^2\big)^{\frac{1}{2}}+\frac{c_{\|\bW-\bW^{(0)}\|_F+1,\|\bV\|_F+1}\big\|\sum_{i=1}^{n}\bx_i\bx_i^\top\big\|_\sigma^{\frac{1}{2}}}{n}\Big)+\notag\\
  & 3b\Big(\frac{\log(2(\|\bW\!-\!\bW^{(0)}\|_F\!+\!1)(\|\bW\!-\!\bW^{(0)}\|_F\!+\!2)(\|\bV\|_F\!+\!1)(\|\bV\|_F\!+\!2)(\kappa(\bW,\bV)\!+\!1)(\kappa(\bW,\bV)\!+\!2)/\delta)}{2n}\Big)^{\frac{1}{2}}.\label{constant-bound}
  \end{align}
  The stated bound then follows directly by noting that $\|\bX\|_F=\big(\sum_{i=1}^n\|\bx_i\|_2^2\big)^{\frac{1}{2}}$ and $\big\|\sum_{i=1}^{n}\bx_i\bx_i^\top\big\|_\sigma\leq \|\bX\|_F^2$.
\end{proof}

\section{Conclusion\label{sec:conclusion}}
In this paper, we present initialization-dependent generalization bounds for SNNs with a general Lipschitz activation function. Our bound is data-dependent and use path-norm to measure the distance from initialization, while the existing initialization-dependent analyses give bounds in terms of the product of the Frobenius norm. Our empirical analyses show that our generalization bounds are significantly sharper than existing bounds, especially if the width is large.
Unlike existing lower bounds focusing on $\bW^{(0)}=0$, we also develop lower bounds for initialization-dependent SNNs. Our upper and lower bounds match up to a logarithmic factor. We make a comprehensive comparison with existing generalization bounds for SNNs, which shows a consistent improvement.

There remain several interesting questions for further studies. First, we only consider SNNs in our generalization analysis. A natural direction is to investigate whether our analyses can be extended to develop initialization-dependent bounds for DNNs. Second, our lower bounds are established for ReLU networks. It is interesting to develop similar lower bounds for neural networks with general Lipschitz activation functions. Third, we only consider fully connected neural networks. It is interesting to investigate initialization-dependent bounds for other neural networks such as convolutional neural networks.

\section*{Acknowledgements}
We are grateful to Ohad Shamir for valuable suggestions and comments.

\setlength{\bibsep}{0.03cm}
\bibliographystyle{abbrvnat}
\small
\bibliography{learning}

@article{bartlett2002rademacher,
  author    = {Peter Bartlett and
               Shahar Mendelson},
  title     = {Rademacher and Gaussian Complexities: Risk Bounds and Structural
               Results},
  journal   = {Journal of Machine Learning Research},
  volume    = {3},
  year      = {2002},
  pages     = {463-482},
  ee        = {http://www.jmlr.org/papers/v3/bartlett02a.html},
  bibsource = {DBLP, http://dblp.uni-trier.de}
}

@inproceedings{maurer2014inequality,
  title={An inequality with applications to structured sparsity and multitask dictionary learning},
  author={Maurer, Andreas and Pontil, Massimiliano and Romera-Paredes, Bernardino},
  booktitle={Conference on Learning Theory},
  pages={440--460},
  year={2014},
  organization={PMLR}
}

@inproceedings{neyshabur2019towards,
  title={Towards understanding the role of over-parametrization in generalization of neural networks},
  author={Neyshabur, Behnam and Li, Zhiyuan and Bhojanapalli, Srinadh and LeCun, Yann and Srebro, Nathan},
  booktitle={International Conference on Learning Representations},
  year={2019}
}

@inproceedings{neyshabur2018pac,
  title={A PAC-Bayesian Approach to Spectrally-Normalized Margin Bounds for Neural Networks},
  author={Neyshabur, Behnam and Bhojanapalli, Srinadh and Srebro, Nathan},
  booktitle={International Conference on Learning Representations},
  year={2018}
}

@article{dziugaite2017computing,
  title={Computing nonvacuous generalization bounds for deep (stochastic) neural networks with many more parameters than training data},
  author={Dziugaite, Gintare Karolina and Roy, Daniel M},
  journal={arXiv preprint arXiv:1703.11008},
  year={2017}
}

@article{magen2023initialization,
  title={Initialization-dependent sample complexity of linear predictors and neural networks},
  author={Magen, Roey and Shamir, Ohad},
  journal={Advances in Neural Information Processing Systems},
  volume={36},
  pages={7632--7658},
  year={2023}
}

@inproceedings{neyshabur2015norm,
  title={Norm-based capacity control in neural networks},
  author={Neyshabur, Behnam and Tomioka, Ryota and Srebro, Nathan},
  booktitle={Conference on Learning Theory},
  pages={1376--1401},
  year={2015},
  organization={PMLR}
}

@inproceedings{chen2021much,
  title={How much over-parameterization is sufficient to learn deep ReLU networks?},
  author={Chen, Zixiang and Cao, Yuan and Zou, Difan and Gu, Quanquan},
  booktitle={International Conference on Learning Representation},
  year={2021}
}

@inproceedings{ji2019polylogarithmic,
  title={Polylogarithmic width suffices for gradient descent to achieve arbitrarily small test error with shallow ReLU networks},
  author={Ji, Ziwei and Telgarsky, Matus},
  booktitle={International Conference on Learning Representations},
  year={2019}
}

@article{zhang2021understanding,
  title={Understanding deep learning (still) requires rethinking generalization},
  author={Zhang, Chiyuan and Bengio, Samy and Hardt, Moritz and Recht, Benjamin and Vinyals, Oriol},
  journal={Communications of the ACM},
  volume={64},
  number={3},
  pages={107--115},
  year={2021}
}

@article{soltanolkotabi2018theoretical,
  title={Theoretical insights into the optimization landscape of over-parameterized shallow neural networks},
  author={Soltanolkotabi, Mahdi and Javanmard, Adel and Lee, Jason D},
  journal={IEEE Transactions on Information Theory},
  volume={65},
  number={2},
  pages={742--769},
  year={2018},
  publisher={IEEE}
}

@article{ji2021understanding,
  title={Understanding estimation and generalization error of generative adversarial networks},
  author={Ji, Kaiyi and Zhou, Yi and Liang, Yingbin},
  journal={IEEE Transactions on Information Theory},
  volume={67},
  number={5},
  pages={3114--3129},
  year={2021},
  publisher={IEEE}
}

@inproceedings{maurer2016vector,
  title={A vector-contraction inequality for Rademacher complexities},
  author={Maurer, Andreas},
  booktitle={International Conference on Algorithmic Learning Theory},
  pages={3--17},
  year={2016}
}

@article{lecun2015deep,
  title={Deep learning},
  author={LeCun, Yann and Bengio, Yoshua and Hinton, Geoffrey},
  journal={Nature},
  volume={521},
  number={7553},
  pages={436--444},
  year={2015},
  publisher={Nature Publishing Group}
}

@inproceedings{golowich2018size,
  title={Size-independent sample complexity of neural networks},
  author={Golowich, Noah and Rakhlin, Alexander and Shamir, Ohad},
  booktitle={Conference On Learning Theory},
  pages={297--299},
  year={2018},
  organization={PMLR}
}

@inproceedings{hardt2016train,
  title={Train faster, generalize better: Stability of stochastic gradient descent},
  author={Hardt, Moritz and Recht, Ben and Singer, Yoram},
  booktitle={International Conference on Machine Learning},
  pages={1225--1234},
  year={2016}
}

@inproceedings{bartlett2017spectrally,
  title={Spectrally-normalized margin bounds for neural networks},
  author={Bartlett, Peter L and Foster, Dylan J and Telgarsky, Matus J},
  booktitle={Advances in Neural Information Processing Systems},
  pages={6240--6249},
  year={2017}
}

@article{richards2021stability,
  title={Stability \& generalisation of gradient descent for shallow neural networks without the neural tangent kernel},
  author={Richards, Dominic and Kuzborskij, Ilja},
  journal={Advances in neural information processing systems},
  volume={34},
  pages={8609--8621},
  year={2021}
}

@InProceedings{lei2023generalization,
  title = 	 {Generalization Analysis for Contrastive Representation Learning},
  author =       {Lei, Yunwen and Yang, Tianbao and Ying, Yiming and Zhou, Ding-Xuan},
  booktitle = 	 {International Conference on Machine Learning},
  pages = 	 {19200--19227},
  year = 	 {2023}
}

@article{oymak2020toward,
  title={Toward moderate overparameterization: global convergence guarantees for training shallow neural networks},
  author={Oymak, Samet and Soltanolkotabi, Mahdi},
  journal={IEEE Journal on Selected Areas in Information Theory},
  volume={1},
  number={1},
  pages={84--105},
  year={2020}
}

@inproceedings{arora2019fine,
  title={Fine-Grained Analysis of Optimization and Generalization for Overparameterized Two-Layer Neural Networks},
  author={Arora, Sanjeev and Du, Simon and Hu, Wei and Li, Zhiyuan and Wang, Ruosong},
  booktitle={International Conference on Machine Learning},
  pages={322--332},
  year={2019}
}

@inproceedings{lei2020fine,
  title={Fine-Grained Analysis of Stability and Generalization for Stochastic Gradient Descent},
  author={Lei, Yunwen and Ying, Yiming},
  booktitle={International Conference on Machine Learning},
  pages={5809--5819},
  year={2020}
}

@book{vapnik2013nature,
  title={The nature of statistical learning theory},
  author={Vapnik, Vladimir},
  year={2013},
  publisher={Springer}
}

@article{bousquet2002stability,
  title={Stability and generalization},
  author={Bousquet, Olivier and Elisseeff, Andr{\'e}},
  journal={Journal of Machine Learning Research},
  volume={2},
  number={Mar},
  pages={499--526},
  year={2002}
}

@article{li2020complexityb,
  title={Complexity measures for neural networks with general activation functions using path-based norms},
  author={Li, Zhong and Ma, Chao and Wu, Lei},
  journal={arXiv preprint arXiv:2009.06132},
  year={2020}
}

@article{lei2026optimization,
  title={Optimization and generalization of gradient descent for shallow relu networks with minimal width},
  author={Lei, Yunwen and Wang, Puyu and Ying, Yiming and Zhou, Ding-Xuan},
  journal={Journal of Machine Learning Research},
  volume={27},
  number={34},
  pages={1--35},
  year={2026}
}

@article{ma2019implicit,
  title={Implicit Regularization in Nonconvex Statistical Estimation: Gradient Descent Converges Linearly for Phase Retrieval, Matrix Completion, and Blind Deconvolution},
  author={Ma, Cong and Wang, Kaizheng and Chi, Yuejie and Chen, Yuxin},
  journal={Foundations of Computational Mathematics},
  volume={20},
  pages={451–-632},
  year={2019}
}

@inproceedings{li2025optimal,
  title={Optimal Rates for Generalization of Gradient Descent for Deep ReLU Classification},
  author={Li, Yuanfan and Lei, Yunwen and Guo, Zheng-Chu and Ying, Yiming},
  booktitle={Advances in Neural Information Processing Systems},
  year={2025}
}

@inproceedings{du2018gradient,
  title={Gradient Descent Provably Optimizes Over-parameterized Neural Networks},
  author={Du, Simon S and Zhai, Xiyu and Poczos, Barnabas and Singh, Aarti},
  booktitle={International Conference on Learning Representations},
  year={2018}
}

@inproceedings{suzuki2020compression,
  title={Compression based bound for non-compressed network: unified generalization error analysis of large compressible deep neural network},
  author={Suzuki, Taiji and Abe, Hiroshi and Nishimura, Tomoaki},
  booktitle={International Conference on Learning Representations},
  year={2020}
}

@article{chuang2021measuring,
  title={Measuring generalization with optimal transport},
  author={Chuang, Ching-Yao and Mroueh, Youssef and Greenewald, Kristjan and Torralba, Antonio and Jegelka, Stefanie},
  journal={Advances in Neural Information Processing Systems},
  volume={34},
  pages={8294--8306},
  year={2021}
}

@article{bartlett2019nearly,
  title={Nearly-tight VC-dimension and pseudodimension bounds for piecewise linear neural networks},
  author={Bartlett, Peter L and Harvey, Nick and Liaw, Christopher and Mehrabian, Abbas},
  journal={Journal of Machine Learning Research},
  volume={20},
  number={63},
  pages={1--17},
  year={2019}
}

@inproceedings{allen2019learning,
  title={Learning and generalization in overparameterized neural networks, going beyond two layers},
  author={Allen-Zhu, Zeyuan and Li, Yuanzhi and Liang, Yingyu},
  booktitle={Advances in Neural Information Processing Systems},
  pages={6158--6169},
  volume={32},
  year={2019}
}

@article{liu2024learning,
  title={Learning with norm constrained, over-parameterized, two-layer neural networks},
  author={Liu, Fanghui and Dadi, Leello and Cevher, Volkan},
  journal={Journal of Machine Learning Research},
  volume={25},
  number={138},
  pages={1--42},
  year={2024}
}

@article{yang2024nonparametric,
  title={Nonparametric regression using over-parameterized shallow ReLU neural networks},
  author={Yang, Yunfei and Zhou, Ding-Xuan},
  journal={Journal of Machine Learning Research},
  volume={25},
  number={165},
  pages={1--35},
  year={2024}
}

@article{neyshabur2015path,
  title={Path-{SGD}: Path-normalized optimization in deep neural networks},
  author={Neyshabur, Behnam and Salakhutdinov, Russ R and Srebro, Nati},
  journal={Advances in Neural Information Processing Systems},
  pages={2422--2430},
  volume={28},
  year={2015}
}

@article{parhi2021banach,
  title={Banach space representer theorems for neural networks and ridge splines},
  author={Parhi, Rahul and Nowak, Robert D},
  journal={Journal of Machine Learning Research},
  volume={22},
  number={43},
  pages={1--40},
  year={2021}
}

@article{taheri2024generalization,
  author  = {Hossein Taheri and Christos Thrampoulidis},
  title   = {Generalization and Stability of Interpolating Neural Networks with Minimal Width},
  journal = {Journal of Machine Learning Research},
  year    = {2024},
  volume  = {25},
  number  = {156},
  pages   = {1--41}
}

@inproceedings{taheri2025sharper,
  title={Sharper Guarantees for Learning Neural Network Classifiers with Gradient Methods},
  author={Taheri, Hossein and Thrampoulidis, Christos and Mazumdar, Arya},
  booktitle={International Conference on Learning Representations},
  year = {2025}
}

@inproceedings{ledent2021norm,
  title={Norm-based generalisation bounds for deep multi-class convolutional neural networks},
  author={Ledent, Antoine and Mustafa, Waleed and Lei, Yunwen and Kloft, Marius},
  booktitle={Proceedings of the AAAI Conference on Artificial Intelligence},
  volume={35},
  pages={8279--8287},
  year={2021}
}

@article{liu2020linearity,
  title={On the linearity of large non-linear models: when and why the tangent kernel is constant},
  author={Liu, Chaoyue and Zhu, Libin and Belkin, Misha},
  journal={Advances in Neural Information Processing Systems},
  volume={33},
  pages={15954--15964},
  year={2020}
}

@article{wang2025generalization,
  title={Generalization guarantees of gradient descent for shallow neural networks},
  author={Wang, Puyu and Lei, Yunwen and Wang, Di and Ying, Yiming and Zhou, Ding-Xuan},
  journal={Neural Computation},
  volume={37},
  number={2},
  pages={344--402},
  year={2025}
}

@inproceedings{arora2018stronger,
  title={Stronger generalization bounds for deep nets via a compression approach},
  author={Arora, Sanjeev and Ge, Rong and Neyshabur, Behnam and Zhang, Yi},
  booktitle={International Conference on Machine Learning},
  pages={254--263},
  year={2018},
  organization={PMLR}
}

@article{he2025information,
  title={Information-theoretic generalization bounds for deep neural networks},
  author={He, Haiyun and Goldfeld, Ziv},
  journal={IEEE Transactions on Information Theory},
    volume={71},
  number={8},
  pages={6227--6247},
  year={2025}
}

@article{bartlett2020benign,
  title={Benign overfitting in linear regression},
  author={Bartlett, Peter L and Long, Philip M and Lugosi, G{\'a}bor and Tsigler, Alexander},
  journal={Proceedings of the National Academy of Sciences},
  volume={117},
  number={48},
  pages={30063--30070},
  year={2020}
}

@article{nagarajan2019generalization,
  title={Generalization in deep networks: The role of distance from initialization},
  author={Nagarajan, Vaishnavh and Kolter, J Zico},
  journal={arXiv preprint arXiv:1901.01672},
  year={2019}
}

@article{chen2020generalized,
  title={A generalized neural tangent kernel analysis for two-layer neural networks},
  author={Chen, Zixiang and Cao, Yuan and Gu, Quanquan and Zhang, Tong},
  journal={Advances in Neural Information Processing Systems},
  volume={33},
  pages={13363--13373},
  year={2020}
}

@article{ding2025semi,
  title={Semi-supervised deep sobolev regression: Estimation and variable selection by requ neural network},
  author={Ding, Zhao and Duan, Chenguang and Jiao, Yuling and Yang, Jerry Zhijian},
  journal={IEEE Transactions on Information Theory},
  year={2025},
  publisher={IEEE}
}

@article{weinan2022barron,
  title={The Barron Space and the Flow-Induced Function Spaces for Neural Network Models},
  author={Weinan, E and Ma, Chao and Wu, Lei},
  journal={Constructive Approximation},
  volume={55},
  number={1},
  pages={369--406},
  year={2022}
}

@inproceedings{li2023transformers,
  title={Transformers as algorithms: Generalization and stability in in-context learning},
  author={Li, Yingcong and Ildiz, Muhammed Emrullah and Papailiopoulos, Dimitris and Oymak, Samet},
  booktitle={International Conference on Machine Learning},
  pages={19565--19594},
  year={2023}
}

@article{deora2024optimization,
  title={On the Optimization and Generalization of Multi-head Attention},
  author={Deora, Puneesh and Ghaderi, Rouzbeh and Taheri, Hossein and Thrampoulidis, Christos},
  journal={Transactions on Machine Learning Research},
  year={2024}
}

@inproceedings{daniely2024sample,
  title={On the sample complexity of two-layer networks: Lipschitz vs. element-wise lipschitz activation},
  author={Daniely, Amit and Granot, Elad},
  booktitle={International Conference on Algorithmic Learning Theory},
  pages={505--517},
  year={2024},
  organization={PMLR}
}

\end{document}